\theoremstyle{plain}
\newtheorem{theorem}{Theorem}[section]
\newtheorem{lemma}[theorem]{Lemma}
\theoremstyle{definition}
\newtheorem{assumption}[theorem]{Assumption}
\theoremstyle{remark}
\title{Feature Fitted Online Conformal Prediction for Deep Time Series Forecasting Model}
\author{%
  Xiannan Huang\\
  College of Transportation\\
  Tongji University\\
  4800 Cao’an Road, Shanghai, 201804, China \\
  \texttt{huang\_xn@tongji.edu.cn} 
  \And
  Shuhan Qiu\thanks{Corresponding Author}  \\
  College of Transportation\\
  Tongji University\\
  4800 Cao’an Road, Shanghai, 201804, China \\
  \texttt{2011374@tongji.edu.cn} 
}
\begin{document}

\maketitle

\begin{abstract}
Time series forecasting is critical for many applications, where deep learning-based point prediction models have demonstrated strong performance. However, in practical scenarios, there is also a need to quantify predictive uncertainty through online confidence intervals. Existing confidence interval modeling approaches building upon these deep point prediction models suffer from key limitations: they either require costly retraining, fail to fully leverage the representational strengths of deep models, or lack theoretical guarantees. To address these gaps, we propose a lightweight conformal prediction method that provides valid coverage and shorter interval lengths without retraining. Our approach leverages features extracted from pre-trained point prediction models to fit a residual predictor and construct confidence intervals, further enhanced by an adaptive coverage control mechanism. Theoretically, we prove that our method achieves asymptotic coverage convergence, with error bounds dependent on the feature quality of the underlying point prediction model. Experiments on 12 datasets demonstrate that our method delivers tighter confidence intervals while maintaining desired coverage rates. Code, model and dataset in \href{https://github.com/xiannanhuang/FFDCI}{Github}

\end{abstract}

\section{Introduction}
Time series forecasting is critical in various real-world applications, including weather prediction \cite{gruca2023weather4cast}, traffic management \cite{kadiyala2014vector}, and disease control \cite{morid2023time}. Driven by these pressing needs, recent years have seen a surge in deep learning-based models \cite{han2024softs,liuitransformer,yurevitalizing,zhou2021informer} that significantly enhance forecasting accuracy. However, it is well-known that time series forecasting tasks are inherently influenced by random factors, making theoretically perfect predictions unattainable. As a result, alongside point forecasts, the inclusion of confidence intervals has become a meaningful area of research. When models cannot offer precise predictions, confidence intervals allow users to account for uncertainty. 
For instance, in healthcare, uncertainty in predicting patient outcomes may prompt clinicians to adopt more conservative diagnostic strategies \cite{loftus2022uncertainty}.
Moreover, in traffic management, numerous models \cite{zhao2025research,chen2023target} rely on confidence intervals of future traffic volume as inputs, and use robust optimization techniques to find scheduling solutions that perform well across a range of potential scenarios.

Therefore, the requirements for confidence intervals are twofold: (1) \textbf{Validity}, \textbf{ensuring the empirical coverage asymptotically converges to the nominal level}, and (2) \textbf{Efficiency}, minimizing the interval width while maintaining validity.

Existing methods for generating confidence intervals for deep learning models, such as model ensembles \cite{lakshminarayanan2017simple} and quantile regression \cite{bailie2024quantile}, often rely on stringent conditions \textcolor{red}{(e.g., i.i.d. data or correct model specification)} to guarantee validity \cite{schweighofer2023quantification}. Additionally, in time series forecasting, distribution shifts frequently occur \cite{gibbs2024conformal}, which can render confidence intervals generated by models trained on historical data invalid when applied to future data. \textbf{Moreover, many existing methods \cite{lakshminarayanan2017simple,bailie2024quantile} including above mentioned ones require retraining, which can be computationally expensive.} This issue is further exacerbated with recent approaches that apply large language models to time series forecasting \cite{gruver2024large,ye2024survey}, where retraining can result in substantial computational burdens that may be unacceptable in practice.

\definecolor{customblue}{HTML}{4472C4}
As a result, our goal is to \textcolor{black}{\textit{provide valid and efficient confidence intervals for a given deep-learning-based time series forecasting model, even under distribution shift}}. To achieve this, we explore the use of conformal prediction \cite{shafer2008tutorial}, which constructs confidence intervals by treating errors in the validation set as proxies for errors in the test set. This approach has two key advantages: it does not require retraining, and its coverage is theoretically guaranteed \cite{lei2018distribution}. Therefore, this paper will focus on conformal prediction and its application in time series forecasting.

While conformal prediction has been applied to time series forecasting \cite{gibbs2024conformal,xu2021conformal}, existing work primarily targets classical statistical models (e.g., linear regression). There is a noticeable gap in research applying conformal prediction to more advanced deep learning-based forecasting models. 
\textcolor{black}{Directly applying these conformal prediction methods to deep learning models is feasible. However, such approaches \cite{gibbs2024conformal,xu2024conformal} do not leverage sample-specific information and may fail to account for their intrinsic characteristics. For instance, some samples are easier to predict and should have shorter confidence intervals, while others may require longer intervals. Besides, the features extracted by a deep learning model could potentially encapsulate such information \cite{feldman2023calibrated}, and more efficient confidence intervals might be achieved if these features are utilized appropriately. 
}

As a result, we propose to 
train a quantile prediction model on validation set, using features from the point prediction model as inputs and predicting quantiles of errors. 
In addition, to address the potential distribution shift in time series forecasting tasks, we considered using a method to dynamically adjust the confidence interval length, similar to \cite{gibbs2021adaptive}. Furthermore, we theoretically prove that the coverage of confidence intervals provided by our method can converge to the desired level as the deployment time increases, even in the presence of distribution shifts (Theo \ref{all cover}). Finally, we also theoretically demonstrate that the approach of using features to predict the confidence intervals offers certain advantages (Theo \ref{MSCE}).

Our key contributions are:

\textbf{Algorithmic Contributions.} We propose \textbf{F}eature \textbf{F}itted \textbf{D}ynamic \textbf{C}onfidence \textbf{I}nterval (\textbf{FFDCI}), a lightweight conformal prediction framework that leverages deep-learning-based features for adaptive confidence interval calibration. 

\textbf{Theoretical Contributions.} We prove that FFDCI guarantees asymptotic coverage convergence under mild assumptions. Additionally, we analyzed the following metric "Mean Absolute Coverage Error (MACE)”:
\begin{equation}
    MACE=\frac{\sum_{t=1}^T |P(y_t\in C_t)-(1-\alpha)|}{T}
\end{equation}
Where $C_t$ is the predicted confidence interval in time step $t$, $y_t$ is the true value in time $t$ and $1-\alpha$ is the target coverage we want to control (e.g., 90\%). This metric reflects the cumulative difference between the actual coverage rate and the target coverage rate that we want to control.
We prove that this metric can be controlled at a small level if the features extracted by the deep learning models are relevant to the quantiles of prediction errors.

\section{Method}
\subsection{Key idea}
Many works frame online conformal prediction as an online learning problem and employ online (sub)gradient descent methods \cite{zhang2024benefit,zhangdiscounted,Angelopoulos2024OnlineCP}. Specifically, consider a simple one-dimensional, single-step problem: at each time $t$, the target is $y_t$, the predicted value is $\hat{y}_t$, and the prediction interval is $[\hat{y}_t - q_t, \hat{y}_t + q_t]$. Define $s_t = |y_t - \hat{y}_t|$ and the $(1-\alpha)$-quantile loss as
\begin{equation}
l_t(q_t) = \mathbb{E}_{s_t}\left[(s_t - q_t)\left(\mathbb{I}(s_t > q_t) - \alpha\right)\right].
\label{eq:onlineloss}
\end{equation}
The online (sub)gradient descent update rule then becomes:
\begin{equation}
q_{t+1} = q_t - \gamma \nabla l_t(q_t) = q_t - \gamma \mathbb{E}_{s_t}\left(\mathbb{I}(s_t > q_t) - \alpha\right).
\label{eq:update}
\end{equation}

Importantly, the optimal value minimizing the per-step $(1-\alpha)$-quantile loss at each $t$ is the $(1-\alpha)$-quantile of $s_t$. Thus, this gradient-based algorithm inherently aims to approximate the true $(1-\alpha)$-quantile of $s_t$ at each step (donated as $q_t^*$). Existing works focus on improving the online gradient method through adaptive learning rates \cite{gibbs2024conformal} or smoother surrogate loss functions \cite{wu2025errorquantified}.

\textbf{Our approach diverges by simplifying the online optimization problem.} Suppose we have a predictor for the $(1-\alpha)$-quantile of $s_t$, denoted as $\hat{q}_t$. Instead of directly learning $q_t^*$, we aim to learn the \emph{residual} between $q_t^*$ and its prediction $\hat{q}_t$, i.e., $\Delta_t = q_t^* - \hat{q}_t$. Comparing these two optimization problems:
\begin{enumerate}
    \item Original problem: Optimal solution is $q_t^*$, the $(1-\alpha)$-quantile of $s_t$.
    \item Residual problem: Optimal solution is $\Delta_t$, the difference between $q_t^*$ and its prediction $\hat{q}_t$.
\end{enumerate}

The residual problem is inherently simpler \textbf{as long as} the predictor $\hat{q}_t$ is reasonably accurate. When $\hat{q}_t$ closely approximates $q_t^*$, the residual $\Delta_t$ exhibits smaller fluctuations than $q_t^*$ itself. This reduced variability makes the optimization easier.

To obtain $\hat{q}_t$, we leverage the features extracted by deep learning models, which can be regarded as informative representations of the input data. By combining these features with quantile regression techniques, we can estimate $q_t^*$ efficiently.
\subsection{Problem definition}
Suppose the data is of dimension $p$, and there is a model to predict the data in the next $d_1$ time steps. Therefore, in each step $t$, the prediction $\hat{y}_t$ is a matrix in $\mathbb{R}^{p\times s}$. And $y_{t,i,j}$ is the value of $i$ dimension, $j$ prediction step in time $t$. We can further assume that the point prediction model consists of two parts: the first part can be regarded as a feature extraction module, while the second part generates predictions for each dimension and each prediction step based on the features. Specifically, if the input is historical data from the past $l$ time steps, represented as an \(p \times l\) matrix, the feature extraction module outputs a matrix $z$ of shape $p\times d_2$, $d_2$ means the dimension of feature. Then second part of model uses feature to output prediction $\hat{y}$. For many models \cite{han2024softs,liuitransformer,yurevitalizing}, we can regard the last MLP layer as prediction head and the former layers as feature extractor.

If the algorithm is deployed for $T$ steps, we want to provide confidence interval $C_{t,i,j}$ of $\hat{y}_{t,i,j}$ such that:

\begin{equation}
    \frac{\sum_{t,i,j}{\mathbb{I}(y_{t,i,j}\in C_{t,i,j})}}{T\times s\times p} \to 1-\alpha\  \ when\ T \to \infty
    \label{average_cov_required}
\end{equation}
Where $\mathbb{I}(y_{t,i,j}\in C_{t,i,j})$ is indicator function, if $y_{t,i,j}\in C_{t,i,j}$, the value of it is 1, and it equals to 0 otherwise. The meaning of this equation is that, the proportion of true values covered by confidence intervals will converge to the target level as time progresses. Furthermore, based on previous studies \cite{lin2022conformal,batra2023conformal}, it is also required that the coverage rate for each dimension and each prediction step should converge to the target level, as the following equation shows:
\begin{equation}
    \frac{\sum_{t}{\mathbb{I}(y_{t,i,j}\in C_{t,i,j})}}{T} \to 1-\alpha\ when\ T \to \infty, for\ any\ i,j \label{ave_cov}
\end{equation}

It is easy to show that if (\ref{ave_cov}) holds, then (\ref{average_cov_required}) follows.

Besides, (\ref{ave_cov}) is not enough. For example, for specific $i$ and $j$, an algorithm that provides intervals with infinite length for 90\% cases and provides empty set for the other 10\% cases can also satisfy (\ref{ave_cov}) when $\alpha = 0.1$. But this algorithm is obviously inappropriate. Therefore we also want to control the following $MACE$ for any $i$ and $j$ to be small.
\begin{equation}
     MACE_{i,j}=\frac{\sum_{t=1}^T |P(y_{t,i,j}\in C_{t,i,j})-(1-\alpha)|}{T}
\end{equation}
This requirement means the probability of each $y_{t,i,j}$ being in the predicted confidence interval is close to $1-\alpha$.

\begin{figure}[H] 
    \centering
    \includegraphics[width=0.75\linewidth]{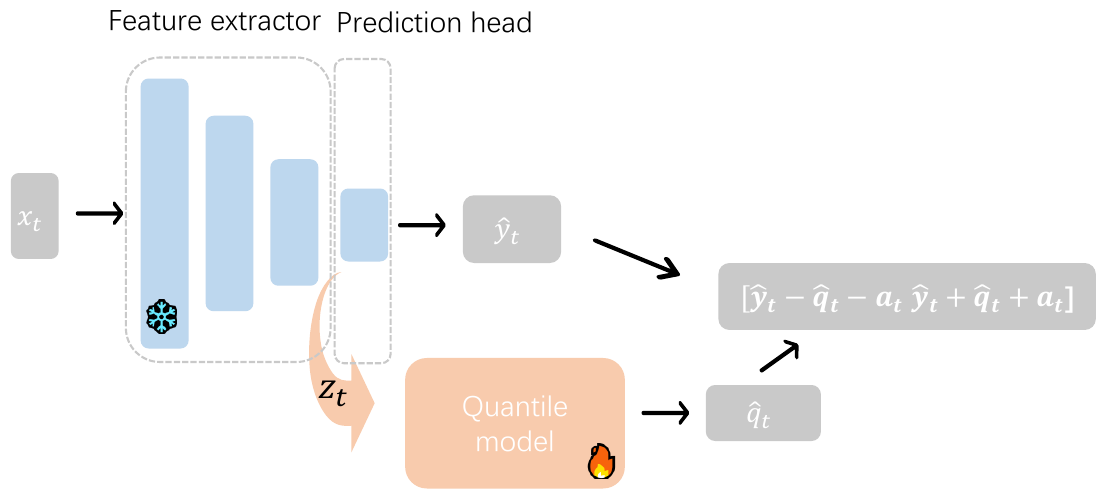}
    \caption{The work flow of our method}
    \label{fig:workflow}
\end{figure}
\subsection{Fit quantile model for errors}
The workflow of our method is illuminated in Figure \ref{fig:workflow}. First, we need to fit a model to predict the quantiles of errors using features. To accomplish that, the features and errors in the validation set are collected. Suppose the validation set contains \(n_1\) samples, and the prediction for each sample is a \(p \times d_1\) matrix. Then, the error for each sample can be expressed as: 
\begin{equation}
    s=|\hat{y}-y|
\end{equation}
and it is also a matrix.

Since most models \cite{wu2022timesnet,han2024softs,liuitransformer,yurevitalizing} use a fully connected layer to transform the feature matrix into the prediction matrix, the extracted feature can be considered crucial for the final prediction result, as it summarizes information from all previous time steps and dimensions. Therefore, it is reasonable to learn a function that maps the feature to the quantile of prediction error $s$. Specifically, we aim to fit a model, where the input is a feature matrix $z$ with shape \(p \times d_2\), and the output is a \(p \times d_1\) matrix. Each element of this matrix corresponds to the \(1-\alpha\) quantile of the prediction error for each dimension and prediction step. For example, an MLP can be use to complete this task.

Therefore, we can use the features and errors from the validation set to train the prediction model for error quantiles based on the pinball loss, which is:
\begin{equation}
    l_{\alpha}(s,\hat{q})=max((1-\alpha)(s-\hat{q}),\alpha(\hat{q}-s))
\end{equation}
Where $\hat{q}$ is predicted quantile and this loss is computed element-wise. 



\subsection{Update confidence interval during deployment}
Suppose the point prediction model outputs point estimator $\hat{y}_t$ and the quantile prediction model output the quantile of error as $\hat{q}_t$, where $\hat{y}_t,\hat{q}_t$ are both $p\times d_1$ matrix at each time step $t$. Then the most native way to obtain prediction interval is that:
\begin{equation}
C_{t,i,j}=[\hat{y}_{t,i,j}-\hat{q}_{t,i,j},\hat{y}_{t,i,j}+\hat{q}_{t,i,j}]
\label{eq:ab}
\end{equation}
However, when the quantile prediction model can not output the true quantile of error, which often occurs when there is a distribution shift or the quantile model is not correct, the above interval will be invalid. Therefore, we refer to ACI \cite{gibbs2021adaptive} and propose to dynamically adjust the interval as:
\begin{equation}C_{t,i,j}=[\hat{y}_{t,i,j}-\hat{q}_{t,i,j}-a_{t,i,j},\hat{y}_{t,i,j}+\hat{q}_{t,i,j}+a_{t,i,j}]
\end{equation}
where $a_{t,i,j}$ can be determined recurrently, as the following equation shows:
\begin{equation}
    a_{t+1,i,j}=a_{t,i,j}+\gamma (1-\mathbb{I}(y_{t,i,j}\in C_{t,i,j})-\alpha)
    \label{update a}
\end{equation}
Where $\gamma$ is a constant greater than 0. The meaning of this equation is that, if the output interval in former time step can not cover the true value, then the interval in the next step will be elongated, and if the formal interval covers the true value, the interval in the next step will be shortened. And it can regarded as lagged-online gradient decent for $1-\alpha$-quantile loss:
$$
l_t(a_t) = \mathbb{E}_{s_t}\left[(s_{t-j} - \hat{q}_{t-j}-a_{t-j})\left(\mathbb{I}(s_{t-j} > q_{t-j}+a_{t-j}) - \alpha\right)\right].
$$
In the original online conformal prediction algorithm \cite{gibbs2021adaptive,gibbs2024conformal}, the confidence interval is updated according to $\mathbb{I}(y_{t,i,j}\in C_{t,i,j})$. But in our problem, this indicator can not be observed until $t+j$ step, as a result, only $\mathbb{I}(y_{t,i,j}\in C_{t-j,i,j})$ can be used to obtain $a_{t+1,i,j}$. Besides, we set $a_{t+1,i,j}=0$ when $t<j+1$, and if $\hat{q}_{t,i,j}+a_{t,i,j}\leq0$, $C_{t,i,j}$ is empty set. The algorithm can be summarized as Algorithm \ref{alg:ci_prediction}.
\begin{algorithm}[ht]
\caption{Feature Fitted Dynamic Confidence Interval (FFDCI)}
\label{alg:ci_prediction}
\begin{algorithmic}
\REQUIRE Point prediction model $f$, validation set with $n_1$ samples, target confidence level $1-\alpha$, adjustment rate $\gamma > 0$.
\ENSURE Confidence intervals $C_{t,i,j}$ for future $T$ time steps.

\STATE \textbf{Fit Quantile Prediction Model:}
\STATE Extract feature $z \in \mathbb{R}^{p \times d_1}$ for each sample in the validation set.
\STATE Compute prediction error $s = |\hat{y} - y|$ for each sample in the validation set.
\STATE Fit a quantile prediction model for errors using pinball loss.

\STATE \textbf{Deploy Algorithm:}
\STATE Initialize $a_{t,i,j} = 0$ for $t < j+1$.
\FOR{$t = 1$ \textbf{to} $T$}
    \STATE Obtain point prediction $\hat{y}_t$ and feature $z_t$ using $f$.
    \STATE Compute error quantile $\hat{q}_t$ using quantile prediction model and $z_t$.
    \FOR{each $i \in \{1, \dots, p\}$ and $j \in \{1, \dots, d_1\}$}
        \STATE Output confidence interval:
        \STATE $C_{t,i,j} = [\hat{y}_{t,i,j} - \hat{q}_{t,i,j} - a_{t,i,j}, \hat{y}_{t,i,j} + \hat{q}_{t,i,j} + a_{t,i,j}].$
        \IF{$t \geq j+1$}
            \STATE Update adjustment term:
            \STATE $a_{t+1,i,j} = a_{t,i,j} + \gamma \big(\mathbb{I}(y_{t,i,j}\in C_{t,i,j}) - \alpha\big).$
        \ENDIF
    \ENDFOR
\ENDFOR
\end{algorithmic}
\end{algorithm}
\subsection{Benefits of the proposed approach}
\textbf{Against model based methods}:
There are some methods to provide confidence interval based on probabilistic assumption \cite{salinas2020deepar}, model ensembles \cite{lakshminarayanan2017simple}, Monte-Carlo Dropout \cite{Gal2015DropoutAA} or quantile regression \cite{bailie2024quantile}. Compared with them, there are two main benefits of our method. The first is our method can be used to any point prediction model without the need of retraining. The second is that the coverage rate of our method is guaranteed, which will be elaborated in the following section.

\textbf{Against other time series conformal prediction methods}: Compared with these methods, our \textbf{FFDCI} integrates the features extracted by deep learning models as additional information. Therefore, our method can take advantage of the powerful representational ability of deep learning models and provide better confidence intervals.
\section{Theoretical Results}
Similar to most theoretical guarantees in conformal prediction, our theoretical results below just make a few week assumptions about the data distribution.
\subsection{Bound of average coverage}
\begin{assumption}
    The errors and predicted quantiles of errors should be bounded. We assume $|y_{t,i,j}-\hat{y}_{t,i,j}|\leq M$ and $\hat{q}_{t,i,j}\leq M$ for any $t,i,j$.
    \label{bounded assumption}
\end{assumption}
This assumption is common in researches about online conformal prediction, such as  \cite{gibbs2021adaptive,gibbs2024conformal}.
\begin{theorem}[Coverage guarantee]
   For any dimension $i$ and prediction step $j$, if $C_{t,i,j}$ is the confidence interval provided by our algorithm, then we have:
    \begin{equation}
        |\frac{\sum_{t=1}^T{\mathbb{I}(y_{t,i,j}\in C_{t,i,j})}}{T}-(1-\alpha)|\leq  2(\frac{M+\gamma}{T\gamma}+\frac{j+1}{T})
    \end{equation}
    \label{all cover}
\end{theorem}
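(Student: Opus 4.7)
The plan is to adapt the standard ACI telescoping argument to the $j$-step lag in the observed coverage. Fix $(i,j)$ and drop these subscripts, writing $a_t := a_{t,i,j}$, $\hat{q}_t := \hat{q}_{t,i,j}$, and $e_t := \mathbb{I}(y_{t,i,j} \in C_{t,i,j})$. The update rule reduces to $a_{t+1} = a_t + \gamma(1 - e_{t-j} - \alpha)$ for $t \ge j+1$, with $a_t = 0$ for $t \le j+1$, and every single-step increment satisfies $|a_{t+1} - a_t| \le \gamma\max(\alpha, 1-\alpha) \le \gamma$. The first ingredient is a saturation property derived from Assumption \ref{bounded assumption} (together with the natural convention $\hat{q}_t \ge 0$ for a quantile of $|y-\hat y|$): if $a_t \ge M$ then $\hat{q}_t + a_t \ge M$, so $|y_t - \hat y_t| \le M$ forces $e_t = 1$; if $a_t \le -M$ then $\hat{q}_t + a_t \le 0$ and the algorithm's empty-set convention forces $e_t = 0$.

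The second and most delicate step, and the main obstacle of the proof, is to show $|a_t| \le M + (j+1)\gamma$ for all $t$ by induction. Relative to unlagged ACI, the subtlety is that at time $t+1$ the triggering indicator $e_{t-j}$ reflects the state of $a$ at $t-j$, not at $t$, so $a$ has $j$ steps of ``room to drift'' before the saturation feedback kicks in. I split into two cases: (i) if $a_{t-j} \ge M$, saturation gives $e_{t-j} = 1$ and $a_{t+1} = a_t - \gamma\alpha$ strictly decreases, so the inductive hypothesis propagates; (ii) if $a_{t-j} < M$, the per-step drift bound yields $a_t \le a_{t-j} + j\gamma \le M + j\gamma$, whence $a_{t+1} \le M + (j+1)\gamma$. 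A symmetric argument produces the matching lower bound $a_t \ge -M - (j+1)\gamma$. The extra $(j+1)\gamma$ slack over the classical ACI constant $M+\gamma$ is precisely the cost of the $j$-step lag.

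The remaining steps are routine. Telescoping the update from $a_{j+1}=0$ gives $a_{T+1} = \gamma \sum_{s=1}^{T-j}(1 - e_s - \alpha)$, so $\bigl|\sum_{s=1}^{T-j}(1 - e_s - \alpha)\bigr| \le |a_{T+1}|/\gamma \le M/\gamma + j + 1$. The missing last $j$ summands $1 - e_t - \alpha$ with $t \in \{T-j+1,\dots,T\}$ each have absolute value at most $\max(\alpha,1-\alpha) \le 1$, contributing at most $j$. Adding these contributions, dividing by $T$, and observing that the resulting bound $\tfrac{M + (2j+1)\gamma}{T\gamma}$ is dominated by $2\bigl(\tfrac{M+\gamma}{T\gamma} + \tfrac{j+1}{T}\bigr)$ completes the proof.
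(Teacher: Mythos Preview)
Your proof is correct and follows the same three-stage strategy as the paper (saturation of the coverage indicator once $|a_t|$ passes $M$; an inductive bound on $|a_t|$; then telescoping), but your version of the key boundedness step differs from the paper's and is in fact the more careful of the two. The paper's Lemma, proved just before Theorem~\ref{all cover}, claims the lag-independent bound $|a_{t,i,j}|\le M+\gamma$; its inductive argument splits on the value of $a_{t-j}$ but in Case~2 silently passes from $a_{t-j}<M$ to $a_t\le M$ (writing $a_t+\gamma(1-\alpha)\le M+\gamma(1-\alpha)$), which does not follow because $a$ can drift by up to $j\gamma$ in the $j$ intervening updates. Your bound $|a_t|\le M+(j+1)\gamma$ and your case~(ii), which explicitly uses the $j$-step drift estimate $a_t\le a_{t-j}+j\gamma$, are precisely what closes this gap. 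Since the stated right-hand side $2\bigl(\tfrac{M+\gamma}{T\gamma}+\tfrac{j+1}{T}\bigr)$ is loose enough to absorb the extra $(j+1)\gamma$, both routes end inside the same inequality; your intermediate bound $\tfrac{M+(2j+1)\gamma}{T\gamma}$ is actually sharper than the paper's.
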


This theorem implies that as the deployment time increases, the proportion of true values covered by the confidence intervals provided by our algorithm will converge to the desired coverage rate for any prediction step and any dimension. As a result, requirement (\ref{ave_cov}) could be met.

This theorem is similar to coverage guarantee provided by ACI \cite{gibbs2021adaptive}. But because it is not able to update $a_{t,i,j}$ using $\mathbb{I}(y_{t,i,j}\in C_{t,i,j})$, there is an error term related to $j$ in addition.
\subsection{Bound of $MACE$}
As we previously mentioned, we propose a simpler online learning problem where we only need to learn the residual between the predicted quantile and the true quantile. Therefore, when this residual is smaller (indicating better fitting performance), the problem should be easier to learn - meaning that we can more accurately approximate the true desired quantile at each step. Consequently, we can achieve smaller $MACE$ values. This intuition suggests that $MACE$ should be related to the goodness-of-fit of our quantile model. Our following theoretical result will formally establish this relationship:

\begin{assumption}
    $P(y_{t,i,j}\in C_{t,i,j})$ can be regard as a function of $a_{t,i,j}$, and we assume it is a Lipschitz function with Lipschitz constant $L$.
\end{assumption}

\begin{theorem}[Bound of $MACE$]
    Under the above assumptions, for any dimension $i$ and prediction step $j$, if we use $q_{t,i,j}^*$ to express to true $1-\alpha$ quantile of $|\hat{y}_{t,i,j}-y_{t,i,j}|$, then there exists a best learning rate $\gamma$ such that:
    \begin{equation}
         MACE_{i,j}\leq c\sqrt{\sigma(q_{i,j}^*-\hat{q}_{i,j})+\frac{M(j+1)}{T}}
    \end{equation}
    Where $\sigma(q_{i,j}-\hat{q}_{i,j})$ means the standard deviation of $q_{i,j}^*-\hat{q}_{i,j}$, defined as:
    \begin{equation}
        \sigma(q_{i,j}^*-\hat{q}_{i,j})=\sqrt{\frac{\sum_t(q_{t,i,j}^*-\hat{q}_{t,i,j})^2}{T}} \label{stander_d}
    \end{equation}
    \label{MSCE}
  \end{theorem}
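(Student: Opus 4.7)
The plan is to reduce $MACE_{i,j}$ to a bound on the average squared tracking error $\frac{1}{T}\sum_t R_t^2$, where $R_t := a_{t,i,j}-\Delta_t$ and $\Delta_t := q^*_{t,i,j}-\hat{q}_{t,i,j}$, and then to control this quantity via an OGD-style Lyapunov argument on the ACI-like recursion for $a_t$. Setting $p_t(a) := P(y_{t,i,j}\in[\hat{y}_{t,i,j}-\hat{q}_{t,i,j}-a,\hat{y}_{t,i,j}+\hat{q}_{t,i,j}+a])$, the defining property $p_t(\Delta_t)=1-\alpha$ combined with the Lipschitz hypothesis gives $|p_t(a_t)-(1-\alpha)| \leq L|R_t|$. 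Summing and applying Cauchy--Schwarz then yields
$$MACE_{i,j} \leq \frac{L}{T}\sum_t |R_t| \leq L\sqrt{\frac{1}{T}\sum_t R_t^2},$$
so it suffices to bound the right-hand side by $c'\sqrt{\sigma(q^*_{i,j}-\hat{q}_{i,j})+M(j+1)/T}$.

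I would analyze the Lyapunov function $V_t := R_t^2$. Writing the algorithm's update as $a_{t+1}=a_t+\gamma(\mathbb{I}(y_{t-j}\notin C_{t-j})-\alpha)$, expanding $V_{t+1}$, and taking conditional expectations produces a one-step identity whose leading drift term is $-2\gamma\, \mathbb{E}[f_t(a_t)R_t]$ with $f_t(a_t):=p_t(a_t)-(1-\alpha)$. Two consequences of the Lipschitz hypothesis are crucial: monotonicity of $p_t$ in $a$ implies $f_t(a_t)R_t \geq 0$, and combining this with $|f_t(a_t)|\leq L|R_t|$ yields the key inequality $f_t(a_t)^2 \leq L\, f_t(a_t)R_t$. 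Telescoping over $t$, using the boundedness $|a_t|\leq M+\gamma$ that follows from Assumption \ref{bounded assumption} to control the boundary terms $V_1$ and $V_{T+1}$, handling the cross terms involving the increments $\Delta_{t+1}-\Delta_t$ via Young's inequality, and then tuning $\gamma$ to balance the $1/\gamma$ and $\gamma$ contributions produces a bound of the form $\frac{1}{T}\sum_t \mathbb{E}[R_t^2] \lesssim \sigma(q^*_{i,j}-\hat{q}_{i,j}) + M(j+1)/T$, which after a square root matches the claimed rate.

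The main obstacle is the $(j+1)$-step lag in the update rule: the indicator available at time $t$ corresponds to $C_{t-j}$ rather than $C_t$, so the one-step drift identity standard in ACI-style analyses no longer holds exactly. I would handle this by comparing the lagged update with its ideal counterpart and observing that consecutive $a_t$'s differ by at most $\gamma$, so the per-step discrepancy is $O(\gamma(j+1))$; after telescoping and normalizing by $T$ this contributes the additive $O(M(j+1)/T)$ term inside the square root, matching the second summand in the stated bound. A secondary subtlety is that the theorem involves $\sigma(q^*_{i,j}-\hat{q}_{i,j})$ itself rather than a path-length quantity such as $\sum_t|\Delta_{t+1}-\Delta_t|$; the identity $f_t^2 \leq L f_t R_t$ is precisely the lever that lets the cross terms in the Lyapunov telescoping be absorbed into a quantity driven by $\sigma(q^*_{i,j}-\hat{q}_{i,j})$ rather than into a dynamic-regret path length.
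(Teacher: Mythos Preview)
Your route is genuinely different from the paper's. The paper works through the pinball (quantile) loss rather than directly through $R_t^2$: Step~1 is an OGD dynamic-regret bound on $\ell_\alpha(\beta_t,a)$, giving $\sum_t[\ell(a_t)-\ell(a^*_t)]\lesssim M(j{+}1)+\sqrt{T\cdot P}$ after optimising $\gamma$, where $P=\sum_t|a^*_{t+1}-a^*_t|$ is the comparator path length (which the paper then controls by $cT\sigma$ via triangle inequality and Cauchy--Schwarz); Step~2 invokes a \emph{quadratic-growth} property of the pinball loss at its minimiser, $E[\ell(a)-\ell(a^*)]\ge\tfrac{p_1}{2}(a-a^*)^2$ with $p_1$ a lower bound on the density of the score $\beta$, to convert the regret into a bound on $\sum_t(a_t-a^*_t)^2$; Step~3 is then your Cauchy--Schwarz plus Lipschitz reduction. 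So the paper passes through the pinball loss and an (implicit) density \emph{lower} bound, whereas you work directly with the coverage map $p_t$.

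There is, however, a gap in your plan at exactly the point corresponding to the paper's Step~2. The inequality $f_t^2\le L\,f_tR_t$ that you derive from the Lipschitz hypothesis goes the wrong way for bounding $\sum_t R_t^2$: plugged into the telescoped Lyapunov identity it yields an upper bound on $\sum_t f_t^2$, not on $\sum_t R_t^2$. To extract $\sum_t R_t^2$ from the drift term $-2\gamma f_tR_t$ you need the \emph{reverse} comparison $f_tR_t\ge\mu R_t^2$, i.e.\ a lower bound on the density of the score---precisely the paper's extra ingredient $p_1$ in Step~2. Without it, neither Young's inequality on the $R_t(\Delta_{t+1}-\Delta_t)$ cross terms nor the boundedness $|R_t|\le M+\gamma$ gives the contraction needed to reach $\frac{1}{T}\sum_t R_t^2\lesssim\sigma$; the balance of $1/\gamma$ and $\gamma$ terms then lands you at a weaker $\sigma^{1/4}$-type rate. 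If you add the density lower bound your direct Lyapunov route does go through, and is arguably cleaner than the pinball-regret detour, but as written the sketch omits exactly the lever that the paper supplies in its Step~2.
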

  It should be noted that $\sigma(q_{i,j}^*-\hat{q}_{i,j})$ could be regarded as '$RMSE$' (root mean square error) of quantile regression. This theorem states that as the deployment time increases, the $MACE$ will converge to the goodness-of-fit of the quantile regression model. In another word, If the model is deployed for a long period and the quantile regression model performs well, it can be guaranteed that the average coverage error will be relatively small with suitable $\gamma$.
  
  Besides, in traditional conformal prediction method, the feature of data is not used and $\hat{q}_{i,j}$ is regarded as a constant, then the upper bound of $MACE$ could be related to $\sigma(q_{i,j}^*)$. Therefore, as long as the feature is related to $q_{i,j}^*$ and the quantile prediction model can capture this relationship in some extend, our algorithm could be regarded as an improvement.
  
  
  The proofs about these two theorems can be found in Appendix \ref{section:proof}.

\section{Experiments}

\subsection{Protocols}
\textbf{Datasets} We conducted experiments on 12 datasets, which have been widely used in previous time series forecasting researches \cite{wu2022timesnet,liuitransformer,han2024softs}. The schemes for splitting the datasets into training, validation, and test sets follow the approach described in the iTransformer paper \cite{liuitransformer}. 

\textbf{Base models} To validate the applicability of our algorithm across different time series forecasting models, we selected three recently proposed models: iTransformer \cite{liuitransformer}, Leddam \cite{yurevitalizing}, SOFTS \cite{han2024softs}, as the base point forecasting models. Following the setup of prior work on time series forecasting, the base models were trained to predict the next 96 time steps based on data from the past 96 time steps.

\textbf{Baselines} For baseline methods, we considered ECI \cite{wu2025errorquantified} (applied independently to each prediction step and dimension), along with two specialized methods for multivariate time series conformal prediction: TQA-E (Temporal Quantile Adjustment-Error) \cite{lin2022conformal} and LPCI ( Longitudinal Predictive Conformal Inference )\cite{batra2023conformal}. Additionally, CF-SST (Conformalized Time Series
 with Semantic Features ) \cite{chen2024conformalized}, a prior work performing feature-level conformal prediction for time series, was included as a baseline.

We also conducted experiments with other conformal prediction algorithms including PID \cite{angelopoulos2023conformal}, QCP (Quantile Conformal Prediction) \cite{romano2019conformalized}, CP (Classical Conformal Prediction), ACI (Adaptive Conformal Inference) \cite{gibbs2021adaptive}, FCP (Feature Conformal Prediction) \cite{teng2023predictive}, and SPCI (Sequential Predictive Conformal Inference) \cite{xu2023sequential}. Due to space constraints, these results are omitted from the main text but provided in the Appendix \ref{section:full result}. While these methods show varied performance, their exclusion from the main analysis does not affect our core conclusions.

Details and discussions about baseline methods can be found in Appendix \ref{section:baseline}. Besides, we also conduct experiments using quantile regression and MC dropout, the results are in Appendix \ref{section:MC}.

\textbf{Setup} For our own method, MLP was considered as quantile prediction model. The learning rate \(\gamma\) is a critical parameter. In the main experiments, we fixed \(\gamma\) at 0.002. We set two hidden layers in MLP model with hidden size 512 and 256. We split the validation set randomly by 8:2, and use 80\% of data to train the quantile prediction model for 100 epochs. If loss in the remaining 20\% does not decrease for 5 epochs, we will stop training. Adam was used as optimization with learning rate 0.001. We set $\alpha$ as 0.1, which means we want to obtain confidence interval with 90\% coverage. 

\textbf{Evaluation} We used four metrics to evaluate the performance of confidence intervals: coverage ($Cov$), average interval length ($l$), coverage on the worst-performing dimension ($min\_d$), and coverage on the worst-performing prediction step ($min\_t$). These four metrics are defined as:
$$
Cov=\frac{1}{T\times p\times d_1}\sum_{t=1}^T\sum_{i=1}^p\sum_{j=1}^{d_1}\mathbb{I}(y_{t,i,j}\in C_{t,i,j})
$$
$$
l=\frac{1}{T\times p\times d_1}\sum_{t=1}^T\sum_{i=1}^p\sum_{j=1}^{d_1}l(C_{t,i,j})
$$
$$
Min\_d=min_i\sum_{t=1}^T\sum_{j=1}^{d_1}\frac{1}{T\times d_1}\mathbb{I}(y_{t,i,j}\in C_{t,i,j})
$$
$$
Min\_t=min_j\sum_{t=1}^T\sum_{i=1}^p\frac{1}{T\times p}\mathbb{I}(y_{t,i,j}\in C_{t,i,j})
$$
\subsection{Results}
\renewcommand{\arraystretch}{1.5}
\setlength{\tabcolsep}{1pt}
\begin{table}[!h]
\caption{Forecast results}
\label{tab:result1}
\fontsize{4.8}{5}\selectfont 
\centering
\begin{tabular}{@{}c|cccc|cccc|cccc|cccc|cccc|c@{}}
\toprule
Method      & \multicolumn{4}{c|}{ECI}                                              & \multicolumn{4}{c|}{TQA-E}                                   & \multicolumn{4}{c|}{LPCI}                                                          & \multicolumn{4}{c|}{CF-SST}                                                        & \multicolumn{4}{c|}{DDFCI}                                            & \multicolumn{1}{c}{Improve} \\ \midrule
Dataset  & $Cov$    & $l$                                     &$ Min\_d $                      & $Min\_t$ & $Cov$    & $l$                                     &$ Min\_d $                      & $Min\_t$& $Cov$    & $l$                                     &$ Min\_d $                      & $Min\_t$& $Cov$    & $l$                                     &$ Min\_d $                      & $Min\_t$& $Cov$    & $l$                                     &$ Min\_d $                      & $Min\_t$ & $l$             \\\midrule
weather     & 89.8\% & {\color[HTML]{0070C0} 1.147}          & 87.3\%   & 89.4\%   & 85.8\% & 1.108                        & 82.3\%   & 82.8\%   & 84.7\% & 0.918                        & \cellcolor[HTML]{E2EFDA}60.8\% & 76.4\%   & 90.7\% & 1.233                        & 83.0\%                         & 86.3\%   & 89.3\% & {\color[HTML]{FF0000} \textbf{1.062}} & 87.6\%   & 88.9\%   & 7.4\%                \\
traffic     & 88.4\% & 1.094                                 & 77.7\%   & 87.8\%   & 88.5\% & 1.146                        & 77.7\%   & 87.9\%   & 88.4\% & {\color[HTML]{0070C0} 1.088} & \cellcolor[HTML]{E2EFDA}64.4\% & 87.5\%   & 86.7\% & 1.094                        & \cellcolor[HTML]{E2EFDA}68.9\% & 84.0\%   & 88.6\% & {\color[HTML]{FF0000} \textbf{1.047}} & 78.9\%   & 87.6\%   & 3.7\%                \\
electriticy & 88.4\% & 1.094                                 & 82.9\%   & 88.1\%   & 89.0\% & 1.170                        & 84.0\%   & 88.2\%   & 89.0\% & {\color[HTML]{0070C0} 1.061} & \cellcolor[HTML]{C6E0B4}59.1\% & 86.5\%   & 87.1\% & 1.061                        & 80.7\%                         & 82.5\%   & 89.3\% & {\color[HTML]{FF0000} \textbf{1.054}} & 85.2\%   & 88.8\%   & 0.6\%                \\
solar       & 88.4\% & 1.333                                 & 86.9\%   & 87.4\%   & 89.3\% & 1.684                        & 88.9\%   & 88.9\%   & 82.3\% & 1.106                        & 74.5\%                         & 77.9\%   & 88.1\% & {\color[HTML]{0070C0} 1.322} & 85.4\%                         & 82.4\%   & 89.8\% & {\color[HTML]{FF0000} \textbf{1.054}} & 89.4\%   & 89.6\%   & 20.3\%               \\
ETTh1       & 92.3\% & 2.125                                 & 86.7\%   & 90.3\%   & 89.4\% & {\color[HTML]{0070C0} 1.891} & 87.0\%   & 88.5\%   & 87.8\% & 1.684                        & 79.4\%                         & 85.8\%   & 92.4\% & 1.977                        & 86.2\%                         & 89.8\%   & 89.2\% & {\color[HTML]{FF0000} \textbf{1.683}} & 87.5\%   & 87.5\%   & 11.0\%               \\
ETTh2       & 89.6\% & {\color[HTML]{FF0000} \textbf{1.273}} & 85.2\%   & 88.6\%   & 87.4\% & 1.377                        & 85.1\%   & 85.6\%   & 85.1\% & 1.144                        & \cellcolor[HTML]{E2EFDA}68.0\% & 82.4\%   & 89.4\% & {\color[HTML]{0070C0} 1.456} & 82.4\%                         & 86.3\%   & 87.3\% & 1.194                                 & 83.8\%   & 86.0\%   & 6.2\%                    \\
ETTm1       & 90.6\% & {\color[HTML]{0070C0} 1.696}          & 89.3\%   & 90.3\%   & 88.2\% & 1.840                        & 85.7\%   & 87.1\%   & 86.0\% & 1.508                        & 76.0\%                         & 84.0\%   & 90.2\% & 1.757                        & 84.8\%                         & 87.5\%   & 89.4\% & {\color[HTML]{FF0000} \textbf{1.623}} & 88.2\%   & 89.1\%   & 4.3\%                \\
ETTm2       & 88.1\% & {\color[HTML]{0070C0} 1.192}          & 86.8\%   & 87.5\%   & 87.5\% & 1.161                        & 86.2\%   & 85.6\%   & 83.5\% & 0.972                        & \cellcolor[HTML]{E2EFDA}65.6\% & 80.7\%   & 87.4\% & 1.070                        & 73.2\%                         & 82.6\%   & 88.9\% & {\color[HTML]{FF0000} \textbf{1.114}} & 87.8\%   & 88.3\%   & 7.0\%                \\
PEMS03      & 87.7\% & {\color[HTML]{0070C0} 1.221}          & 80.9\%   & 87.2\%   & 88.8\% & 1.272                        & 81.7\%   & 88.3\%   & 86.9\% & 1.255                        & \cellcolor[HTML]{C6E0B4}59.0\% & 81.2\%   & 84.8\% & 1.199                        & \cellcolor[HTML]{E2EFDA}69.0\% & 74.5\%   & 88.7\% & {\color[HTML]{FF0000} \textbf{1.179}} & 82.0\%   & 88.0\%   & 3.5\%                \\
PEMS04      & 88.7\% & {\color[HTML]{0070C0} 1.299}          & 81.8\%   & 88.3\%   & 89.7\% & 1.338                        & 82.7\%   & 89.4\%   & 88.0\% & 1.329                        & \cellcolor[HTML]{C6E0B4}56.3\% & 82.9\%   & 84.8\% & 1.466                        & 73.1\%                         & 79.8\%   & 88.9\% & {\color[HTML]{FF0000} \textbf{1.196}} & 82.0\%   & 88.3\%   & 8.6\%                \\
PEMS07      & 89.1\% & {\color[HTML]{0070C0} 1.136}          & 77.7\%   & 88.9\%   & 89.9\% & 1.169                        & 78.0\%   & 89.5\%   & 89.1\% & 1.201                        & \cellcolor[HTML]{A9D08E}43.8\% & 85.5\%   & 90.0\% & 1.228                        & 75.3\%                         & 85.9\%   & 89.6\% & {\color[HTML]{FF0000} \textbf{1.075}} & 80.5\%   & 89.4\%   & 5.7\%                \\
PEMS08      & 88.7\% & {\color[HTML]{0070C0} 1.236}          & 76.8\%   & 88.3\%   & 89.7\% & 1.288                        & 77.5\%   & 89.4\%   & 88.6\% & 1.341                        & \cellcolor[HTML]{C6E0B4}57.3\% & 82.9\%   & 87.5\% & 1.290                        & 73.7\%                         & 83.9\%   & 88.7\% & {\color[HTML]{FF0000} \textbf{1.143}} & 77.6\%   & 88.0\%   & 8.2\%                \\ \bottomrule
\end{tabular}
\end{table}
In Table \ref{tab:result1}, we report the main experimental results. The values of the evaluation metric represent the average performance across the three base point prediction models. The full forecasting results are listed in Appendix \ref{section:full result}. Methods with an overall coverage rate exceeding 88\% are considered valid. Among the valid methods, the shortest length is highlighted in \textcolor{red}{red}, while the second shortest is highlighted in \textcolor{customblue}{blue}. 
In addition, we use green background colors to indicate coverage rates below 70\%, where darker shades represent lower coverage levels: light green for rates below 70\%, medium green for below 60\%, and dark green for below 50\%. The last column of the table shows the percentage reduction in confidence interval length achieved by our method compared to the best-performing baseline method.

It can be concluded that our method achieves valid coverage in most cases while maintaining relatively shorter interval lengths. For example, on the weather dataset, our method reduces confidence interval length by 7.4\%. For the solar dataset, the reduction exceeds 20\%. As for electricity dataset, where the reduction is only 0.6\%, our method significantly outperforms the best baseline (LPCI) in terms of worst-dimension coverage($Min\_d$). Specifically, LPCI achieves only 59\% coverage in the worst dimension, while our method reaches 85\%.

The only exception occurs on the ETTh2 dataset, where our method slightly misses the target coverage (87.3\% vs. 90\%). However, as shown in the sensitivity analysis of learning rates in Appendix \ref{section:sensitive of gamma}, adjusting the learning rate allows our method to achieve the target coverage on ETTh2 without significantly increasing the interval length.

Regarding worst-dimension coverage, our method consistently delivers high results: most datasets exceed 80\%, with only two exceptions at 78\% and 77\%. These results are superior to other methods.

Additionally, we would like to present experimental results related to our proposed $MACE$ metric. However, since $MACE$ is defined as the difference between the actual coverage rate of our confidence intervals and the target 90\% coverage rate at each timestep — while the true coverage rate of the intervals we produce at each timestep remains inherently unknown — this metric cannot be directly measured. However, we can approximate $MACE$ using some local coverage metrics. Detailed experimental results supporting this analysis are provided in the Appendix \ref{section:MACE}.
\subsection{Ablation Experiments}
\renewcommand{\arraystretch}{2}
\setlength{\tabcolsep}{1.5pt}
\begin{table}[!h]
\fontsize{6}{5}\selectfont 
\centering
\caption{Results of ablation experiments}
\label{tab:a_result1}
\begin{tabular}{@{}c|cccc|ccccc|ccccc@{}}
\toprule
Method      & \multicolumn{4}{c|}{Ori}              & \multicolumn{5}{c|}{w/o update}                                         & \multicolumn{5}{c}{w/o feature}                \\ \midrule
dataset     & $Cov$    & $l$                                     &$ Min\_d $                      & $Min\_t$& $Cov$    & $l$                                     &$ Min\_d $                      & $Min\_t$& $C\_loss$ &  $Cov$    & $l$                                     &$ Min\_d $                      & $Min\_t$& $l\_loss$ \\ \midrule
weather     & 89.3\% & 1.062 & 87.6\%   & 88.9\%   & 79.5\% & 0.804 & \cellcolor[HTML]{E2EFDA}65.5\% & 75.9\%   & 9.8\%     & 90.5\% & 1.137 & 89.1\%   & 90.2\%   & 7.0\%   \\
traffic     & 88.6\% & 1.047 & 78.9\%   & 87.6\%   & 84.6\% & 0.959 & \cellcolor[HTML]{C6E0B4}56.1\% & 79.9\%   & 4.0\%     & 89.3\% & 1.150 & 82.6\%   & 88.9\%   & 9.8\%   \\
electricity & 89.3\% & 1.054 & 85.2\%   & 88.8\%   & 86.9\% & 0.988 & \cellcolor[HTML]{A9D08E}39.8\% & 84.7\%   & 2.4\%     & 89.6\% & 1.102 & 86.7\%   & 89.4\%   & 4.5\%   \\
solar       & 89.8\% & 1.054 & 89.4\%   & 89.6\%   & 88.7\% & 1.032 & 84.9\%                         & 81.7\%   & 1.1\%     & 89.7\% & 1.495 & 89.2\%   & 89.5\%   & 41.8\%  \\
ETTh1       & 89.2\% & 1.683 & 87.5\%   & 87.5\%   & 87.7\% & 1.630 & 85.0\%                         & 84.2\%   & 1.4\%     & 93.1\% & 2.234 & 88.4\%   & 90.3\%   & 32.7\%  \\
ETTh2       & 87.3\% & 1.194 & 83.8\%   & 86.0\%   & 83.2\% & 1.040 & \cellcolor[HTML]{E2EFDA}69.5\% & 77.8\%   & 4.0\%     & 89.6\% & 1.290 & 86.9\%   & 89.0\%   & 8.1\%   \\
ETTm1       & 89.4\% & 1.623 & 88.2\%   & 89.1\%   & 81.8\% & 1.310 & 75.0\%                         & 79.2\%   & 7.7\%     & 90.3\% & 1.691 & 89.4\%   & 90.1\%   & 4.2\%   \\
ETTm2       & 88.9\% & 1.114 & 87.8\%   & 88.3\%   & 81.5\% & 0.808 & 76.0\%                         & 77.4\%   & 7.4\%     & 89.7\% & 1.160 & 89.2\%   & 89.4\%   & 4.1\%   \\
PEMS03      & 88.7\% & 1.179 & 82.0\%   & 88.0\%   & 81.6\% & 0.989 & \cellcolor[HTML]{C6E0B4}51.3\% & 76.6\%   & 7.1\%     & 89.4\% & 1.323 & 85.8\%   & 89.2\%   & 12.2\%  \\
PEMS04      & 88.9\% & 1.196 & 82.0\%   & 88.3\%   & 84.9\% & 1.085 & \cellcolor[HTML]{E2EFDA}62.5\% & 81.7\%   & 4.0\%     & 89.7\% & 1.356 & 85.3\%   & 89.6\%   & 13.4\%  \\
PEMS07      & 89.6\% & 1.075 & 80.5\%   & 89.4\%   & 87.8\% & 1.014 & \cellcolor[HTML]{A9D08E}37.5\% & 86.0\%   & 1.9\%     & 90.1\% & 1.178 & 84.2\%   & 90.0\%   & 9.6\%   \\
PEMS08      & 88.7\% & 1.143 & 77.6\%   & 88.0\%   & 83.9\% & 1.025 & \cellcolor[HTML]{C6E0B4}59.9\% & 80.5\%   & 4.7\%     & 89.8\% & 1.303 & 82.0\%   & 89.7\%   & 14.0\%  \\ \bottomrule
\end{tabular}
\end{table}

\definecolor{custompurple}{HTML}{7030A0}
Our method consists of two main components: adaptive updates of confidence interval lengths and the use of features to predict error quantiles. We conducted ablation experiments to verify the effectiveness of these two components. Specifically, in the first experiment, we did not update the confidence interval lengths but directly used predicted quantiles as error quantiles to construct confidence intervals. In the second experiment, instead of using features to predict error quantiles, we fixed the output of the quantile prediction model to the 90\% quantile of validation set errors.

The results are shown in Table \ref{tab:a_result1}, where green shading indicates coverage rates below 70\% (The results in this table represent the average across three point prediction models, with complete results available in the Appendix \ref{section:ae}.). For the experiment without interval length updates, the "$C\_Loss$" column shows the coverage degradation caused by disabling updates. For the experiment without feature-based quantile fitting, the "$l\_loss$" column quantifies the percentage increase in confidence interval length due to the lack of quantile prediction.

These ablation studies demonstrate that the two components serve distinct purposes: updating interval lengths ensures valid coverage, while feature-based quantile fitting reduces interval lengths. Without interval length updates, coverage degrades—for example, by 9.8\% on the weather dataset, with other datasets showing declines ranging from 1\% to 7\%. Without feature-based fitting, coverage remains valid, but interval lengths increase substantially, e.g., 7\% on the weather dataset and 41.8\% on the solar dataset.





\section{Conclusion}

In this work, we propose a novel algorithm to provide confidence intervals for deep learning-based time series forecasting models. Our approach leverages the powerful feature extraction capabilities of deep learning models and does not require any modifications to the original forecasting model. Experimental results demonstrate that our approach not only produces valid confidence intervals but also reduces their length compared to existing methods. We also show that the coverage rate of the confidence intervals provided by our method converges to the target coverage rate. Besides, we introduced Mean Absolute Coverage Error (MACE) and proves that the upper bound of it is directly related to the fitting performance of the quantile prediction model. In the future, we aim to improve the chosen of $\gamma$ and make the features more representative.

Overall, this work highlights the effectiveness of considering features when modeling confidence interval for time series forecasting, and we holp it could help provide reliable time series predictions for real-world application.
\bibliography{example_paper}
\bibliographystyle{icml2025}

\newpage
\appendix
\section{Related Work}
\subsection{Conformal prediction and its single variable time series version}
The simplest version of conformal prediction is just regarding errors in validation set as the errors in test set \cite{shafer2008tutorial}. Therefore, if the 90 percentage quintile of errors in the validation set is $q$, and $\hat{y}$ is a point prediction in the test set, then the 90 percentage confidence interval of this prediction is $[\hat{y}-q,\hat{y}+q]$. Moreover, there is a theorem \cite{lei2018distribution} that shows if the data is exchangeable, the confidence intervals derived in this way can guarantee coverage at the target level. 

However, for time series data, the values at adjacent time points are usually dependent, and the distribution of data may also change over time. Therefore, considerable research has been done to improve the basic conformal prediction method to develop algorithms for these non-exchangeable time series data. One foundational work in this area is ACI \cite{gibbs2021adaptive}. Its core idea is to adjust the confidence interval during deployment. If the coverage rate of past step is too large, the confidence interval will be shortened, and vice versa. Later, some works committed to improve ACI, specifically focusing on how to determine the adjusting speed of interval \cite{xu2024conformal}. They treated the problem as an online convex optimization problem, and provided algorithms form this framework \cite{bhatnagar2023improved,zhang2024benefit,zhangdiscounted,Angelopoulos2024OnlineCP}. 
Additionally, some studies proposed methods to construct confidence intervals by continuously updating validation set \cite{xu2021conformal} (EnbPI) or giving distinctive weight to each data point in validation set \cite{barber2023conformal} or by predicting future errors using the errors from some past time steps (SPCI) \cite{xu2023sequential}.
\subsection{Conformal prediction for multi-variable or multi-step time series}
In recent work about multi-step time series prediction \cite{sun2022copula}, different steps were just regarded as different variables, so we list works about multi-variable and multi-step as a single class and use variable to represent both step and variable in the following part of this section. In terms of constructing confidence intervals for multivariate time series, there are two types of coverage requirements. 

Suppose the prediction is of dimension $p$. The first type treats the 
$p$-dimensional data as a single data point and then constructs a region in the $p$-dimensional space such that the true value falls within this region with a given probability \cite{xu2024conformal,stankeviciute2021conformal}. The core issue here is how to consider the relationships between different dimensions. Some methods enlarged the given confidence level $1-\alpha$ \cite{stankeviciute2021conformal,cleaveland2024conformal,lopes2024conforme}, while others used copula methods \cite{sun2022copula}. Besides, others considered the covariance of errors in different dimensions and then constructed ellipsoidal prediction regions \cite{xu2024conformal}. The second type treats the data as $p$ data points, and provides a confidence interval for each dimension individually, requiring each dimension falling within its interval with a given probability. Some methods extended the ACI method to high dimensional data \cite{lin2022conformal,yang2024bellman}, for example, executed ACI in each dimension. And some works also considered using past errors to predict future errors \cite{batra2023conformal}, similar to SPCI. Besides, EnbPI \cite{xu2021conformal} and ACI \cite{gibbs2021adaptive} have been combined in \cite{sousa2024general}.

Both definitions of coverage are valuable, but in this paper, we focus on the second one. This is because in real life, it may be more intuitive to provide a confidence interval for each dimension separately. For example, when predicting the temperature in various regions, residents in a specific region might be more concerned about the upper and lower bounds of future temperature in their own region. But if we follow the first method, the confidence interval would be an ellipsoidal region about temperature vectors in all regions, which may not be very meaningful to the residents in a specific region.

\subsection{Better intervals for conformal prediction}
Another category of methods relevant to our research is how to construct more efficient conformal prediction intervals. The main issue with traditional conformal prediction is that it gives confidence interval with the same length for every point, which is not suitable for many problems. Therefore, some studies proposed to construct confidence intervals based on local sample points \cite{lee2024kernel,lei2014distribution} or use the predicted variance to construct the confidence intervals \cite{lei2018distribution}. Other works transformed traditional predictive models into quantile regression models and adjusting the predicted quantiles using validation set \cite{romano2019conformalized}. Additionally, some researchers considered modifying point prediction models to  distribution prediction models and then construct confidence intervals \cite{chernozhukov2021distributional}. In recent studies, the features of data have been considered, for example, by dividing the feature space into different regions and providing a distinctive confidence interval length for each region \cite{kiyani2024conformal,kiyani2024length}. Some papers also define errors at the feature level and then use feature-level errors to derive errors at the prediction level \cite{teng2023predictive,chen2024conformalized}.
\section{Implementation details}
\subsection{Datasets}
We conduct experiments on the following real-world datasets to evaluate the performance of proposed method. The datasets include: 
\begin{itemize}
    \item \textbf{Weather}: 21 meteorological factors were collected every 10 minutes from the Weather Station of the Max Planck Biogeochemistry Institute in 2020.
    \item \textbf{Traffic}: Hourly road occupancy rates were recorded from 862 sensors installed on freeways in the San Francisco Bay Area, covering the period from January 2015 to December 2016.
    \item \textbf{Electricity}: Hourly electricity consumption data from 321 clients was recorded.
    \item \textbf{Solar}: Solar power production data from 137 photovoltaic plants in 2006 was collected every 10 minutes.
    \item \textbf{ETT}: Data on 7 factors of electricity transformers was collected from July 2016 to July 2018. Four subsets are available: ETTh1 and ETTh2, recorded every hour, and ETTm1 and ETTm2, recorded every 15 minutes.
    \item \textbf{PEMS}: Public traffic network data from California was sampled in 5-minute windows.
\end{itemize}
For more information about datasets, please refer to \cite{liuitransformer}.

\subsection{Base point prediction models}
We did not conduct hyperparameter search suggested by the original papers \cite{liuitransformer,han2024softs,yurevitalizing} because finding the best point prediction model is not our focus. As a result, we fix the dimension of feature as 256 and the number of layer as 3. Besides, we used Adam to train these models for 50 epochs with early stop if loss in validation set did not decrease for 5 epochs. The learning rate was 0.001. Other hyperparameters were set as default values according to their github repositions.

\section{Baseline methods}
In this section, we first introduce each baseline method and finally provide some discussion on several baselines most relevant to our approach.
\label{section:baseline}
\subsection{ECI \cite{wu2025errorquantified} }
This method employs a smoother loss function as an alternative to the traditional quantile loss used in online conformal prediction. Specifically, the original quantile loss function (Equation \ref{eq:onlineloss}) contains an indicator function that determines whether $s_t$
exceeds $q_t$. The authors argue that this indicator function is excessively non-smooth and propose replacing it with a sigmoid function. Consequently, they derive the following updated formulation for $q_t$:
\begin{equation}
    q_{t+1}=q_t+\gamma(err_t-\alpha-(s_t-q_t)\nabla f(s_t-q_t))
    \label{eq:eci}
\end{equation}
where $err_t=\mathbb{I}(s_t>q_t)$, $f$ means sigmoid function,i.e., $f(x)=\frac{1}{1+cexp(-x)}$. Although the original paper used $c=1$ in their experiments, we found this setting failed to achieve the target coverage rate in our experiments. Through empirical testing, we determined that $c=0.2$ yields better results. Consequently, all reported results in our study correspond to this configuration.
\subsection{TQA-E \cite{lin2022conformal} }
This method is designed to tackle multi-variable problem and we modified it in our experiments for multi-step problem. We use the $1-a_{t,i,j}$ quantile of errors in the past to obtain confidence intervals. That is
\begin{equation}
    C_{t,i,j}=[\hat{y}_{t,i,j}-Q_{1-a_{t,i,j}}(E_{t-j,i,j}),\hat{y}_{t,i,j}+Q_{1-a_{t,i,j}}(E_{t-j,i,j})]
\end{equation}
Where $E_{t,i,j}=\{e_{k,i,j}\ | \ k\in[0,1,...,t]\}$ is a bag \cite{shafer2008tutorial} of errors in the past times. And $a_{t,i,j}=\alpha-\delta_{t,i,j}$, $\delta_{t,i,j}$ is obtained in an iteration way:
\begin{equation}
     \delta_{t+1,i,j}=
    \begin{cases}
        \delta_{t,i,j}+\gamma(I_{y_{t-j,i,j}\notin C_{t-j,i,j}}-\alpha) &\delta_{t,i,j} \geq1-\alpha\\
        (1-\gamma)\delta_{t,i,j} &otherwise
    \end{cases}
\end{equation}
$\gamma$ was set as 0.005 according to the original paper.

\subsection{LPCI \cite{batra2023conformal}} 
The main idea of this method is using $[e_{t-k},...,e_{t-1}]$ to predict the quantile of $e_{t}=y_t-\hat{y}$ and the confidence interval is $$[\hat{y}_t+q_{\beta}(e_t),\hat{y}_t+q_{1-\alpha+\beta}(e_t)]$$ where $\beta$ is choosed to minimize to interval length. To handle multi-variable situation,  \cite{batra2023conformal} propose to use data in all dimensions to fit one model but using additional variable identifies $ide_i$ to signify different dimensions. As a result, the quantile prediction model regards $[e_{t-k},...,e_{t-1}, ide_i]$ as input and predict the quantile of future errors. 

The original algorithm requires to retrain quantile prediction in every step using the most recent data. However, it is vary time consuming to do that and we choose to retrain this model every 200 steps. And $[e_{t-k-j,i,j},...,e_{t-1-j,i,j}, ide_i]$ is regarded as input because  $e_{t-1,i,j}$ can not be observed until time step $t+j$. The lag $k$ was 20, the same as the original paper.
\subsection{ACI \cite{gibbs2021adaptive}}
This method  represents the most classical online conformal prediction approach. Specifically, it adjusts the confidence interval based on whether the current interval covers the true value. Rather than directly modifying the interval length, it adaptively selects which quantile of the test set errors to use for constructing the next interval. This process can be equivalently viewed as performing online gradient descent on the quantile parameter \cite{wu2025errorquantified}, where the loss function is the standard quantile loss. 
\subsection{SPCI \cite{xu2023sequential}}
The core idea of this method involves using historical residuals to predict future residual quantiles, which closely resembles the approach of LPCI. However, the key distinction lies in the fact that SPCI was originally designed for univariate time series forecasting problems. In our experiments, we adapted this approach by calibrating separate future error quantile prediction models for each variable and each prediction horizon. All other experimental settings remained identical to those used in LPCI.
\subsection{PID \cite{angelopoulos2023conformal}}
This method applies control theory to the construction of confidence intervals for time series, consisting of three main components: The first part uses an algorithm similar to ACI to adaptively update the length of confidence intervals. Unlike traditional ACI which updates the interval length for the next time step based solely on whether the current time step is covered, this method employs PID control to incorporate coverage statistics from past periods when updating the interval length. And this procedure is the second part. The third part utilizes a model that predicts future errors based on past errors.

\subsection{FCP \cite{teng2023predictive}}

The primary objective of this method is to shift traditional conformal prediction from computing residuals at the prediction level to the feature level. Specifically, for a deep learning model $ f $, it can be decomposed into a feature extractor $ g $ and a prediction head $ h $. Given an input-output pair $(X,Y)$, its latent feature representation is obtained through the feature extractor:
\begin{equation}
    z = g(X)
\end{equation}

Then, the optimal perturbed feature $ z' $ that minimizes the distance from the original feature while satisfying the prediction constraint can be calculated as follows:
\begin{equation}
    \min_{z'} \|z' - z\| \quad \text{subject to} \quad h(z') = Y
\end{equation}
This optimization is achieved through gradient descent in the feature space.

During calibration, the feature-space distances $\|z' - z\|$ for all validation samples are computed. Let $\Delta$ denote the $(1-\alpha)$-quantile of these distances across the validation set:
\begin{equation}
    \Delta = \text{Quantile}_{1-\alpha}\left(\{\|z'_i - z_i\|\}_{i=1}^N\right)
\end{equation}

For a test sample $ X_{N+1} $ with extracted feature $ z_{N+1} = g(X_{N+1}) $, we construct the prediction interval as:
\begin{equation}
    \mathcal{S} = \left\{ y \in \mathcal{Y} \, \big| \, \exists z' \text{ s.t. } y = h(z'), \, \|z' - z_{N+1}\| < \Delta \right\}
\end{equation}
\subsection{CF-SST \cite{chen2024conformalized}}
This method extends FCP. Since FCP was not designed for time series data, it only guarantees confidence interval coverage under exchangeable data assumptions. The CF-SST method extends the FCP to ensure coverage for time series data. Specifically, for the $(N+1)$-th data point: Traditional FCP uses the $1-\alpha$ quantile of feature-space errors from the validation set, but CFSST employs a weighted quantile approach that first calculates the similarity between the $(N+1)$-th sample's errors and those of each validation sample, then weights the validation set feature-space errors based on these similarity scores to compute a weighted quantile. 

\subsection{QCP \cite{romano2019conformalized}}
This method first converts the point prediction model into a quantile prediction model. It then evaluates the accuracy of this quantile prediction model by computing the errors between the predicted quantiles and the actual dependent variables on the validation set.

For test samples, the method constructs confidence intervals by combining two the quantile predictions from the quantile prediction model and the model's observed errors on the validation set
\subsection{Discussion on baseline methods}
All the baseline methods mentioned above are highly insightful and have made significant contributions to this field. Our method builds upon these existing approaches, and in the following discussion, we will focus on the methods most relevant to our work while highlighting the key differences between them and our proposed approach.

\textbf{PID} This method is insightful and kind of similar to our approach. The paper about this method also mentions that using certain models to predict future error quantiles can reduce the variability of these quantiles, thereby improving prediction accuracy and efficiency. Our work advances beyond this in two key aspects. First, we demonstrate that features extracted by deep learning models can predict future error quantiles more effectively than relying solely on past error data. Second, through $MACE$-related theorems, we theoretically establish why using a model to predict future error quantiles yields superior confidence intervals. We further prove that better fitting of future error quantiles directly leads to improved confidence interval performance.

\textbf{SPCI and LPCI} These two methods also attempt to fit a predictive model for future error quantiles. And they rely on combining past errors to predict future error quantiles, which is similar to the PID approach. However, their coverage guarantees require this quantile prediction model to be accurate, thus imposing certain assumptions about the data distribution. Our experiments also show that these two methods often fail to achieve the target coverage rate. In the future, research on deep learning-based confidence intervals could consider both past errors and the features of the current sample to construct an error quantile prediction model. This would effectively combine the strengths of these existing methods and our proposed approach.

\textbf{FCP} This method also utilizes deep learning features to construct confidence intervals, but it was not specifically designed for time series data. Consequently, it cannot provide coverage guarantees for time series applications, a limitation that has been empirically confirmed in our experiments.

\textbf{CF-SST} This method modifies FCP to adapt it for time series data. However, our experiments reveal two key limitations: first, it still struggles to guarantee proper coverage rates, and second, the resulting confidence intervals tend to be unnecessarily wide. To theoretically explain these observations, let us first present the coverage guarantee theorem from the original paper. Suppose $\Delta Cov$ represents the discrepancy between the achieved coverage rate and the target coverage level:
\begin{equation}
    \Delta Cov\geq-\sum_{i=1}^n{w^i}d_{TV}(R(Z),R(Z)^i)
\end{equation}
Where $R(Z)$ denotes the residual sequence from the first to the $N$-th data point
$R(Z)^i$ represents the modified sequence obtained by swapping the residuals of the $N$-th and $i$-th data points in $R(Z)$,$d_{TV}$ is the total variation norm between distributions. However, our task involves multi-step forecasting, where the output is a high-dimensional vector and consequently each residual is also high-dimensional. For high-dimensional random vectors, the total variation distance between them will be larger than the TV distance between their individual dimensions. This makes it difficult to control the TV distance between $R(z)$ and $R(z)^i$, ultimately leading to challenges in achieving the desired coverage rate.

Another challenge lies in CF-SST's objective to find the optimal feature $z'$ in the latent space such that when passed through the prediction head, it closely approximates the true output $y$. For one-dimensional prediction problems, it may be feasible to find such $z'$ within a small neighborhood around the original feature $z$. However, for high-dimensional prediction tasks, finding $z'$ becomes significantly more difficult – it is likely that $z'$ may end up being very far from $z$, which would result in wide confidence intervals.

Certainly, there exist numerous other conformal prediction algorithms for time series, such as EnbPI \cite{xu2021conformal}, NexCP \cite{barber2023conformal}, HopCPT \cite{auer2023conformal}, DtACI \cite{gibbs2024conformal}, AgACI \cite{zaffran2022adaptive}, MVP \cite{bastani2022pra}, decay-OGD \cite{Angelopoulos2024OnlineCP}, SAOCP \cite{bhatnagar2023improved} , etc. While our experiments do not cover all possible time series conformal prediction methods, we have included the most relevant baselines given our approach's key components: Feature-based confidence interval prediction (represented by FCP and CF-SST), online optimization for interval width adjustment (represented by PID, ACI, and ECI), we believe our experimental comparisons are sufficiently comprehensive for evaluating our method's effectiveness.

While all the aforementioned baseline methods represent groundbreaking contributions to the field of time series conformal prediction, our approach demonstrates particular suitability for the specific challenges posed by deep learning-based multivariate multi-step forecasting scenarios. The comparative advantages observed in our experiments should be interpreted as context-dependent refinements rather than universal superiority, as different methodological designs naturally excel in different application contexts. We further acknowledge the complementary value of advanced online optimization strategies from recent work, such as the expert aggregation approaches exemplified by DtACI \cite{gibbs2024conformal} and AgACI \cite{zaffran2022adaptive}. These innovations offer promising directions for enhancing our framework.
\newpage
\section{Proofs}
\label{section:proof}
\begin{lemma}
    Under assumption \ref{bounded assumption}, for any $t,i,j$, we have:
    \begin{equation}
        -M-\gamma\leq a_{t,i,j}\leq M+\gamma
        \label{bound a}
    \end{equation}
\end{lemma}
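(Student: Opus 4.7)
The plan is to prove the bound by induction on $t$, with indices $i,j$ fixed throughout. The base case is immediate: for $t < j+1$ the algorithm sets $a_{t,i,j} = 0$, which trivially lies in $[-M-\gamma, M+\gamma]$. For the inductive step, I will fix $t \geq j+1$, assume $|a_{t,i,j}| \leq M + \gamma$, and show the same bound holds for $a_{t+1,i,j}$. The key observation is that the update in equation \eqref{update a} changes $a$ by either $+\gamma(1-\alpha)$ or $-\gamma\alpha$, both of which have absolute value at most $\gamma$; so the danger is only that $a_{t,i,j}$ already sits at the edge of the admissible range and gets pushed further out. The main step is therefore to show that once $a_{t,i,j}$ moves past $\pm M$ the update is forced in the inward direction.

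I will split the inductive step into two cases. First, suppose $a_{t,i,j} > M$. Using Assumption \ref{bounded assumption}, which guarantees $|y_{t,i,j}-\hat y_{t,i,j}|\leq M$, together with $\hat q_{t,i,j}\geq 0$ (inherent to the quantile-of-absolute-error interpretation of $\hat q$), I get
\begin{equation}
|y_{t,i,j}-\hat y_{t,i,j}| \leq M < a_{t,i,j} \leq \hat q_{t,i,j}+a_{t,i,j},
\end{equation}
so $y_{t,i,j} \in C_{t,i,j}$ and the indicator equals $1$. Hence $a_{t+1,i,j} = a_{t,i,j} - \gamma\alpha < a_{t,i,j} \leq M+\gamma$, and the upper bound is preserved; the lower bound is obviously preserved since $a$ only decreased. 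If instead $a_{t,i,j}\leq M$, then $a_{t+1,i,j} \leq a_{t,i,j} + \gamma(1-\alpha) \leq M + \gamma$, again preserving the upper bound. Second, I handle the symmetric case for the lower bound: if $a_{t,i,j} < -M$, then $\hat q_{t,i,j} + a_{t,i,j} \leq M + a_{t,i,j} < 0$ using $\hat q_{t,i,j}\leq M$, so the interval is empty, the indicator equals $0$, and the update becomes $a_{t+1,i,j} = a_{t,i,j} + \gamma(1-\alpha) \geq a_{t,i,j} \geq -M-\gamma$; and for $a_{t,i,j}\geq -M$, the minimum decrement gives $a_{t+1,i,j} \geq a_{t,i,j} - \gamma\alpha \geq -M - \gamma\alpha \geq -M - \gamma$.

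The main obstacle is the implicit reliance on $\hat q_{t,i,j} \geq 0$, which is natural (it is a predicted quantile of an absolute value produced by a pinball-loss model) but is not explicitly stated in Assumption \ref{bounded assumption}. In the write-up I would either add this as an implicit part of the assumption, or more cleanly strengthen the assumption to $|\hat q_{t,i,j}| \leq M$ and re-run both case analyses symmetrically; either resolution is routine and does not change the constants in the bound. Apart from this, the argument is a clean two-sided induction in which the potentially runaway direction always triggers the opposing update, yielding the stated envelope $[-M-\gamma, M+\gamma]$ at the cost of at most one overshoot of size $\gamma$.
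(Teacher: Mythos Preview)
Your approach matches the paper's: induction on $t$ with a case split on whether the current adjustment has crossed $\pm M$, using that an overshoot forces the coverage indicator (to $1$ above $M$, to $0$ below $-M$) so the next update is corrective and the sequence cannot escape $[-M-\gamma,M+\gamma]$. The paper arranges the same argument into three cases ($a\in[-M-\gamma,-M]$, $(-M,M)$, $(M,M+\gamma]$) and writes the indicator in its lagged form $\mathbb{I}_{y_{t-j,i,j}\in C_{t-j,i,j}}$, but the mechanism and constants are identical to yours.
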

\begin{proof}
    We use inductive method to proof that. First, as we defined in method, $a_{t,j}=0$ when $t<j+1$, so they conform \ref{bound a}. Then we proof, if $-M-\gamma\leq a_{k,i,j}\leq M+\gamma$ holds for all $k\leq t$, then $-M-\gamma\leq a_{t+1,i,j}\leq M+\gamma$.
    
    \textbf{Case 1:} If $-M-\gamma \leq a_{t-j,i,j}\leq-M$, then:
    $\hat{q}_{t-j,i,j}+a_{t-j,j}\leq0$, and the prediction set is empty set. As a result:
    $$
     a_{t+1,i,j}=a_{t,i,j}+\gamma (1-\mathbb{I}_{y_{t-j,i,j}\in C_{t-j,i,j}}-\alpha)=a_{t,i,j}+\gamma(1-\alpha)> a_{t-j,j}\geq -M-\gamma
    $$
    And $a_{t+1,i,j}$ also less then $M+\gamma$.
    
    \textbf{Case 2:} If $-M< a_{t-j,i,j}<M$, then:
    $$
    a_{t+1,i,j}=a_{t,i,j}+\gamma (1-\mathbb{I}_{y_{t-j,i,j}\in C_{t-j,i,j}}-\alpha)\leq a_{t,i,j}+\gamma(1-\alpha)\leq M+\gamma(1-\alpha)
    $$
    $$
     a_{t+1,i,j}=a_{t,i,j}+\gamma (1-\mathbb{I}_{y_{t-j,i,j}\in C_{t-j,i,j}}-\alpha)\geq a_{t,i,j}-\gamma \alpha \geq -M-\gamma
    $$
    
    \textbf{Case 3:} If $M< a_{t-j,i,j}<M+\gamma$, then, because  assumption \ref{bounded assumption}, $\mathbb{I}_{y_{t-j,i,j}\in C_{t-j,i,j}}$ must be 1. As a result:
    $$
    a_{t+1,i,j}=a_{t,i,j}-\gamma \alpha\leq M+\gamma
    $$
    And $a_{t+1,i,j}$ also greater then $-\gamma$.

    Combine the three cases, we can proof $a_{t+1,i,j}$ obeys (\ref{bound a}), and complete the proof.
\end{proof}

\subsection{Proof of Theorem \ref{all cover}}
\begin{proof}
    \begin{align}
        |\frac{\sum_{t=1}^T{\mathbb{I}_{y_{t,i,j}\in C_{t,i,j}}}}{T}-(1-\alpha)|&=|\frac{\sum_{t=T-j}^T{\mathbb{I}_{y_{t,i,j}\in C_{t,i,j}}}}{T}+\frac{\sum_{t=1}^{T-j-1}{\mathbb{I}_{y_{t,i,j}\in C_{t,i,j}}}}{T}-(1-\alpha)|\\
        &\leq |\frac{\sum_{t=1}^{T-j-1}{\mathbb{I}_{y_{t,i,j}\in C_{t,i,j}}}}{T}-(1-\alpha)|+\frac{j+1}{T}
        \label{coverall1}
    \end{align}
    According to (\ref{update a}), $$\mathbb{I}_{y_{t,i,j}\in C_{t,i,j}}=\frac{a_{t+j+1,i,j}-a_{t+j,i,j}}{\gamma}+(1-\alpha)$$
    Therefore,
    \begin{equation}
        \frac{\sum_{t=1}^{T-j-1}{\mathbb{I}_{y_{t,i,j}\in C_{t,i,j}}}}{T}=\frac{a_{T,i,j}-a_{j+1,i,j}}{T\gamma}+\frac{(1-\alpha)(T-j-1)}{T}
    \end{equation}
    substitute into (\ref{coverall1}).
    \begin{align}
        |\frac{\sum_{t=1}^T{\mathbb{I}_{y_{t,i,j}\in C_{t,i,j}}}}{T}-(1-\alpha)|&\leq
        |\frac{2(M+\gamma)}{T\gamma}+\frac{(j+1)(1-\alpha)}{T}|+\frac{j+1}{T}\\&\leq 2(\frac{M+\gamma}{T\gamma}+\frac{j+1}{T})
    \end{align}
    The first inequality is because $a_{t,i,j}$ is bounded according to lemma \ref{bound a}.
\end{proof}
\subsection{Proof of Theorem \ref{MSCE}}

\begin{proof}
    Similar to to \cite{gibbs2024conformal}, we first define $\beta_{t,i,j}$ as:
    $$
    \beta_{t,i,j}=min:\{\beta:y_{t,i,j}\in [\hat{y}_{t,i,j}-\beta_{t,i,j}-\hat{q}_{t,i,j},\hat{y}_{t,i,j}+\beta_{t,i,j}+\hat{q}_{t,i,j}]\}
    $$
    and we define $a^{\ast}_{t,i,j}+\hat{q}_{t,i,j}$ as the true $1-\alpha$ quantile of $y_{t,i,j}-\hat{y}_{t,i,j}$, therefore,
    $$
    P(y_{t,i,j}\in[\hat{y}_{t,i,j}-a^{\ast}_{t,i,j}-\hat{q}_{t,i,j},\hat{y}_{t,i,j}+a^{\ast}_{t,i,j}+\hat{q}_{t,i,j}])=1-\alpha
    $$
    Then $q_{t,i,j}=a^{\ast}_{t,i,j}+\hat{q}_{t,i,j}$. Besides, define pinball loss as:
    $$
    l_\alpha(\beta_{t,i,j},a_{t,i,j})=max(\alpha(a_{t,i,j}-\beta_{t,i,j}),(1-\alpha)(\beta_{t,i,j}-a_{t,i,j})))
    $$
    Therefore.
    $$
    -\frac{d}{da_{t,i,j}}l_\alpha(\beta_{t,i,j},a_{t,i,j})=1-\alpha-\mathbb{I}_{y_{t-j,i,j}\in C_{t-j,i,j}}
    $$
    and the iterate equation (\ref{update a}) can be expressed as:
    $$
    a_{t+1,i,j}=a_{t-1,i,j}-\gamma\frac{d}{da_{t-j,i,j}}l_\alpha(\beta_{t-j,i,j},a_{t-j,i,j})
    $$
    The proof can be divided as three parts. 
    
    First we bound $\sum_t{l_\alpha(\beta_{t,i,j},a_{t,i,j})}-l_\alpha(\beta_{t,i,j},a^{\ast}_{t,i,j})$ using similar calculations in Theorem 10.1 of \cite{Hazan2016IntroductionTO}. Then, we can use this bound to bound $\sum_t{(a_{t,i,j}-a^{\ast}_{t,i,j})^2}$. Finally, the bound of $MACE$ can be obtained.
    
    \textbf{Step 1}
     
    To make symbols more simple, we omit dimension index $i$ and use $l(a_{t,j}),l(a^{\ast}_{t,j})$ to represent $l_\alpha(\beta_{t,i,j},a_{t,i,j}),l_\alpha(\beta_{t,i,j},a^{\ast}_{t,i,j})$. Because pinball loss in a convex function, then, for every step $t$:
    \begin{equation}
        l(a_{t,j})-l(a_{t,j}^*)\leq l'(a_{t,j})\times(a_{t,j}-a_{t,j}^*)
        \label{convex}
    \end{equation}
    And for all $t$ such that $t>j$:
    \begin{align}
        (a_{t+1,j}-a^*_{t,j})^2&=(a_{t,j}-\gamma l'(a_{t-j,j})-a^*_{t,j})^2\\&=(a_{t,j}-a^*_{t,j})^2-2\gamma l'(a_{t-j,j})(a_{t,j}-a^*_{t,j})+(\gamma l'(a_{t-j,j}))^2
    \end{align}
    substitute into (\ref{convex}), we have the following equation for each $t>j$
    $$
    2(l(a_{t,j})-l(a_{t,j}^*))\leq\frac{(a_{t,j}-a^*_{t,j})^2-(a_{t+1,j}-a^*_{t+1,j})^2}{\gamma}+\gamma(l'(a_{t-j,j}))^2
    $$
    Then:
    \begin{align}
        \sum_{t=1}^T{[l(a_{t,j})-l(a_{t,j}^*)]}&=\sum_{t=1}^j{l(a_{t,j})-l(a_{t,j}^*)}+\sum_{t=j}^T{l(a_{t,j})-l(a_{t,j}^*)}\\
        &\leq M(j+1)+\sum_{t=j}^{T-1}{l(a_{t,j})-l(a_{t,j}^*)}\\ \label{32}
        &\leq M(j+1)+ \frac{1}{2}\sum_{t=j}^{T-1}{[\frac{(a_{t,j}-a^*_{t,j})^2-(a_{t+1,j}-a^*_{t+1,j})^2}{\gamma}+\gamma(l'(a_{t-j,j}))^2
    ]}\\
    & \leq M(j+1)+\frac{1}{2}\sum_{t=j}^{T-1}{[\frac{(a_{t,j}-a^*_{t,j})^2-(a_{t+1,j}-a^*_{t+1,j})^2}{\gamma}}]+\gamma(T-j-1)\\
    &= M(j+1)+\frac{a_{j,j}^2-a_{T,j}^2+\sum_{t=j}^{T-1}{[2a_{t+1,j}(a^*_{t+1,j}-a^*_{t,j})]}}{2\gamma}\\
    &+\frac{-2a_{j,j}a^*_{j,j}+2a_{T,j}a_{T,j}^*}{2\gamma}+\gamma(T-j-1)\\ 
    &\leq M(j+1)+\frac{c\sum_{t=j}^{T-1}{2a_{t+1,j}(a^*_{t+1,j}-a^*_{t,j})}}{2\gamma}+\gamma(T-j-1)\\ \label{37}
    & \leq M(j+1)+\sqrt{c(T-j-1)\sum_{t=j}^{T-1}{(a^*_{t+1,j}-a^*_{t,j})}}\\
    &= M(j+1)+\sqrt{c_2(T-j-1)\sum_{t=j}^{T-1}{(q_{t+1,j}-\hat{q}_{t+1,j})-(q_{t,j}-\hat{q}_{t,j})}}\\
    \end{align}
   Inequality (\ref{32}) is because of Assumption \ref{bounded assumption}. Inequality (\ref{37}) is because $a/\gamma+\gamma b\leq \sqrt{ab}$. $c$ is a constant, but can be of different values in different equations. And:
   \begin{align}
       \sum_{t=j}^{T-1}|{(q_{t+1,j}-\hat{q}_{t+1,j})-(q_{t,j}-\hat{q}_{t,j})}|&=\sum_{t=j}^{T-1}\sqrt{[(q_{t+1,j}-\hat{q}_{t+1,j})-(q_{t,j}-\hat{q}_{t,j})]^2}\\
       \end{align}
Then it equals to:
\begin{equation}
    \sum_{t=j}^{T-1}\sqrt{[(q_{t+1,j}-\hat{q}_{t+1,j})^2+(q_{t,j}-\hat{q}_{t,j})^2-2(q_{t+1,j}-\hat{q}_{t+1,j})(q_{t,j}-\hat{q}_{t,j})]}
\end{equation}
Therefore:
\begin{equation}
    \sum_{t=j}^{T-1}|{(q_{t+1,j}-\hat{q}_{t+1,j})-(q_{t,j}-\hat{q}_{t,j})}|=c(T-j-1)\sigma(q_j-\hat{q}_j) \label{38}
\end{equation}

   $\sigma$ means standard deviation, defined in \ref{MSCE}. Therefore, we have:
   \begin{equation}
       \sum_{t=1}^T{[l(a_{t,j})-l(a_{t,j}^*)]}\leq M(j+1)+c(T-j-1)\sigma(q_{j}-\hat{q}_{j})
       \label{39}
   \end{equation}
   
   \textbf{Step 2}
   
   In this step we first want to show the following equation holds for all $t,i,j$.
   \begin{equation}
       E_{\beta}[l_\alpha(\beta_{t,i,j},a_{t,i,j})-l_\alpha(\beta_{t,i,j},a^{\ast}_{t,i,j})]\leq\frac{p_2}{2}(a_{t,i,j}-a_{t,i,j}^*)^2
   \end{equation}
   where $p_2$ is the minimum value of the density function of $\beta$. Again, we omit $t,i,j$ for simplicity. We proof this inequality when $a\leq a^*$, and if $a> a^*$, the proof is the same.
   \begin{align}
       E[l_\alpha(\beta,a)-l_\alpha(\beta,a^*)]&=E\{\alpha(a-a^*)\mathbb{I}_{\beta<a}+[(1-\alpha)(\beta-\alpha)-\alpha(a^*-\beta)]\mathbb{I}_{a<\beta<a^*}\\&+(1-\alpha)(a^*-a)\mathbb{I}_{a^*\leq\beta}\}\\   \label{42}
       &=E[\alpha(a-a^*)]+(\beta-a)I_{a<\beta<a^*}+(a^*-a)I_{a^*<\beta}]\\ \label{43}
       &=E(\beta-a)\mathbb{I}_{a<\beta<a^*}\\
       &=\int_a^{a^*}(\beta-a)p(\beta)d\beta\geq\frac{p_1}{2}(a-a^*)^2
       \label{square bound}
   \end{align}
   (\ref{42}) is because $\mathbb{I}_{\beta<a}=1-I_{a<\beta<a^*}-I_{\beta\geq a^*}$ and (\ref{43}) is because $E\mathbb{I}_{\beta \geq a^*}=\alpha$. Insert this inequality into (\ref{39}), we have:
   \begin{equation}
       \sum_{t=1}^T{(a_{t,j}-a^*_{t,j})^2}\leq \frac{p_1}{2} [ M(j+1)+c(T-j-1)\sigma(q_{j}-\hat{q}_{j})]
   \end{equation}
   
   \textbf{Step 3}

   \begin{align}
         \sum_{t=1}^T{|a_{t,j}-a^*_{t,j}|}&\leq \sqrt{T\sum_{t=1}^T{(a_{t,j}-a^*_{t,j})^2}}\\
         & \leq \sqrt{c[TM(j+1)+T^2\sigma(q_j-\hat{q}_j)]}
   \end{align}
     The first inequality is because of Cauchy-Schwarz inequality. Then:
     \begin{equation}
         \frac{1}{T}\sum_{t=1}^T|(P(y_t\in C_t)-(1-\alpha))|\leq \frac{1}{T}\sum_{t=1}^TL{|a_{t,j}-a^*_{t,j}|}\leq c\sqrt{\sigma(q_j-\hat{q_j})+\frac{M(j+1)}{T}}
     \end{equation}
     The first inequality is because $P(y_t\in[\hat{y}_{t,j}-\hat{q}_{t,j}-a_{t,j},\hat{y}_{t,j}+\hat{q}_{t,j}+a_{t,j}])$ is $L$-Lipschitz to $a_{t,j}$ and $P(y_t\in[\hat{y}_{t,j}-\hat{q}_{t,j}-a_{t,j}^*,\hat{y}_{t,j}+\hat{q}_{t,j}+a_{t,j}^*])=1-\alpha$

\end{proof}


\clearpage
\section{Full Results}

\subsection{Full forecast results}
We report the full results in Table \ref{tab:full_result_1}, Table \ref{tab:full_result_2} and Table \ref{tab:full_result_3}.
\label{section:full result}
\renewcommand{\arraystretch}{1.75}
\setlength{\tabcolsep}{1.5pt}
\begin{table}[!h]
\caption{Full forecast results I}
\label{tab:full_result_1}
\fontsize{6}{5}\selectfont 
\centering

\end{table}

We can conclude that not dynamically updating confidence interval typically results in a decline in coverage. For example, on the weather dataset, the coverage rate without updating confidence interval is less than 80\%, whereas updating it increases the coverage to 89\%. Moreover, updating the confidence interval can significantly improve coverage in the worst-performing dimension and the worst-performing prediction time step. For instance, in electricity dataset, updating confidence interval raises the coverage in the worst-performing dimension from around 40\% to  85\%. 

However, we must acknowledge that for certain datasets, such as the solar dataset, updating the confidence interval occasionally results in worse outcomes. This may be related to the inherent characteristics of the dataset, possibly because the solar dataset has relatively minor distribution shifts.

Our experimental results also demonstrate that disabling feature-based quantile fitting leads to degraded performance across all datasets. While the coverage rates remain relatively stable , we observe 4-40\% increases in confidence interval lengths. Notably, the magnitude of this degradation directly correlates with our method's performance advantage over baselines.

A representative case is the solar dataset, where our method achieves its most significant improvement over baselines – here, feature-based error fitting reduces interval widths by over 40\%. The degree of improvement varies across datasets, suggesting this effect is closely tied to dataset-specific characteristics. 
\section{Sensitive analysis}
\subsection{Sensitive analysis of learning rate}
\label{section:sensitive of gamma}
The update rate of the interval length, \(\gamma\), is a very important parameter. Therefore, we aim to demonstrate that our method performs reasonably well for different values of \(\gamma\) through additional experiments. To achieve this, we conducted experiments where \(\gamma\) equals to 0.001, 0.005, and 0.01, and obtained the results in Figures \ref{fig:lr-MLP}. The left part of Figure \ref{fig:lr-MLP} illustrates the relationship between different $\gamma$ and average coverage, while the right part of it shows the relationship between $\gamma$ and interval length.

It can be observed from Figures \ref{fig:lr-MLP} that, when $\gamma$ increases, the coverage and interval length increase. But the increase for coverage become inconspicuous when $\gamma$ is greater than 0.005. Finally, the confidence intervals provided by our method become valid in ETTh2 dataset when $\gamma$ equals to 0.01 with length around 1.344. This is competitve result, as the best result of baselines is 1.357. 

\begin{figure*}[!h]
    \centering
    \includegraphics[width=\linewidth]{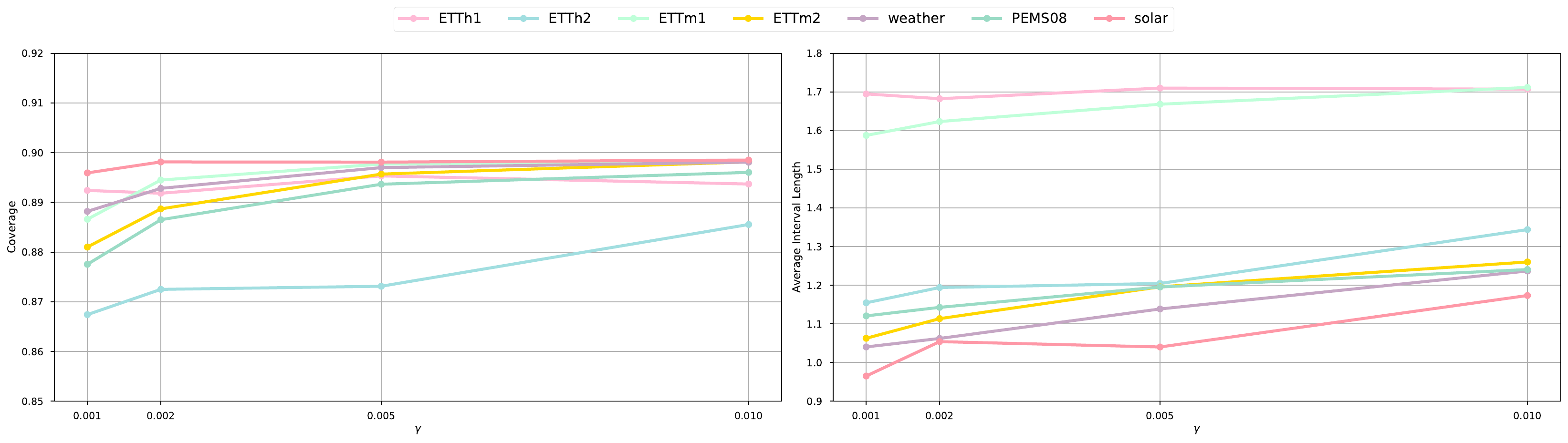}
    \caption{Results under different learning rates}
    \label{fig:lr-MLP}
\end{figure*}
\subsection{Sensitive analysis of hidden size}
Besides, we replaced the hidden size in quantile prediction model by 256,1024 and conducted additional experiments, resulting in Table \ref{tab:sensitive_hidden}. We also color the best performance \textcolor{red}{red}. 
\begin{table}[!h]

\renewcommand{\arraystretch}{2.5}
\fontsize{8}{5}\selectfont 
\centering
\caption{Results of sensitive analysis experiments of hidden size}
\label{tab:sensitive_hidden}
\begin{tabular}{@{}c|c|cccc|cccc|cccc@{}}
\toprule
\multicolumn{2}{c|}{Hidden size}          & \multicolumn{4}{c|}{256}                                                       & \multicolumn{4}{c|}{512}                                                       & \multicolumn{4}{c|}{1024}                                                      \\ \midrule
Dataset                   & Base model   & $Cov$ & $l $                  &$ Min\_d        $      & $Min\_t$& $Cov$ & $l $                  &$ Min\_d        $      & $Min\_t$ & $Cov$ & $l $                  &$ Min\_d        $      & $Min\_t$\\ \midrule
                          & iTransformer & 89.3\%       & {\color[HTML]{FF0000} 1.681} & 86.6\%         & 87.9\%         & 89.9\%       & 1.758                        & 87.5\%         & 88.3\%         & 89.8\%       & 1.703                        & 87.4\%         & 88.4\%         \\
                          & Leddam       & 89.3\%       & 1.684                        & 87.4\%         & 87.2\%         & 88.8\%       & 1.666                        & 87.7\%         & 87.6\%         & 88.5\%       & {\color[HTML]{FF0000} 1.648} & 87.3\%         & 85.7\%         \\
\multirow{-3}{*}{ETTh1}   & SOFTS        & 89.7\%       & 1.691                        & 87.6\%         & 88.2\%         & 88.8\%       & {\color[HTML]{FF0000} 1.624} & 87.3\%         & 86.6\%         & 89.2\%       & 1.653                        & 87.3\%         & 87.6\%         \\\midrule
                          & iTransformer & 87.7\%       & 1.357                        & 84.3\%         & 85.4\%         & 87.9\%       & 1.336                        & 85.7\%         & 86.9\%         & 86.5\%       & 1.241                        & 82.6\%         & 84.4\%         \\
                          & Leddam       & 86.9\%       & 1.145                        & 83.0\%         & 85.7\%         & 86.7\%       & 1.133                        & 82.8\%         & 85.4\%         & 86.2\%       & 1.117                        & 82.4\%         & 84.9\%         \\
\multirow{-3}{*}{ETTh2}   & SOFTS        & 87.3\%       & 1.120                        & 83.0\%         & 86.2\%         & 87.1\%       & 1.113                        & 82.7\%         & 85.9\%         & 86.9\%       & 1.100                        & 82.5\%         & 85.5\%         \\\midrule
                          & iTransformer & 89.6\%       & 1.698                        & 88.7\%         & 89.2\%         & 89.6\%       & 1.718                        & 88.5\%         & 89.3\%         & 89.5\%       & {\color[HTML]{FF0000} 1.690} & 88.3\%         & 89.1\%         \\
                          & Leddam       & 89.5\%       & 1.613                        & 88.3\%         & 89.0\%         & 89.4\%       & 1.588                        & 88.2\%         & 89.1\%         & 89.2\%       & {\color[HTML]{FF0000} 1.582} & 88.0\%         & 88.9\%         \\
\multirow{-3}{*}{ETTm1}   & SOFTS        & 89.5\%       & 1.591                        & 88.2\%         & 89.1\%         & 89.3\%       & 1.565                        & 87.9\%         & 89.0\%         & 89.3\%       & {\color[HTML]{FF0000} 1.555} & 87.9\%         & 88.8\%         \\\midrule
                          & iTransformer & 89.2\%       & {\color[HTML]{FF0000} 1.163} & 88.3\%         & 88.6\%         & 89.1\%       & 1.122                        & 88.0\%         & 88.4\%         & 89.3\%       & 1.155                        & 88.1\%         & 88.5\%         \\
                          & Leddam       & 88.9\%       & 1.116                        & 87.9\%         & 88.3\%         & 88.8\%       & 1.104                        & 87.8\%         & 88.3\%         & 88.7\%       & {\color[HTML]{FF0000} 1.104} & 87.7\%         & 88.1\%         \\
\multirow{-3}{*}{ETTm2}   & SOFTS        & 88.8\%       & 1.121                        & 87.8\%         & 88.3\%         & 88.7\%       & {\color[HTML]{FF0000} 1.114} & 87.7\%         & 88.1\%         & 88.7\%       & 1.115                        & 87.7\%         & 88.1\%         \\\midrule
                          & iTransformer & 89.1\%       & 1.126                        & 80.5\%         & 88.6\%         & 89.1\%       & {\color[HTML]{FF0000} 1.094} & 80.3\%         & 88.6\%         & 88.8\%       & 1.097                        & 79.7\%         & 88.3\%         \\
                          & Leddam       & 88.4\%       & 1.112                        & 76.4\%         & 87.5\%         & 88.3\%       & 1.100                        & 76.2\%         & 87.6\%         & 88.1\%       & {\color[HTML]{FF0000} 1.092} & 76.1\%         & 87.1\%         \\
\multirow{-3}{*}{PEMS08}  & SOFTS        & 88.7\%       & 1.243                        & 75.5\%         & 87.8\%         & 88.6\%       & {\color[HTML]{FF0000} 1.235} & 76.3\%         & 87.8\%         & 88.7\%       & 1.239                        & 75.6\%         & 88.0\%         \\\midrule
                          & iTransformer & 89.9\%       & 1.084                        & 89.6\%         & 89.6\%         & 89.8\%       & 1.054                        & 89.4\%         & 89.6\%         & 89.7\%       & {\color[HTML]{FF0000} 1.040} & 89.4\%         & 89.4\%         \\
                          & Leddam       & 89.8\%       & {\color[HTML]{FF0000} 0.984} & 89.5\%         & 89.5\%         & 89.9\%       & 1.037                        & 89.5\%         & 89.6\%         & 89.8\%       & 1.005                        & 89.5\%         & 89.7\%         \\
\multirow{-3}{*}{solar}   & SOFTS        & 89.7\%       & {\color[HTML]{FF0000} 0.967} & 89.2\%         & 89.3\%         & 89.8\%       & 1.072                        & 89.4\%         & 89.5\%         & 89.8\%       & 1.074                        & 89.3\%         & 89.5\%         \\\midrule
                          & iTransformer & 89.4\%       & 1.075                        & 88.3\%         & 89.0\%         & 89.4\%       & {\color[HTML]{FF0000} 1.061} & 88.2\%         & 89.1\%         & 89.5\%       & 1.096                        & 88.3\%         & 89.0\%         \\
                          & Leddam       & 89.3\%       & 1.067                        & 87.6\%         & 88.9\%         & 89.3\%       & {\color[HTML]{FF0000} 1.040} & 87.4\%         & 88.8\%         & 89.3\%       & 1.046                        & 87.2\%         & 88.9\%         \\
\multirow{-3}{*}{weather} & SOFTS        & 89.3\%       & 1.093                        & 87.3\%         & 88.8\%         & 89.2\%       & {\color[HTML]{FF0000} 1.086} & 87.2\%         & 88.9\%         & 89.3\%       & 1.110                        & 87.2\%         & 88.8\%         \\ \bottomrule
\end{tabular}
\end{table}

It can be concluded from this table that the performance of our method is generally stable with different hidden sizes. And in many cases, increasing the hidden size to 1024 might slightly improve the results.

\section{Results of other methods}
\label{section:MC}
We also conducted experiments about Monte-Carlo Dropout \cite{Gal2015DropoutAA} because it is also able to provide confidence interval without changing the original model. In these experiments, the dropout rate is fixed as 0.3. And we use the point prediction models to predict 100 times, regarding the 5\% and 95\% quantiles of these predictions as the lower and upper bounds of confidence intervals. The results are shown in Table \ref{tab:MC}.
\begin{table}[]
\centering
\renewcommand{\arraystretch}{3}
\fontsize{8}{5}\selectfont 
\caption{Results of Monte-Carlo Dropout}
\label{tab:MC}
\begin{tabular}{@{}c|cccc|cccc|cccc@{}}
\toprule
            & \multicolumn{4}{c}{iTransformer} & \multicolumn{4}{|c}{Leddam}       & \multicolumn{4}{|c}{SOFTS}        \\ \midrule
dataset     &$ Cov$    & $l $    &$ Min\_d$ & $Min\_t$ & $Cov$    & $l$     & $Min\_d$ & $Min\_t$ & $Cov $   & $l $    & $Min\_d$ & $Min\_t $\\ \hline
weather     & 19.9\% & 0.144 & 4.9\%  & 12.9\% & 27.1\% & 0.132 & 10.8\% & 23.5\% & 26.3\% & 0.117 & 13.0\% & 21.8\% \\
electricity & 28.7\% & 0.186 & 14.3\% & 25.9\% & 40.1\% & 0.257 & 13.8\% & 37.4\% & 27.9\% & 0.170 & 13.0\% & 24.2\% \\
solar       & 40.4\% & 0.200 & 36.2\% & 35.7\% & 53.3\% & 0.221 & 45.1\% & 46.7\% & 55.3\% & 0.182 & 53.8\% & 48.9\% \\
traffic     & 48.1\% & 0.245 & 7.5\%  & 44.8\% & 59.7\% & 0.347 & 23.9\% & 56.8\% & 52.5\% & 0.250 & 23.1\% & 49.5\% \\
ETTh1       & 28.3\% & 0.290 & 19.8\% & 26.3\% & 25.9\% & 0.265 & 17.0\% & 23.8\% & 24.5\% & 0.244 & 12.5\% & 22.3\% \\
ETTh2       & 19.0\% & 0.172 & 10.3\% & 15.2\% & 23.1\% & 0.169 & 18.8\% & 21.0\% & 25.6\% & 0.180 & 20.2\% & 22.1\% \\
ETTm1       & 33.5\% & 0.359 & 18.2\% & 30.9\% & 30.7\% & 0.315 & 19.9\% & 28.3\% & 31.0\% & 0.329 & 18.9\% & 28.9\% \\
ETTm2       & 25.3\% & 0.185 & 17.4\% & 23.4\% & 27.5\% & 0.169 & 23.1\% & 25.2\% & 29.8\% & 0.167 & 23.3\% & 27.1\% \\
PEMS08      & 34.2\% & 0.254 & 4.9\%  & 31.0\% & 39.5\% & 0.326 & 12.0\% & 38.0\% & 37.3\% & 0.332 & 19.0\% & 35.8\% \\
PEMS03      & 28.2\% & 0.244 & 11.2\% & 24.2\% & 34.9\% & 0.298 & 20.6\% & 32.3\% & 31.9\% & 0.303 & 21.6\% & 29.0\% \\
PEMS04      & 27.4\% & 0.220 & 4.4\%  & 24.3\% & 42.3\% & 0.336 & 15.5\% & 40.5\% & 33.5\% & 0.339 & 18.3\% & 31.3\% \\
PEMS07      & 38.4\% & 0.252 & 0.4\%  & 36.9\% & 46.4\% & 0.334 & 16.8\% & 44.5\% & 39.9\% & 0.327 & 15.8\% & 38.3\% \\ \bottomrule
\end{tabular}
\end{table}

It is obvious that the coverages of Monte-Carlo Dropout are far away from 90\%. This pattern is consistent with experiments in \cite{lin2022conformal,xu2021conformal}, where the coverages are also vary low.

Although our work primarily focuses on how to obtain confidence intervals without altering the time series forecasting model, we still conducted additional experiments where we modified the point prediction model to a interval prediction model. Specifically, we added two linear layers to predict the 5\% and 95\% quantiles of the prediction, and then used quantile loss for training. The dataset division, optimizer, hyperparameters, and other settings were consistent with those used in the previous experiments. After training the model, we deployed it on the test set and observed the prediction results, which are presented in Table \ref{tab:QR}. And the coverage below 50\% is colored in \textcolor{custompurple}{purple}.

\begin{table}[!h]
\centering
\renewcommand{\arraystretch}{3}
\fontsize{8}{5}\selectfont 
\caption{Results of quantile regression}
\label{tab:QR}
\begin{tabular}{@{}c|cccc|cccc|cccc@{}}

\toprule
            & \multicolumn{4}{c}{iTransformer}                        & \multicolumn{4}{|c}{Leddam}                              & \multicolumn{4}{|c}{SOFTS}        \\ \midrule
dataset     & $Cov$    & $l$     & $Min\_d$                         & $Min\_t$  & $Cov $   & $l$     & $Min\_d      $                   & $Min\_t$  & $Cov$    &$ l $    & $Min\_d$  & $Min\_t $ \\ \hline
weather     & 76.3\% & 0.633 & 61.7\%                        & 72.0\% & 75.9\% & 0.638 & 58.6\%                        & 71.6\% & 78.3\% & 0.715 & 68.3\% & 73.4\% \\
electricity & 87.2\% & 0.935 & 76.5\%                        & 85.9\% & 85.5\% & 0.880 & 68.3\%                        & 83.2\% & 88.6\% & 0.971 & 73.1\% & 87.6\% \\
solar       & 90.4\% & 0.897 & 87.3\%                        & 77.3\% & 87.4\% & 0.779 & 84.9\%                        & 75.3\% & 88.5\% & 0.860 & 86.3\% & 72.2\% \\
traffic     & 87.2\% & 0.936 & 58.9\%                        & 85.6\% & 86.8\% & 0.977 & 66.1\%                        & 85.1\% & 88.9\% & 1.014 & 76.2\% & 86.8\% \\
ETTh1       & 86.9\% & 1.485 & 82.6\%                        & 84.6\% & 89.5\% & 1.540 & 86.0\%                        & 88.3\% & 89.2\% & 1.529 & 84.9\% & 87.8\% \\
ETTh2       & 85.8\% & 0.995 & 78.7\%                        & 82.7\% & 85.9\% & 1.083 & 82.0\%                        & 84.1\% & 86.2\% & 1.084 & 81.5\% & 84.1\% \\
ETTm1       & 77.8\% & 1.035 & 74.9\%                        & 73.9\% & 82.8\% & 1.223 & 79.3\%                        & 80.7\% & 81.4\% & 1.187 & 71.2\% & 78.8\% \\
ETTm2       & 83.4\% & 0.822 & 77.6\%                        & 80.1\% & 82.7\% & 0.793 & 78.8\%                        & 80.4\% & 85.7\% & 0.938 & 81.6\% & 84.3\% \\
PEMS08      & 83.5\% & 0.994 & {\color[HTML]{7030A0} 23.2\%} & 80.9\% & 78.1\% & 0.958 & 50.4\%                        & 74.5\% & 86.2\% & 1.266 & 66.1\% & 84.3\% \\
PEMS03      & 78.3\% & 0.920 & 51.6\%                        & 73.5\% & 76.4\% & 0.827 & {\color[HTML]{7030A0} 47.1\%} & 70.1\% & 81.9\% & 1.164 & 67.0\% & 78.4\% \\
PEMS04      & 85.4\% & 1.063 & {\color[HTML]{7030A0} 37.9\%} & 83.7\% & 84.4\% & 0.967 & {\color[HTML]{7030A0} 49.5\%} & 81.8\% & 87.9\% & 1.453 & 73.7\% & 86.8\% \\
PEMS07      & 87.5\% & 1.016 & {\color[HTML]{7030A0} 6.5\%}  & 86.3\% & 83.0\% & 0.796 & {\color[HTML]{7030A0} 29.1\%} & 80.6\% & 88.2\% & 1.228 & 66.6\% & 87.0\% \\ \bottomrule
\end{tabular}
\end{table}

We can observe that directly using quantile regression makes it difficult to guarantee coverage. For example, the coverage for the weather dataset does not even reach 80\%. Furthermore, when examining the coverage for the worst dimension and the worst prediction time step, the results are even poorer. For instance, in the case of the PEMS08 dataset, the coverage in the worst dimension can even fall below 50\%. Similarly, for the solar dataset using SOFTS, although the overall coverage reaches 88.5\%, the coverage for the worst time step drops to just 72\%. These results suggest that time series datasets often suffer from distribution shift, meaning that the model trained on the training set is often not able to provide valid confidence intervals on the test set. Moreover, the degree of distribution shift may vary across different dimensions of the dataset. It can lead to a scenario where, although the overall coverage might still be relatively good, the coverage for the worst dimension is very low.
\section{Discussion on $MACE$}
\label{section:MACE}
While $MACE$ (Mean Absolute Coverage Error) cannot be directly observed, we approximate it by computing local coverage rates over sequential prediction windows. Specifically, we calculate the proportion of true values contained within the prediction intervals across every consecutive 100-step forecast horizon. We then measure the deviation between these local coverage rates and the target coverage level to estimate MACE.

In our experiments, only ECI, ACI, PIC, and FFDCI demonstrated consistent coverage guarantees across most datasets. Table \ref{tab:mace} therefore reports the approximated MACE values for these four algorithms.
\begin{table}[!h]
\caption{Results of approximate $MACE$}
\label{tab:mace}
\centering
\renewcommand{\arraystretch}{3}
\fontsize{8}{5}\selectfont 
\begin{tabular}{@{}c|cccc|cccc|cccc@{}}
\toprule
      Model      & \multicolumn{4}{c|}{iTransformer}                & \multicolumn{4}{c|}{Leddam}                      & \multicolumn{4}{c}{SOFTS}                       \\ \midrule
dataset     & ECI            & ACI   & PID   & FFDCI          & ECI            & ACI   & PID   & FFDCI          & ECI            & ACI   & PID   & FFDCI          \\\midrule
ETTh1       & 0.077          & 0.080 & 0.072 & \textbf{0.071} & 0.077          & 0.077 & 0.071 & \textbf{0.064} & 0.077          & 0.080 & 0.074 & \textbf{0.064} \\
ETTh2       & 0.103          & 0.106 & 0.109 & \textbf{0.098} & 0.104          & 0.104 & 0.110 & \textbf{0.098} & 0.106          & 0.103 & 0.110 & \textbf{0.097} \\
ETTm1       & 0.085          & 0.081 & 0.088 & \textbf{0.082} & 0.084          & 0.080 & 0.087 & \textbf{0.080} & 0.086          & 0.083 & 0.090 & \textbf{0.082} \\
ETTm2       & 0.102          & 0.106 & 0.103 & \textbf{0.092} & 0.102          & 0.011 & 0.102 & \textbf{0.092} & 0.105          & 0.108 & 0.105 & \textbf{0.092} \\
PEMS03      & 0.092          & 0.084 & 0.089 & \textbf{0.076} & 0.095          & 0.085 & 0.092 & \textbf{0.082} & 0.094          & 0.089 & 0.091 & \textbf{0.073} \\
PEMS04      & 0.086          & 0.078 & 0.084 & \textbf{0.064} & 0.090          & 0.077 & 0.088 & \textbf{0.068} & 0.090          & 0.081 & 0.087 & \textbf{0.072} \\
PEMS07      & 0.077          & 0.069 & 0.076 & \textbf{0.059} & 0.081          & 0.070 & 0.080 & \textbf{0.067} & 0.084          & 0.075 & 0.081 & \textbf{0.062} \\
PEMS08      & 0.083          & 0.074 & 0.083 & \textbf{0.063} & 0.084          & 0.078 & 0.081 & \textbf{0.073} & 0.087          & 0.078 & 0.086 & \textbf{0.068} \\
solar       & 0.091          & 0.075 & 0.084 & \textbf{0.072} & 0.091          & 0.078 & 0.085 & \textbf{0.072} & 0.096          & 0.080 & 0.088 & \textbf{0.076} \\
weather     & 0.107          & 0.109 & 0.111 & \textbf{0.104} & 0.108          & 0.109 & 0.111 & \textbf{0.100} & 0.113          & 0.109 & 0.117 & \textbf{0.105} \\
traffic     & \textbf{0.055} & 0.058 & 0.058 & \textbf{0.055} & \textbf{0.054} & 0.057 & 0.058 & 0.056          & \textbf{0.056} & 0.058 & 0.058 & 0.057          \\
electriticy & 0.069          & 0.076 & 0.067 & \textbf{0.064} & 0.070          & 0.756 & 0.067 & \textbf{0.064} & 0.071          & 0.077 & 0.069 & \textbf{0.066} \\ \bottomrule
\end{tabular}
\end{table}

The results in Table \ref{tab:mace} demonstrate that our algorithm achieves the smallest MACE values in most cases, indicating that its local coverage rates align more closely with the target coverage level compared to other methods.

Furthermore, Figure \ref{fig:loc_cov} presents the temporal evolution of local coverage rates across different datasets for all evaluated algorithms. The figure also shows that our method maintains local coverage rates consistently nearer to 90\%.
\newpage
\begin{figure}[!h]

    \begin{subfigure}[b]{0.48\textwidth}
        \includegraphics[width=\linewidth]{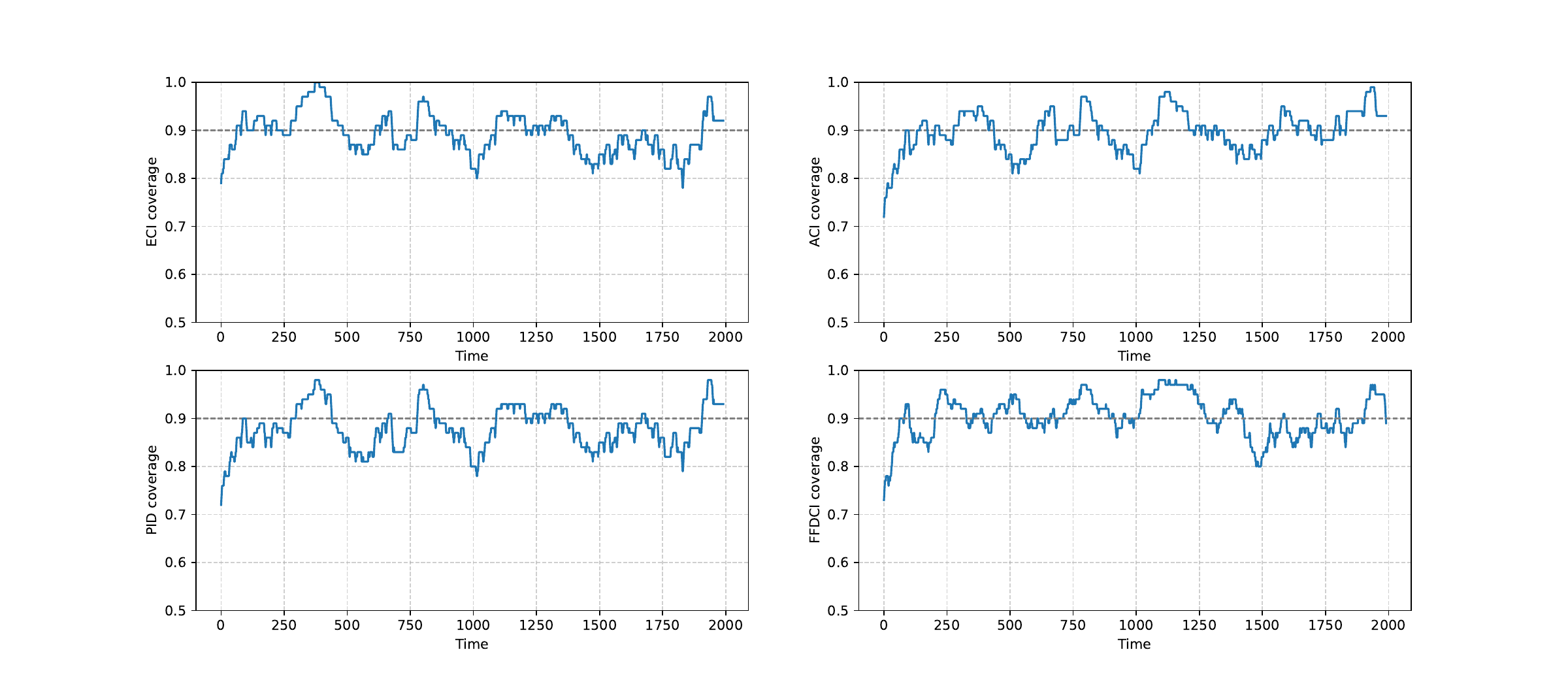}
        \caption{ETTh1}
        \label{subfig:ETTh1}
    \end{subfigure}
    \hspace{0.cm} 
    \begin{subfigure}[b]{0.48\textwidth}
        \includegraphics[width=\linewidth]{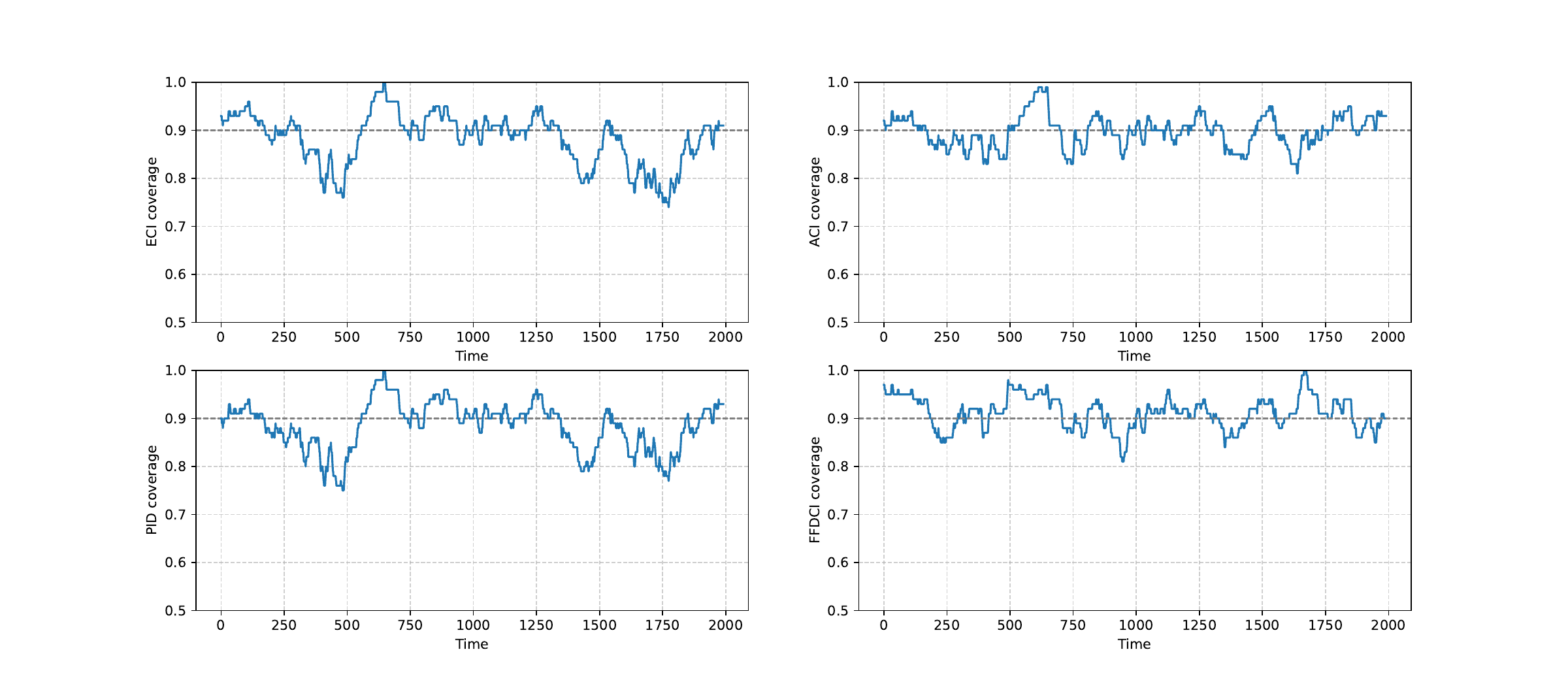}
        \caption{ETTh2}
        \label{subfig:ETTh2}
    \end{subfigure}
    
    \vspace{0.cm} 
    
    \begin{subfigure}[b]{0.48\textwidth}
        \includegraphics[width=\linewidth]{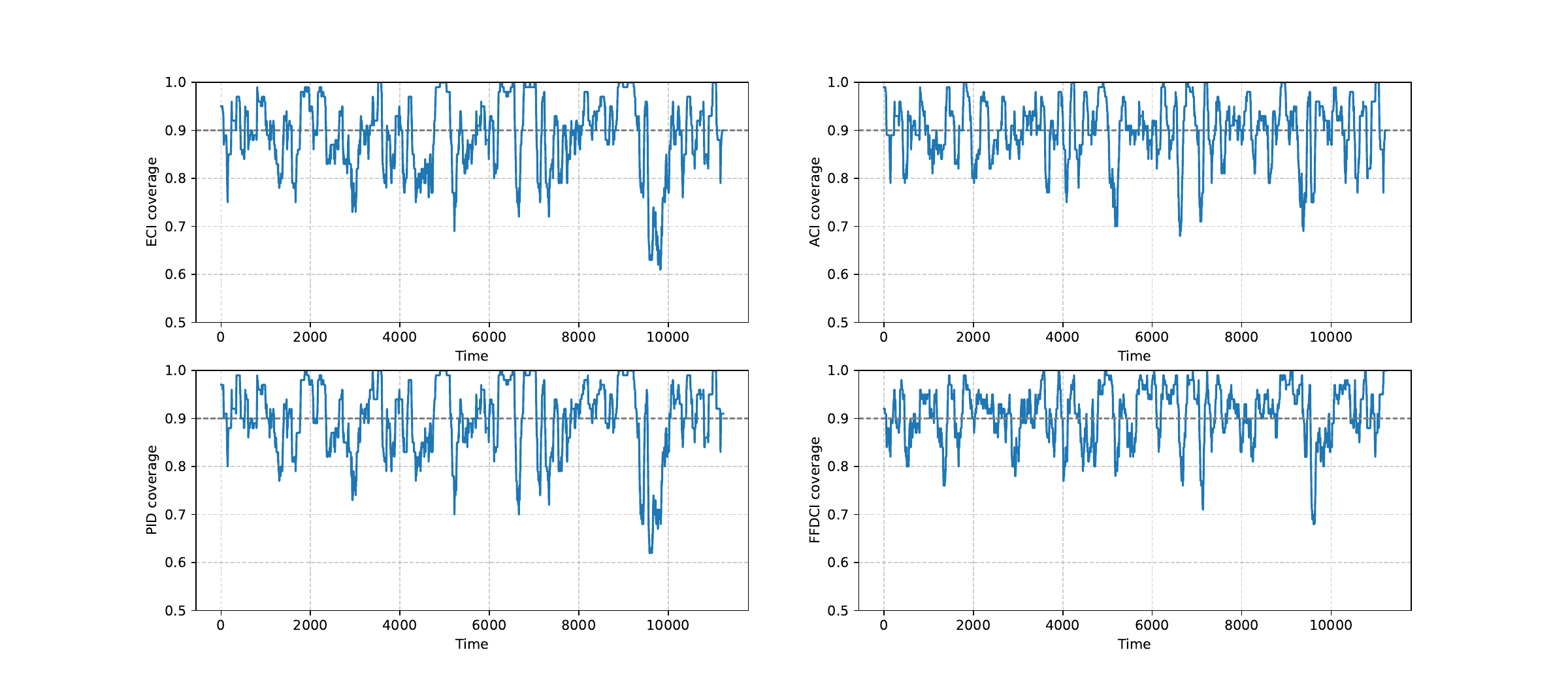}
        \caption{ETTm1}
        \label{subfig:ETTm1}
    \end{subfigure}
    \hfill 
    \begin{subfigure}[b]{0.48\textwidth}
        \includegraphics[width=\linewidth]{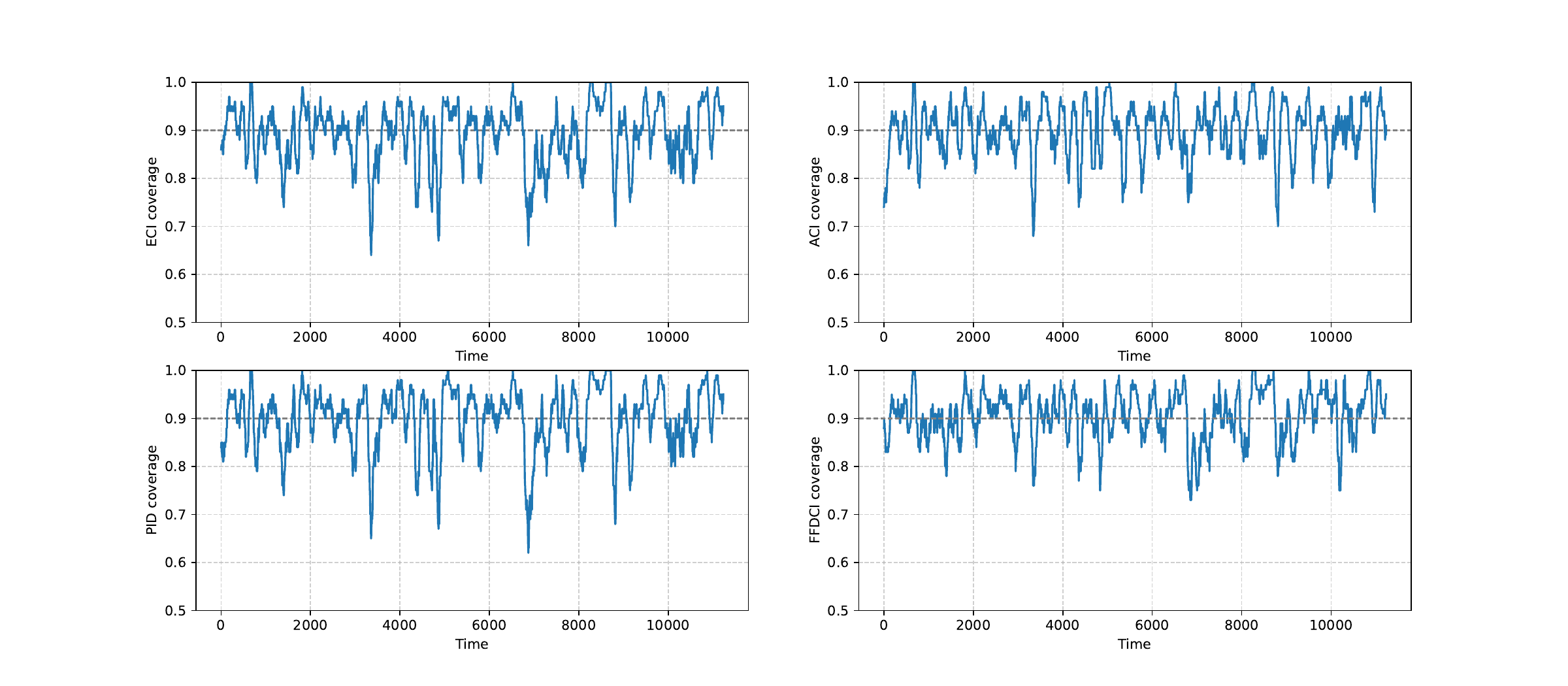}
        \caption{ETTm2}
        \label{subfig:ETTm2}
    \end{subfigure}
    \vspace{0.cm} 
    
    \begin{subfigure}[b]{0.48\textwidth}
        \includegraphics[width=\linewidth]{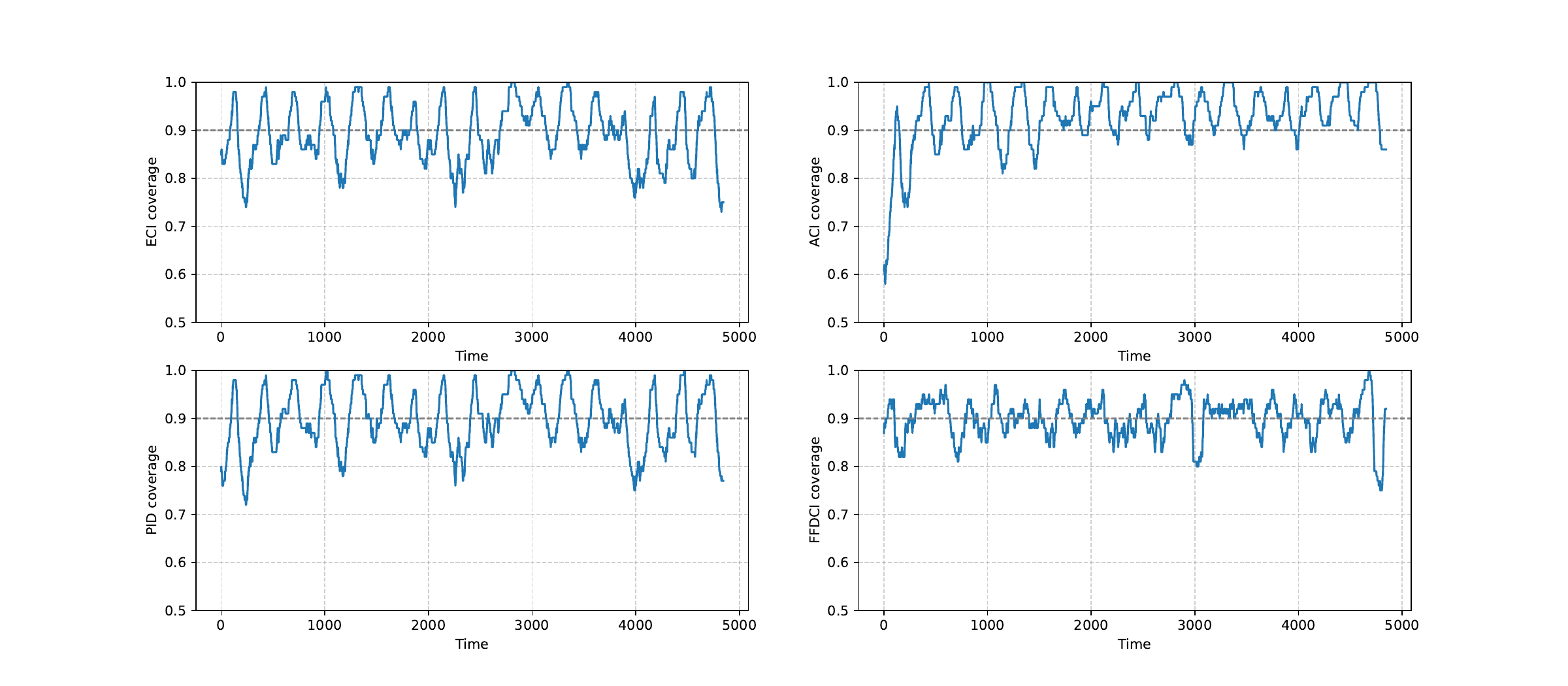}
        \caption{PEMS03}
        \label{subfig:PEMS03}
    \end{subfigure}
    \hfill 
    \begin{subfigure}[b]{0.48\textwidth}
        \includegraphics[width=\linewidth]{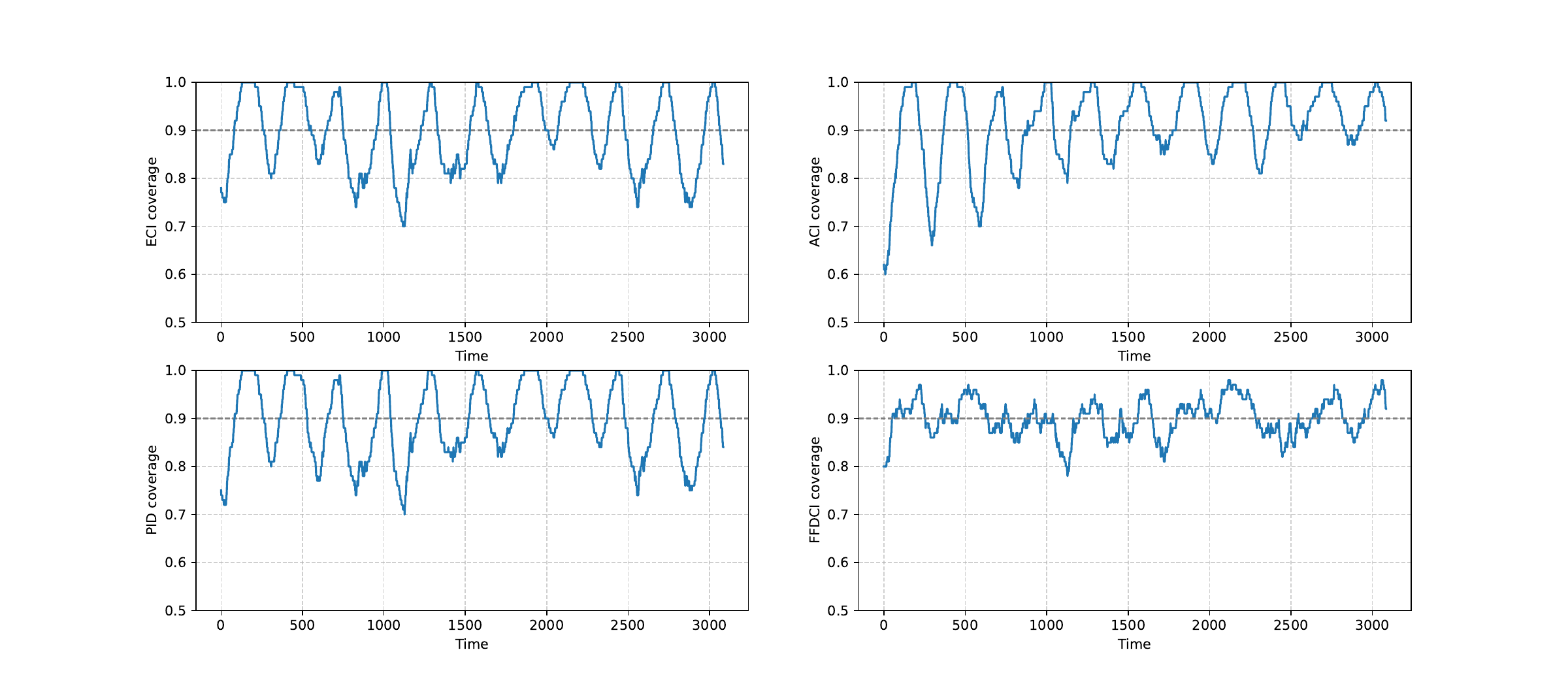}
        \caption{PEMS04}
        \label{subfig:PEMS04}
    \end{subfigure}
    \vspace{0.cm} 
    
    \begin{subfigure}[b]{0.48\textwidth}
        \includegraphics[width=\linewidth]{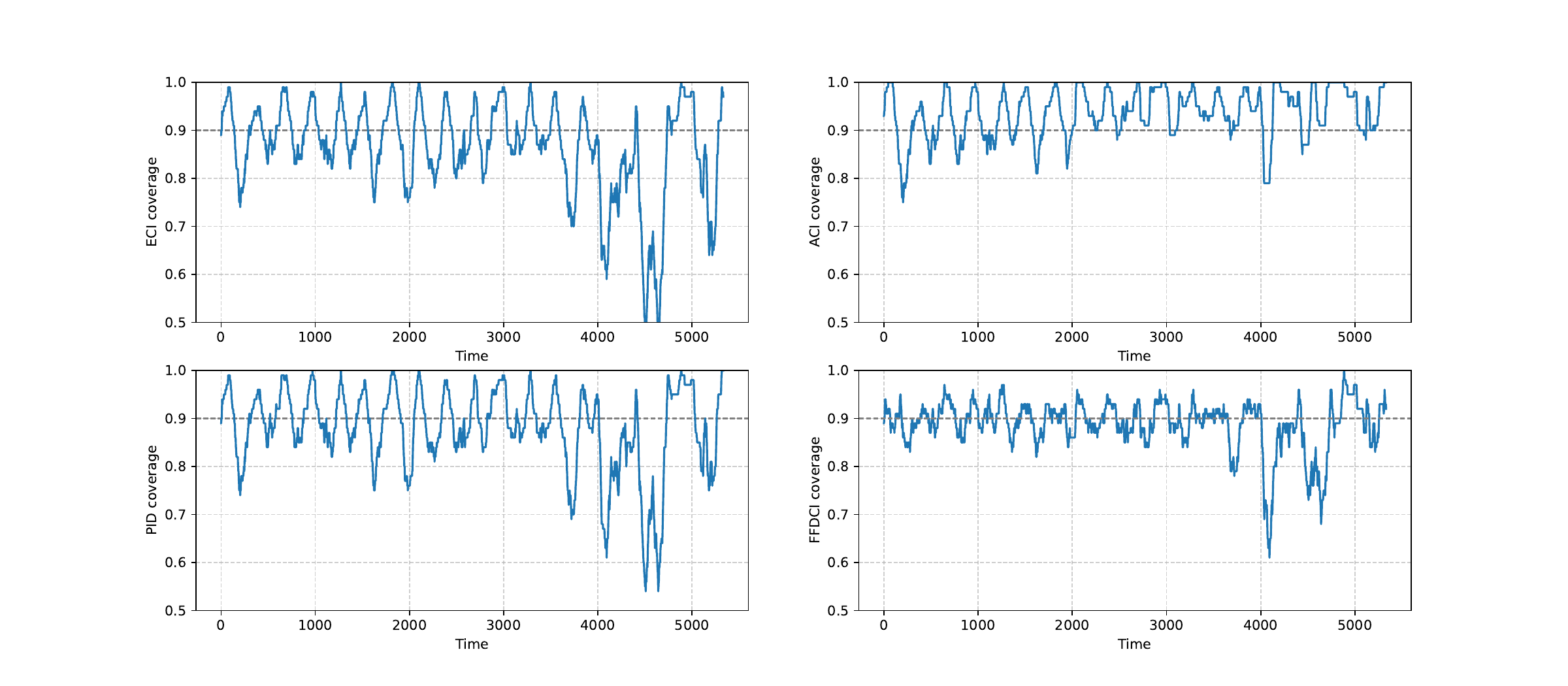}
        \caption{PEMS07}
        \label{subfig:PEMS07}
    \end{subfigure}
    \hfill 
    \begin{subfigure}[b]{0.48\textwidth}
        \includegraphics[width=\linewidth]{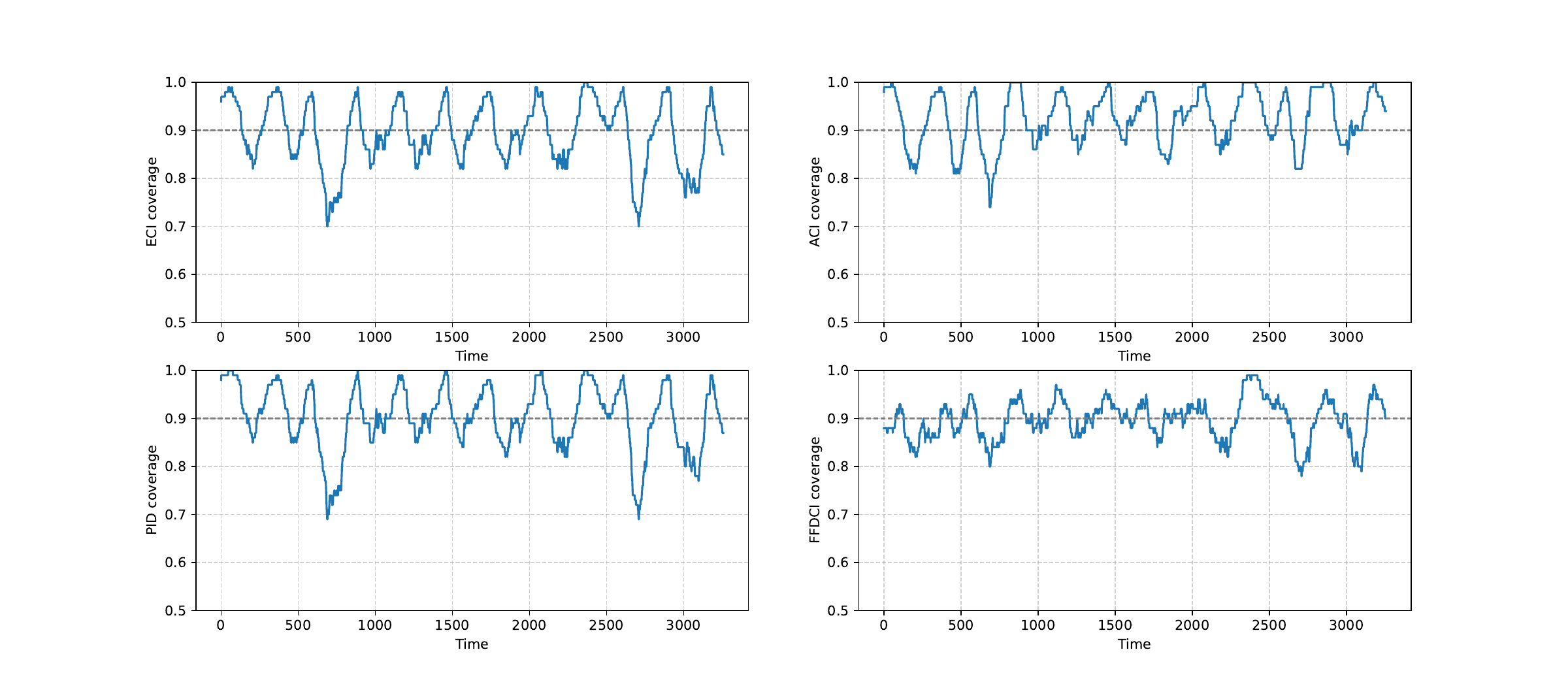}
        \caption{PEMS08}
        \label{subfig:PEMS08}
    \end{subfigure}
    \vspace{0.cm} 
    
    \begin{subfigure}[b]{0.48\textwidth}
        \includegraphics[width=\linewidth]{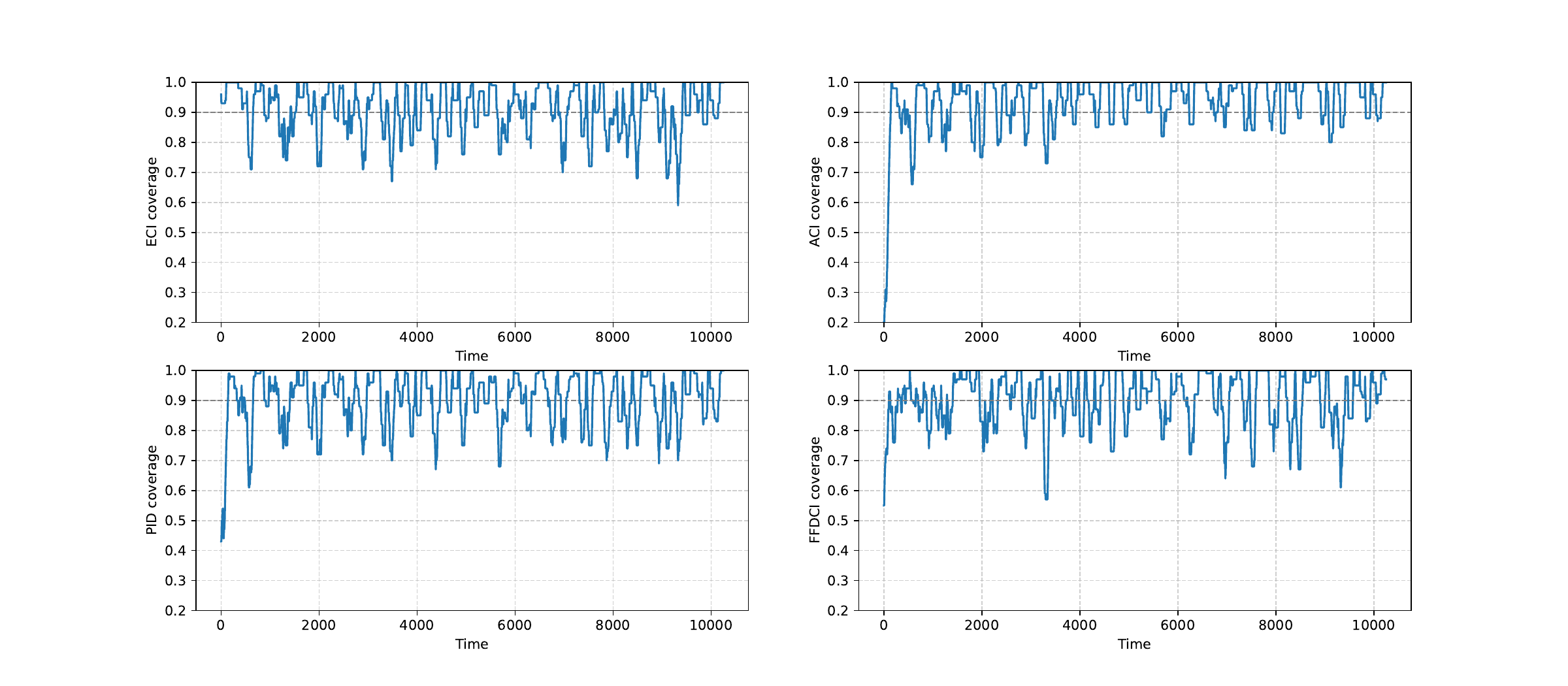}
        \caption{weather}
        \label{subfig:weather}
    \end{subfigure}
    \hfill 
    \begin{subfigure}[b]{0.48\textwidth}
        \includegraphics[width=\linewidth]{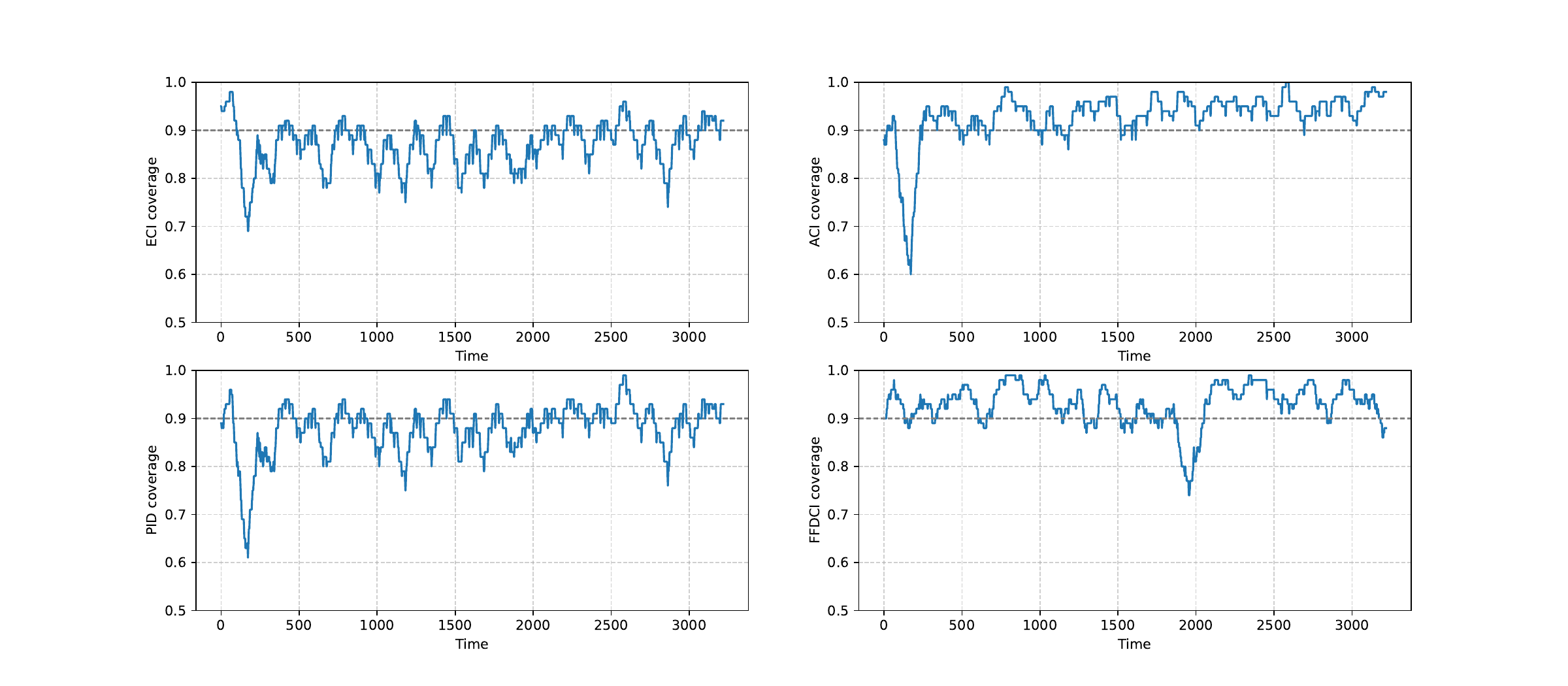}
        \caption{traffic}
        \label{subfig:traffic}
    \end{subfigure}
    \vspace{0.cm} 
    
    \begin{subfigure}[b]{0.48\textwidth}
        \includegraphics[width=\linewidth]{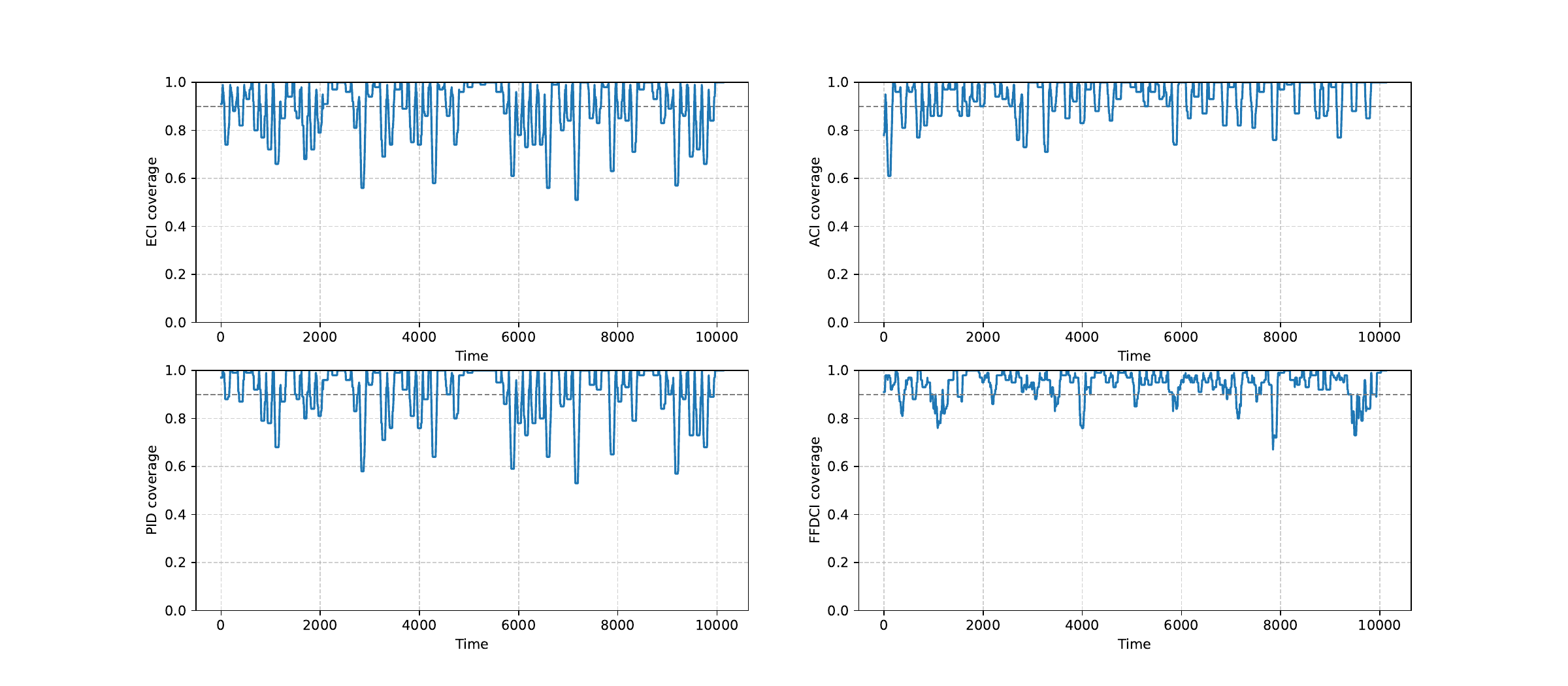}
        \caption{solar}
        \label{subfig:solar}
    \end{subfigure}
    \hfill 
    \begin{subfigure}[b]{0.48\textwidth}
        \includegraphics[width=\linewidth]{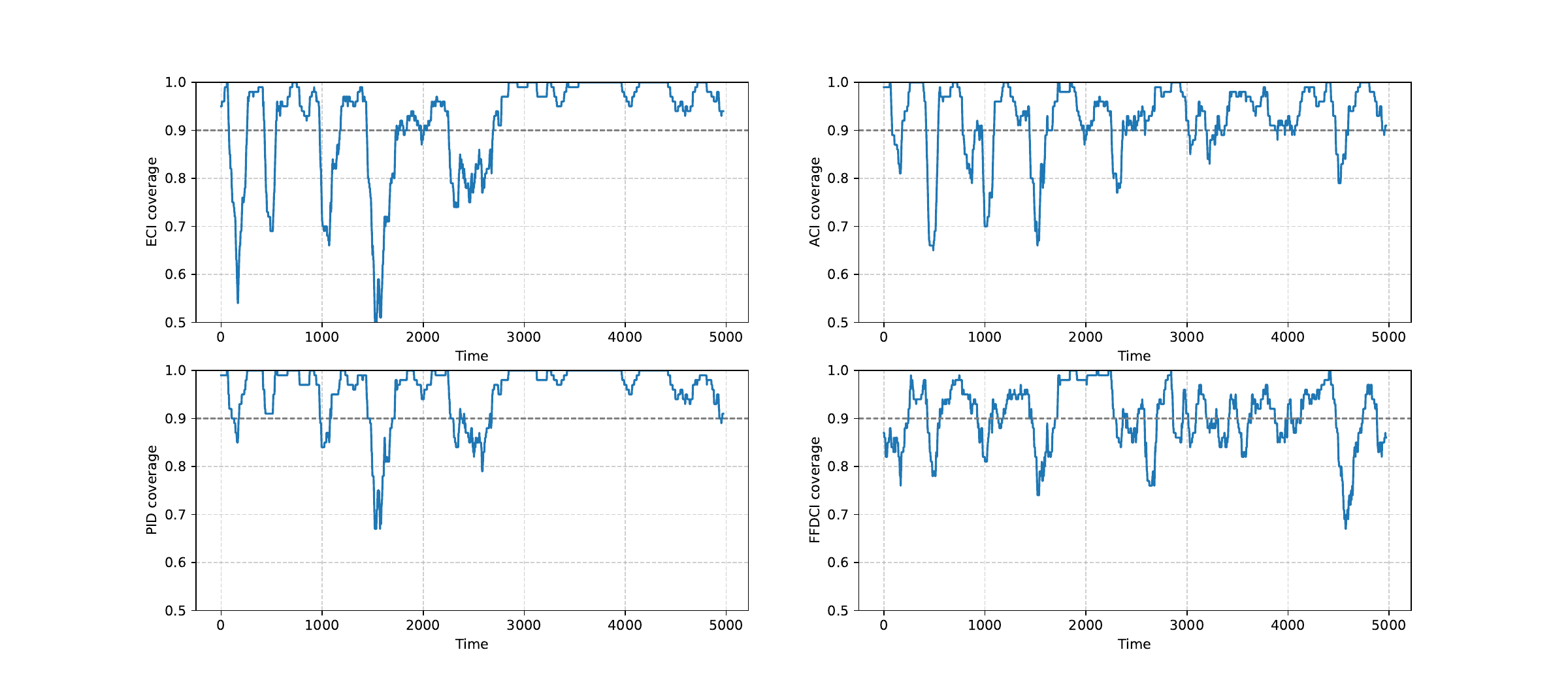}
        \caption{electricity}
        \label{subfig:electricity}
    \end{subfigure}
    
      \caption{Figure of local coverage}
      \label{fig:loc_cov}
\end{figure}
\section{Extended version of FFDCI}
We have developed several extensions to our proposed algorithm: (1) The first extension adopts the ECI approach for updating confidence interval lengths (see Equation \ref{eq:eci}); (2) The second extension replaces the MLP with an LSTM model (with a single 512-dimensional hidden layer) for error quantile prediction, better suited for time series forecasting; (3) The third extension incorporates the SFOGD methodology, adaptively determining the update rate for confidence interval lengths using accumulated second-order gradient information. Specifically, we modify the parameter $a_{t+1,i,j}$ update rule as follows:
\begin{equation}
    a_{t+1,i,j}=a_{t,i,j}+\frac{\gamma}{\sqrt{\sum_{i=1}^{t-j}(1-\mathbb{I}_{y_{t-j,i,j}\in C_{t-j,i,j}}-\alpha)^2}} (1-\mathbb{I}_{y_{t-j,i,j}\in C_{t-j,i,j}}-\alpha)
\end{equation}

The following Table \ref{tab:result_ex} presents the performance of the original algorithm and its three extended versions. We highlight the confidence interval lengths where extended versions outperform the original version in {\color[HTML]{FF0000}red}, and \textbf{bold} the interval lengths of the best-performing method among all four approaches (three extended versions plus the original). Additionally, we report the frequency of each of these four methods achieving optimal results in the last raw.
\clearpage
\renewcommand{\arraystretch}{1.75}
\setlength{\tabcolsep}{1.pt}
\begin{table}[!h]
\caption{Forecast results of extended FFDCI}
\label{tab:result_ex}
\fontsize{5.5}{5}\selectfont 
\centering
\begin{tabular}{@{}c|c|cccc|cccc|cccc|cccc@{}}
\toprule
\multicolumn{2}{c|}{Method}                   & \multicolumn{4}{c|}{DDFCI-ori}                                        & \multicolumn{4}{c|}{DDFCI-ECI}                                        & \multicolumn{4}{c|}{DDFCI-LSTM}                                       & \multicolumn{4}{c}{DDFCI-SFOGD}                                                            \\ \midrule
Dataset                       & Base  model & $Cov$    & $l$     & $Min\_d$                         & $Min\_t$  &$Cov$    & $l$     & $Min\_d$                         & $Min\_t$  &$Cov$    & $l$     & $Min\_d$                         & $Min\_t$  &$Cov$    & $l$     & $Min\_d$                         & $Min\_t$   \\ \midrule
                               & itransformer & 89.4\% & 1.061          & 88.2\%   & 89.1\%   & 89.5\% & {\color[HTML]{FF0000} 1.046}          & 88.0\%   & 88.7\%   & 89.2\% & 1.119                                 & 87.3\%   & 88.4\%   & 91.2\% & {\color[HTML]{FF0000} \textbf{1.032}} & 84.8\%                         & 88.4\%   \\
                              & leddam       & 89.3\% & \textbf{1.040} & 87.4\%   & 88.8\%   & 89.5\% & 1.074                                 & 87.2\%   & 88.8\%   & 89.3\% & 1.054                                 & 87.4\%   & 88.9\%   & 90.1\% & 1.051                                 & 81.6\%                         & 87.0\%   \\
                              & SOFTS        & 89.2\% & 1.086          & 87.2\%   & 88.9\%   & 89.4\% & 1.101                                 & 86.6\%   & 89.0\%   & 89.4\% & 1.116                                 & 87.3\%   & 89.0\%   & 90.0\% & {\color[HTML]{FF0000} \textbf{1.069}} & 81.5\%                         & 87.6\%   \\
\multirow{-4}{*}{weather}     & ave          & 89.3\% & 1.062          & 87.6\%   & 88.9\%   & 89.5\% & 1.074                                 & 87.3\%   & 88.8\%   & 89.3\% & 1.096                                 & 87.3\%   & 88.8\%   & 90.4\% & {\color[HTML]{FF0000} \textbf{1.050}} & 82.7\%                         & 87.7\%   \\\midrule
                              & itransformer & 88.9\% & 1.067          & 81.5\%   & 87.8\%   & 89.6\% & 1.076                                 & 82.2\%   & 88.6\%   & 88.2\% & {\color[HTML]{FF0000} \textbf{1.016}} & 80.1\%   & 87.7\%   & 90.8\% & 1.105                                 & 70.9\%                         & 88.5\%   \\
                              & leddam       & 88.7\% & \textbf{1.026} & 77.2\%   & 87.6\%   & 89.2\% & 1.042                                 & 77.0\%   & 88.3\%   & 88.7\% & 1.028                                 & 77.5\%   & 88.0\%   & 90.1\% & 1.060                                 & \cellcolor[HTML]{E2EFDA}67.1\% & 88.3\%   \\
                              & SOFTS        & 88.3\% & \textbf{1.048} & 78.0\%   & 87.4\%   & 89.7\% & 1.085                                 & 78.5\%   & 88.5\%   & 88.4\% & 1.050                                 & 77.6\%   & 87.7\%   & 91.0\% & 1.119                                 & \cellcolor[HTML]{E2EFDA}69.1\% & 88.7\%   \\
\multirow{-4}{*}{traffic}     & ave          & 88.6\% & 1.047          & 78.9\%   & 87.6\%   & 89.5\% & 1.068                                 & 79.3\%   & 88.5\%   & 88.4\% & {\color[HTML]{FF0000} \textbf{1.032}} & 78.4\%   & 87.8\%   & 90.6\% & 1.095                                 & \cellcolor[HTML]{E2EFDA}69.1\% & 88.5\%   \\\midrule
                              & itransformer & 89.4\% & 1.105          & 86.7\%   & 89.0\%   & 91.0\% & 1.161                                 & 86.4\%   & 90.4\%   & 89.1\% & {\color[HTML]{FF0000} \textbf{1.100}} & 85.1\%   & 88.8\%   & 93.7\% & 1.271                                 & 83.0\%                         & 92.5\%   \\
                              & leddam       & 89.1\% & \textbf{1.007} & 84.6\%   & 88.6\%   & 89.9\% & 1.042                                 & 83.2\%   & 89.0\%   & 88.7\% & 1.014                                 & 84.0\%   & 88.4\%   & 91.0\% & 1.073                                 & \cellcolor[HTML]{E2EFDA}65.6\% & 88.6\%   \\
                              & SOFTS        & 89.3\% & 1.051          & 84.3\%   & 88.9\%   & 89.7\% & {\color[HTML]{FF0000} 1.042}          & 82.9\%   & 89.2\%   & 88.8\% & {\color[HTML]{FF0000} \textbf{1.024}} & 83.9\%   & 88.5\%   & 90.8\% & 1.069                                 & \cellcolor[HTML]{E2EFDA}65.2\% & 89.3\%   \\
\multirow{-4}{*}{electricity} & ave          & 89.3\% & 1.054          & 85.2\%   & 88.8\%   & 90.2\% & 1.081                                 & 84.2\%   & 89.5\%   & 88.9\% & {\color[HTML]{FF0000} \textbf{1.046}} & 84.3\%   & 88.5\%   & 91.8\% & 1.137                                 & 71.3\%                         & 90.1\%   \\\midrule
                              & itransformer & 89.8\% & 1.054          & 89.4\%   & 89.6\%   & 91.1\% & 1.121                                 & 90.7\%   & 90.5\%   & 89.6\% & {\color[HTML]{FF0000} \textbf{1.036}} & 89.2\%   & 89.4\%   & 96.2\% & 1.258                                 & 94.8\%                         & 94.8\%   \\
                              & leddam       & 89.9\% & 1.037          & 89.5\%   & 89.6\%   & 90.7\% & {\color[HTML]{FF0000} \textbf{0.979}} & 89.8\%   & 90.2\%   & 89.8\% & 1.042                                 & 89.4\%   & 89.6\%   & 94.8\% & 1.109                                 & 92.5\%                         & 93.0\%   \\
                              & SOFTS        & 89.8\% & 1.072          & 89.4\%   & 89.5\%   & 90.5\% & {\color[HTML]{FF0000} \textbf{0.950}} & 89.6\%   & 90.0\%   & 89.4\% & {\color[HTML]{FF0000} 0.963}          & 88.9\%   & 89.1\%   & 95.0\% & 1.078                                 & 93.0\%                         & 93.9\%   \\
\multirow{-4}{*}{solar}       & ave          & 89.8\% & 1.054          & 89.4\%   & 89.6\%   & 90.8\% & {\color[HTML]{FF0000} 1.017}          & 90.0\%   & 90.2\%   & 89.6\% & {\color[HTML]{FF0000} \textbf{1.013}} & 89.1\%   & 89.4\%   & 95.3\% & 1.148                                 & 93.5\%                         & 93.9\%   \\\midrule
                              & itransformer & 89.6\% & 1.718          & 88.5\%   & 89.3\%   & 90.2\% & 1.740                                 & 88.9\%   & 89.3\%   & 89.3\% & {\color[HTML]{FF0000} \textbf{1.668}} & 88.2\%   & 89.0\%   & 89.6\% & {\color[HTML]{FF0000} \textbf{1.688}} & 85.9\%                         & 87.6\%   \\
                              & leddam       & 89.4\% & 1.588          & 88.2\%   & 89.1\%   & 89.6\% & 1.605                                 & 87.8\%   & 88.7\%   & 89.6\% & 1.622                                 & 88.3\%   & 89.4\%   & 88.3\% & {\color[HTML]{FF0000} \textbf{1.523}} & 83.1\%                         & 85.2\%   \\
                              & SOFTS        & 89.3\% & 1.565          & 87.9\%   & 89.0\%   & 89.7\% & 1.585                                 & 87.2\%   & 88.9\%   & 89.4\% & {\color[HTML]{FF0000} 1.550}          & 88.1\%   & 89.2\%   & 88.3\% & {\color[HTML]{FF0000} \textbf{1.508}} & 81.4\%                         & 86.4\%   \\
\multirow{-4}{*}{ETTm1}       & ave          & 89.4\% & 1.623          & 88.2\%   & 89.1\%   & 89.8\% & 1.643                                 & 88.0\%   & 89.0\%   & 89.4\% & {\color[HTML]{FF0000} 1.613}          & 88.2\%   & 89.2\%   & 88.7\% & {\color[HTML]{FF0000} \textbf{1.573}} & 83.5\%                         & 86.4\%   \\\midrule
                              & itransformer & 89.1\% & \textbf{1.122} & 88.0\%   & 88.4\%   & 89.6\% & 1.144                                 & 88.3\%   & 88.8\%   & 89.1\% & 1.133                                 & 88.2\%   & 88.7\%   & 91.1\% & 1.131                                 & 85.7\%                         & 88.7\%   \\
                              & leddam       & 88.8\% & 1.104          & 87.8\%   & 88.3\%   & 88.8\% & {\color[HTML]{FF0000} 1.087}          & 87.5\%   & 88.0\%   & 88.8\% & 1.113                                 & 87.9\%   & 88.3\%   & 88.6\% & {\color[HTML]{FF0000} \textbf{1.003}} & 83.5\%                         & 86.2\%   \\
                              & SOFTS        & 88.7\% & 1.114          & 87.7\%   & 88.1\%   & 88.7\% & {\color[HTML]{FF0000} 1.096}          & 87.5\%   & 87.9\%   & 88.7\% & {\color[HTML]{FF0000} 1.110}          & 87.7\%   & 88.2\%   & 88.5\% & {\color[HTML]{FF0000} \textbf{1.011}} & 83.8\%                         & 86.0\%   \\
\multirow{-4}{*}{ETTm2}       & ave          & 88.9\% & 1.114          & 87.8\%   & 88.3\%   & 89.0\% & {\color[HTML]{FF0000} 1.109}          & 87.8\%   & 88.2\%   & 88.9\% & 1.119                                 & 87.9\%   & 88.4\%   & 89.4\% & {\color[HTML]{FF0000} \textbf{1.048}} & 84.3\%                         & 87.0\%   \\\midrule
                              & itransformer & 89.9\% & 1.758          & 87.5\%   & 88.3\%   & 89.8\% & 1.758                                 & 87.0\%   & 87.5\%   & 90.4\% & 1.763                                 & 87.5\%   & 89.6\%   & 89.6\% & {\color[HTML]{FF0000} \textbf{1.747}} & 85.9\%                         & 86.6\%   \\
                              & leddam       & 88.8\% & \textbf{1.666} & 87.7\%   & 87.6\%   & 89.8\% & 1.702                                 & 88.8\%   & 87.9\%   & 90.4\% & 1.892                                 & 87.8\%   & 88.8\%   & 89.9\% & 1.705                                 & 88.7\%                         & 87.3\%   \\
                              & SOFTS        & 88.8\% & \textbf{1.624} & 87.3\%   & 86.6\%   & 90.9\% & 1.766                                 & 89.0\%   & 89.4\%   & 90.5\% & 1.879                                 & 87.4\%   & 89.6\%   & 91.3\% & 1.791                                 & 89.6\%                         & 89.5\%   \\
\multirow{-4}{*}{ETTh1}       & ave          & 89.2\% & \textbf{1.683} & 87.5\%   & 87.5\%   & 90.2\% & 1.742                                 & 88.3\%   & 88.3\%   & 90.5\% & 1.844                                 & 87.6\%   & 89.3\%   & 90.3\% & 1.748                                 & 88.1\%                         & 87.8\%   \\\midrule
                              & itransformer & 87.9\% & 1.336          & 85.7\%   & 86.9\%   & 89.7\% & 1.401                                 & 86.3\%   & 86.2\%   & 88.0\% & {\color[HTML]{FF0000} \textbf{1.258}} & 84.7\%   & 87.2\%   & 90.4\% & 1.416                                 & 85.6\%                         & 85.4\%   \\
                              & leddam       & 86.7\% & 1.133          & 82.8\%   & 85.4\%   & 87.0\% & 1.141                                 & 82.8\%   & 85.4\%   & 88.1\% & {\color[HTML]{FF0000} 1.130}          & 82.9\%   & 86.1\%   & 86.8\% & 1.119                                 & 79.0\%                         & 83.5\%   \\
                              & SOFTS        & 87.1\% & 1.113          & 82.7\%   & 85.9\%   & 87.1\% & 1.105                                 & 82.2\%   & 85.9\%   & 87.1\% & 1.103                                 & 83.2\%   & 86.2\%   & 86.5\% & 1.073                                 & 77.4\%                         & 84.3\%   \\
\multirow{-4}{*}{ETTh2}       & ave          & 87.3\% & 1.194          & 83.8\%   & 86.0\%   & 88.0\% & \textbf{1.216}                        & 83.8\%   & 85.8\%   & 87.7\% & 1.164                                 & 83.6\%   & 86.5\%   & 87.9\% & 1.203                                 & 80.7\%                         & 84.4\%   \\\midrule
                              & itransformer & 88.9\% & \textbf{1.139} & 83.5\%   & 88.3\%   & 89.3\% & 1.154                                 & 82.3\%   & 88.7\%   & 89.3\% & 1.163                                 & 84.0\%   & 88.8\%   & 89.5\% & 1.140                                 & \cellcolor[HTML]{E2EFDA}63.5\% & 87.2\%   \\
                              & leddam       & 88.4\% & 1.125          & 77.9\%   & 87.7\%   & 88.5\% & {\color[HTML]{FF0000} 1.124}          & 75.3\%   & 87.5\%   & 88.3\% & {\color[HTML]{FF0000} \textbf{1.122}} & 78.0\%   & 87.7\%   & 87.2\% & 1.063                                 & \cellcolor[HTML]{C6E0B4}52.8\% & 83.0\%   \\
                              & SOFTS        & 88.9\% & 1.274          & 84.5\%   & 88.0\%   & 90.0\% & 1.308                                 & 85.5\%   & 89.4\%   & 88.6\% & {\color[HTML]{FF0000} \textbf{1.269}} & 84.3\%   & 88.0\%   & 90.9\% & 1.326                                 & 79.8\%                         & 88.7\%   \\
\multirow{-4}{*}{PEMS03}      & ave          & 88.7\% & 1.179          & 82.0\%   & 88.0\%   & 89.3\% & 1.195                                 & 81.0\%   & 88.6\%   & 88.8\% & 1.185                                 & 82.1\%   & 88.2\%   & 89.2\% & {\color[HTML]{FF0000} \textbf{1.176}} & \cellcolor[HTML]{E2EFDA}65.3\% & 86.3\%   \\\midrule
                              & itransformer & 89.4\% & 1.146          & 81.6\%   & 88.9\%   & 90.3\% & 1.197                                 & 82.3\%   & 89.7\%   & 89.1\% & {\color[HTML]{FF0000} \textbf{1.124}} & 81.3\%   & 88.8\%   & 91.9\% & 1.236                                 & \cellcolor[HTML]{E2EFDA}64.2\% & 90.9\%   \\
                              & leddam       & 88.6\% & 1.083          & 83.3\%   & 88.0\%   & 89.3\% & 1.106                                 & 84.3\%   & 88.7\%   & 88.4\% & {\color[HTML]{FF0000} \textbf{1.071}} & 83.7\%   & 87.8\%   & 89.5\% & 1.093                                 & 76.7\%                         & 87.4\%   \\
                              & SOFTS        & 88.8\% & 1.359          & 81.0\%   & 87.9\%   & 89.2\% & 1.366                                 & 82.8\%   & 88.3\%   & 88.8\% & {\color[HTML]{FF0000} \textbf{1.334}} & 80.8\%   & 87.8\%   & 89.4\% & {\color[HTML]{FF0000} 1.354}          & 79.6\%                         & 87.1\%   \\
\multirow{-4}{*}{PEMS04}      & ave          & 88.9\% & 1.196          & 82.0\%   & 88.3\%   & 89.6\% & 1.223                                 & 83.1\%   & 88.9\%   & 88.8\% & {\color[HTML]{FF0000} \textbf{1.176}} & 81.9\%   & 88.1\%   & 90.3\% & 1.227                                 & 73.5\%                         & 88.5\%   \\\midrule
                              & itransformer & 89.9\% & 1.048          & 80.1\%   & 89.7\%   & 90.5\% & {\color[HTML]{FF0000} 1.032}          & 75.2\%   & 90.1\%   & 89.5\% & {\color[HTML]{FF0000} \textbf{1.001}} & 75.9\%   & 89.2\%   & 93.4\% & 1.103                                 & \cellcolor[HTML]{A9D08E}48.2\% & 92.2\%   \\
                              & leddam       & 89.6\% & \textbf{1.005} & 77.9\%   & 89.3\%   & 90.1\% & 1.020                                 & 75.0\%   & 89.8\%   & 89.5\% & 1.008                                 & 77.8\%   & 89.1\%   & 92.0\% & 1.058                                 & \cellcolor[HTML]{A9D08E}49.7\% & 90.7\%   \\
                              & SOFTS        & 89.5\% & \textbf{1.171} & 83.6\%   & 89.1\%   & 90.5\% & 1.205                                 & 84.5\%   & 90.1\%   & 89.2\% & 1.172                                 & 83.2\%   & 88.8\%   & 92.9\% & 1.257                                 & 74.5\%                         & 91.6\%   \\
\multirow{-4}{*}{PEMS07}      & ave          & 89.6\% & 1.075          & 80.5\%   & 89.4\%   & 90.4\% & 1.086                                 & 78.2\%   & 90.0\%   & 89.4\% & {\color[HTML]{FF0000} \textbf{1.060}} & 79.0\%   & 89.0\%   & 92.8\% & 1.139                                 & \cellcolor[HTML]{C6E0B4}57.5\% & 91.5\%   \\\midrule
                              & itransformer & 89.1\% & \textbf{1.094} & 80.3\%   & 88.6\%   & 89.7\% & 1.127                                 & 79.6\%   & 88.8\%   & 89.1\% & 1.103                                 & 80.3\%   & 88.6\%   & 90.3\% & 1.128                                 & \cellcolor[HTML]{E2EFDA}69.4\% & 88.0\%   \\
                              & leddam       & 88.3\% & 1.100          & 76.2\%   & 87.6\%   & 89.0\% & 1.120                                 & 75.0\%   & 87.8\%   & 88.2\% & 1.101                                 & 76.2\%   & 87.5\%   & 88.7\% & {\color[HTML]{FF0000} \textbf{1.090}} & \cellcolor[HTML]{E2EFDA}64.6\% & 85.7\%   \\
                              & SOFTS        & 88.6\% & 1.235          & 76.3\%   & 87.8\%   & 89.6\% & 1.260                                 & 74.3\%   & 88.5\%   & 88.2\% & {\color[HTML]{FF0000} \textbf{1.205}} & 74.8\%   & 87.7\%   & 89.9\% & 1.243                                 & \cellcolor[HTML]{E2EFDA}67.9\% & 86.9\%   \\
\multirow{-4}{*}{PEMS08}      & ave          & 88.7\% & 1.143          & 77.6\%   & 88.0\%   & 89.4\% & 1.169                                 & 76.3\%   & 88.4\%   & 88.5\% & {\color[HTML]{FF0000} 1.137}          & 77.1\%   & 87.9\%   & 89.6\% & 1.154                                 & \cellcolor[HTML]{E2EFDA}67.3\% & 86.9\%   \\\midrule
1st Count                     &              & \multicolumn{4}{c|}{12}                        & \multicolumn{4}{c|}{3}                                                & \multicolumn{4}{c|}{20}                                               & \multicolumn{4}{c}{12}                                                                     \\ \bottomrule
\end{tabular}

\end{table}

Overall, employing LSTM as the error quantile prediction model leeds to modest improvements over the original method in many scenarios. Across our comprehensive evaluation of 48 test cases, it demonstrated particularly competitive performance, achieving optimal results in 20 instances. However, it's important to note that these improvements remain relatively minor in magnitude.

The other two extension methods - the ECI-based interval length adjustment and the adaptive learning rate approach - showed limited effectiveness in enhancing the baseline performance. In fact, these modifications frequently introduced negative effects.
\section{Visualization of confidence intervals}
We plot the predicted confidence intervals based on SOFTS \cite{han2024softs} in different dataset in the following figures. In these figures, the right subfigure in the lower row represent our method. Other subfigures represent TQA-E, ACI, ECI, LPCI and PIC respectively.

\begin{figure}[h]
    \centering
    \includegraphics[width=1.\linewidth]{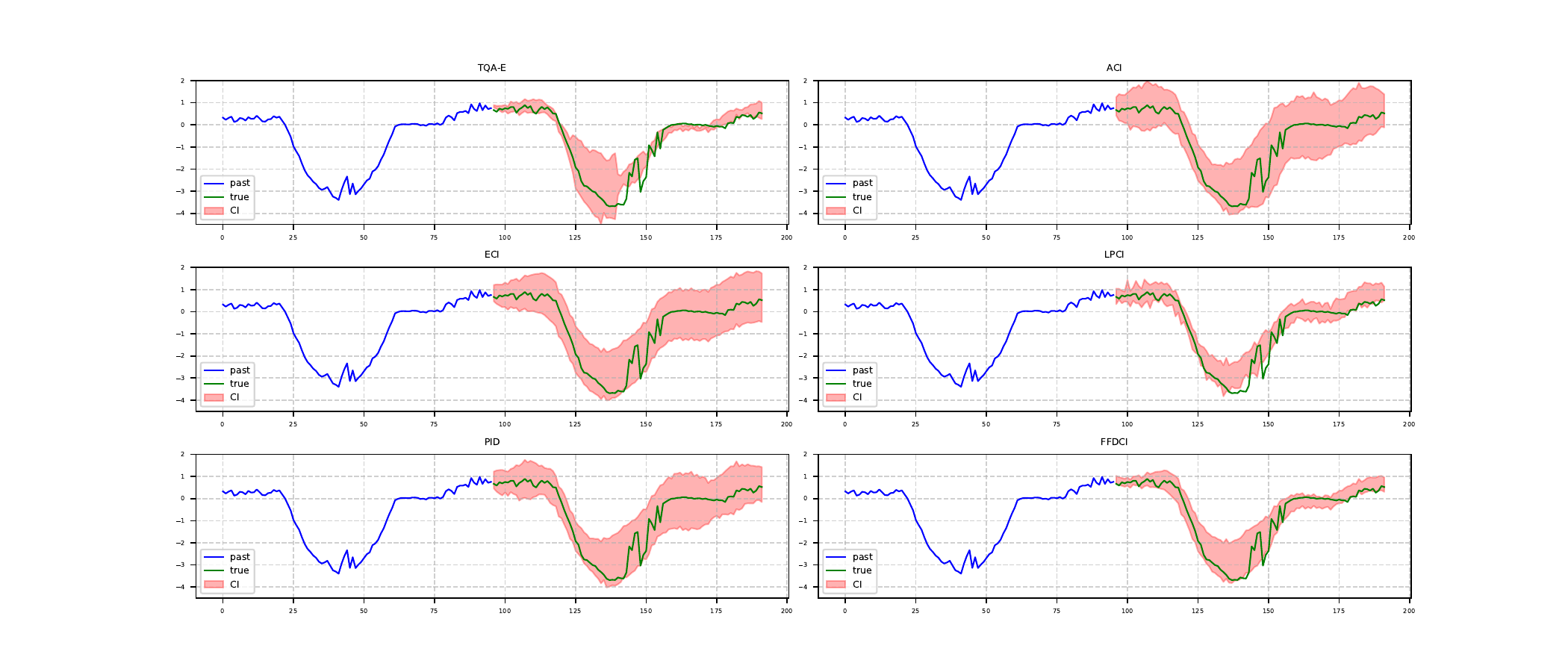}
    \caption{Predicted confidence intervals in ETTm1 dataset}
    \label{fig:et1}
\end{figure}
\vspace{-10pt} 

\begin{figure}[h]
    \centering
    \includegraphics[width=\linewidth]{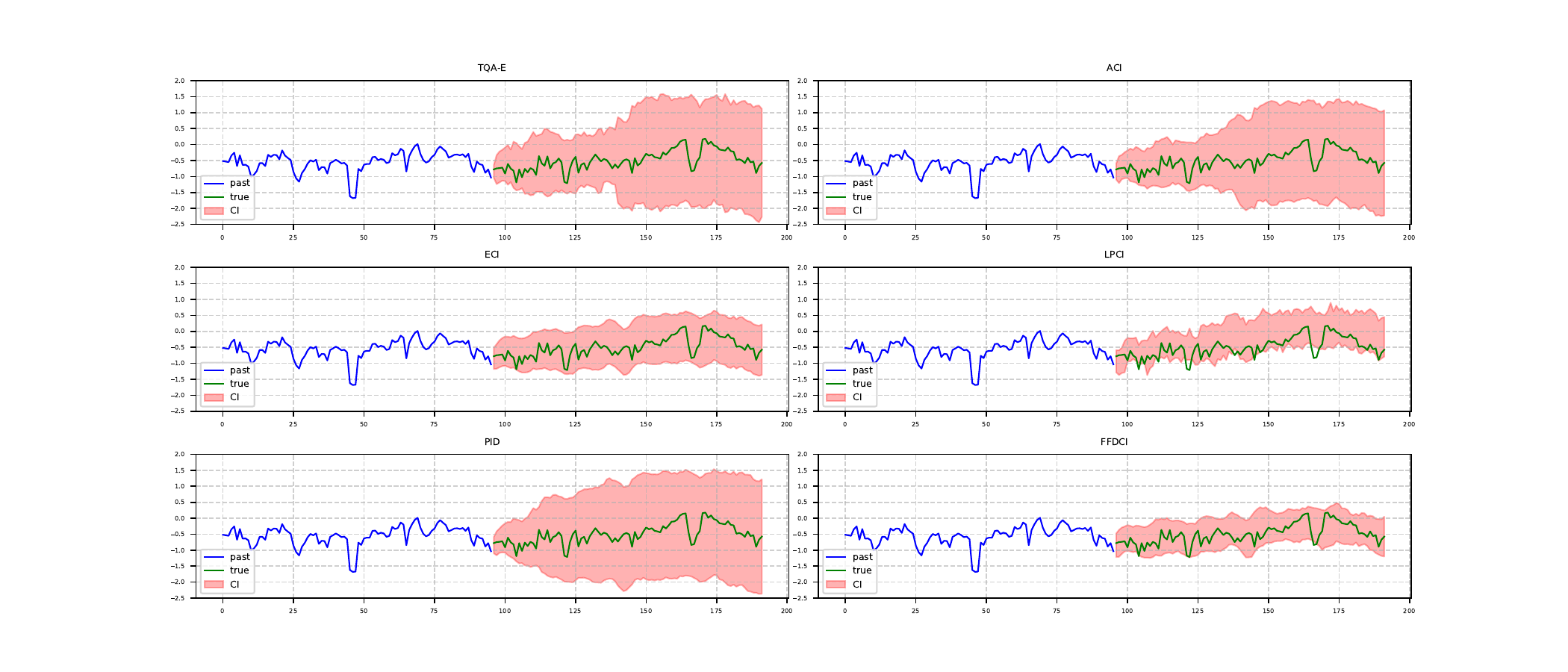}
    \caption{Predicted confidence intervals in ETTm2 dataset}
    \label{fig:et2}
\end{figure}
\vspace{-10pt}

\begin{figure}[h]
    \centering
    \includegraphics[width=\linewidth]{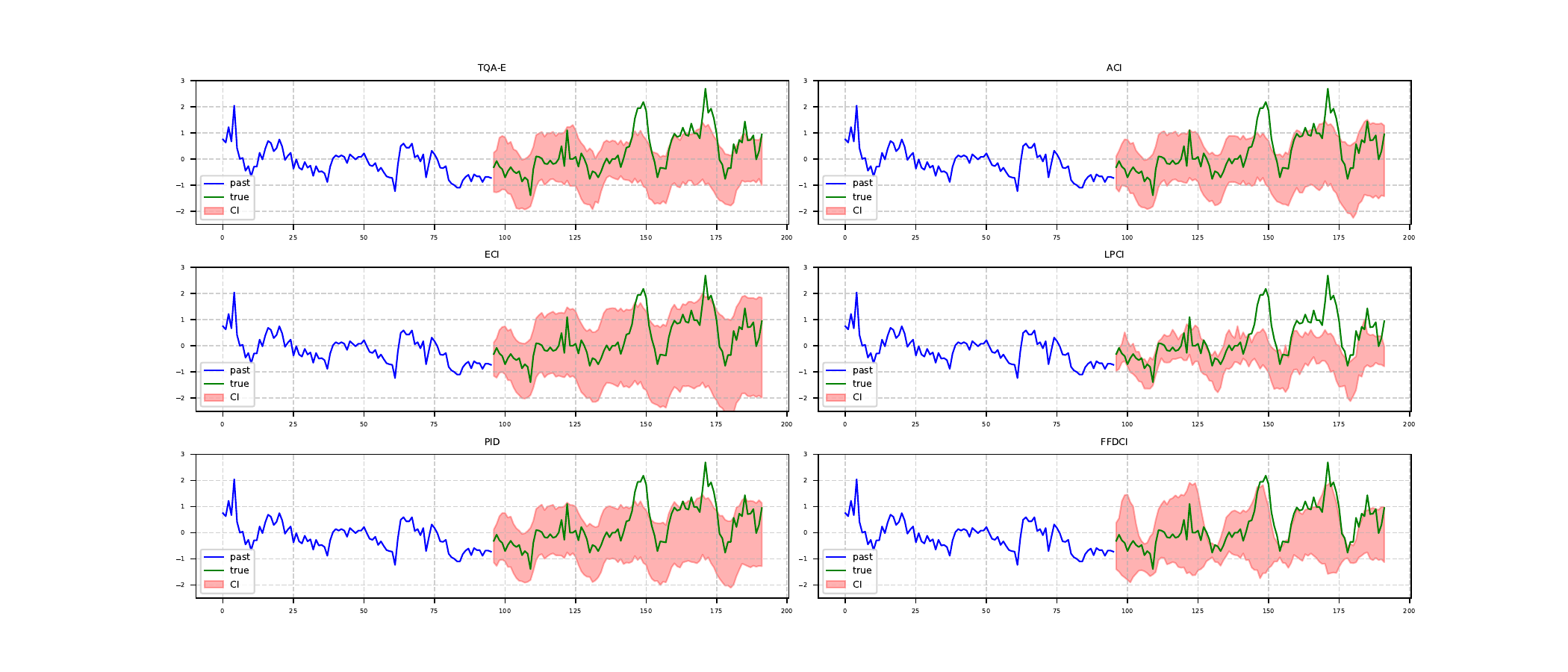}
    \caption{Predicted confidence intervals in ETTh1 dataset}
    \label{fig:et3}
\end{figure}
\vspace{-10pt}

\begin{figure}[h]
    \centering
    \includegraphics[width=\linewidth]{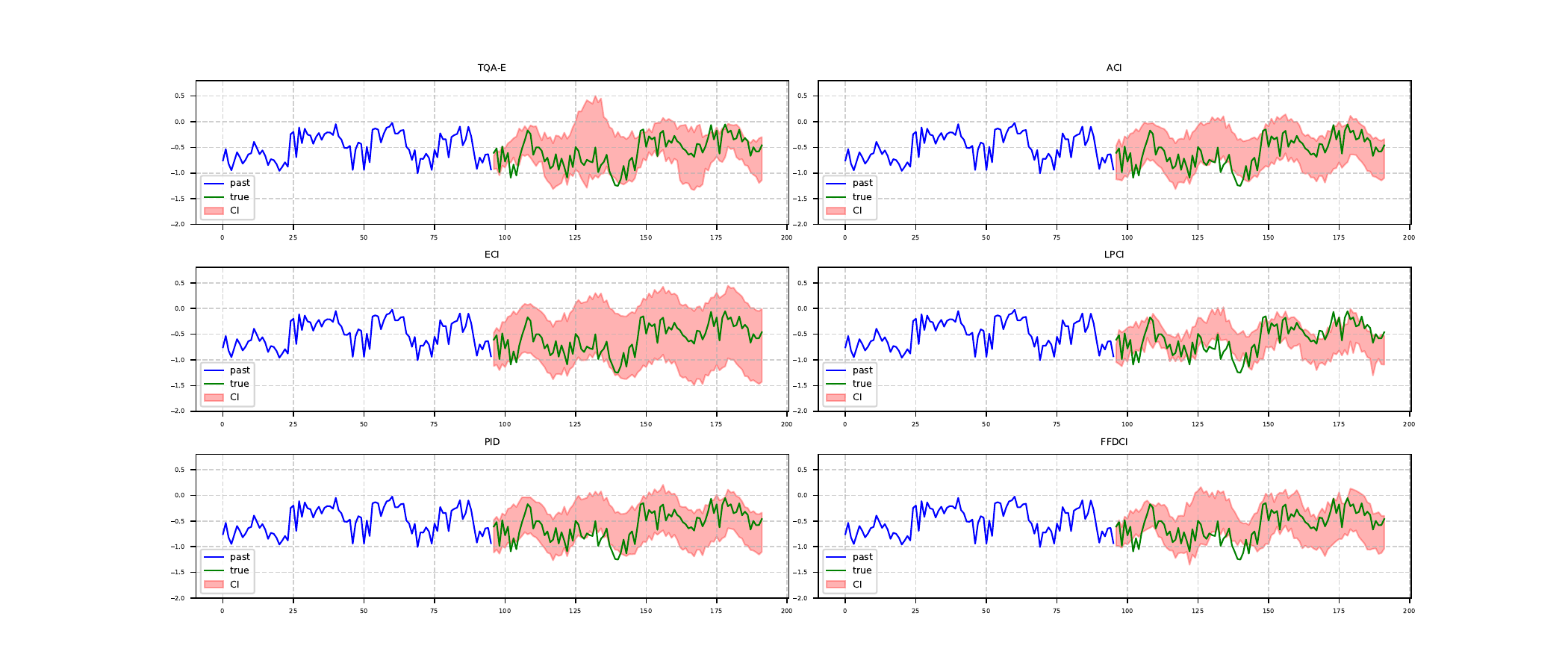}
    \caption{Predicted confidence intervals in ETTh2 dataset}
    \label{fig:et4}
\end{figure}
\begin{figure}
    \centering
    \includegraphics[width=\linewidth]{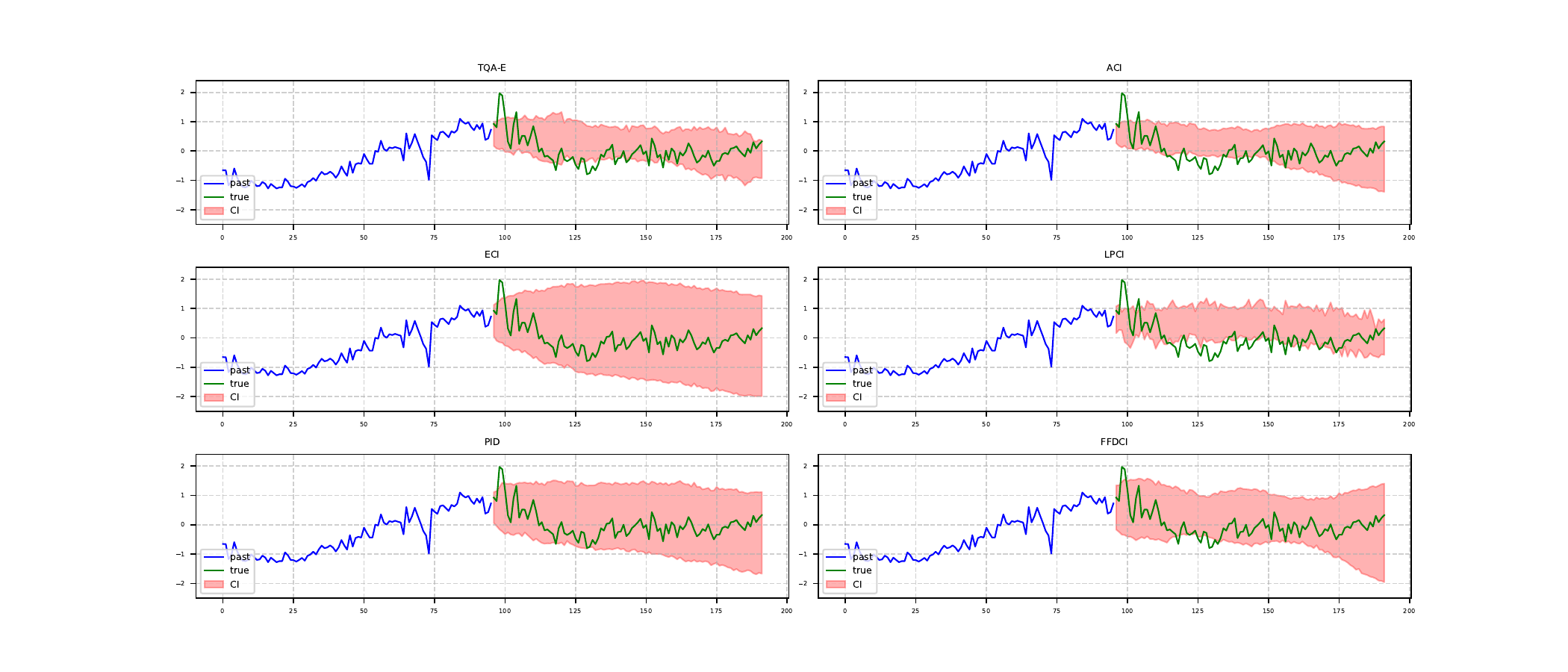}
    \caption{Predicted confidence intervals in weather dataset}
    \label{fig:enter-label}
\end{figure}
\begin{figure}
    \centering
    \includegraphics[width=\linewidth]{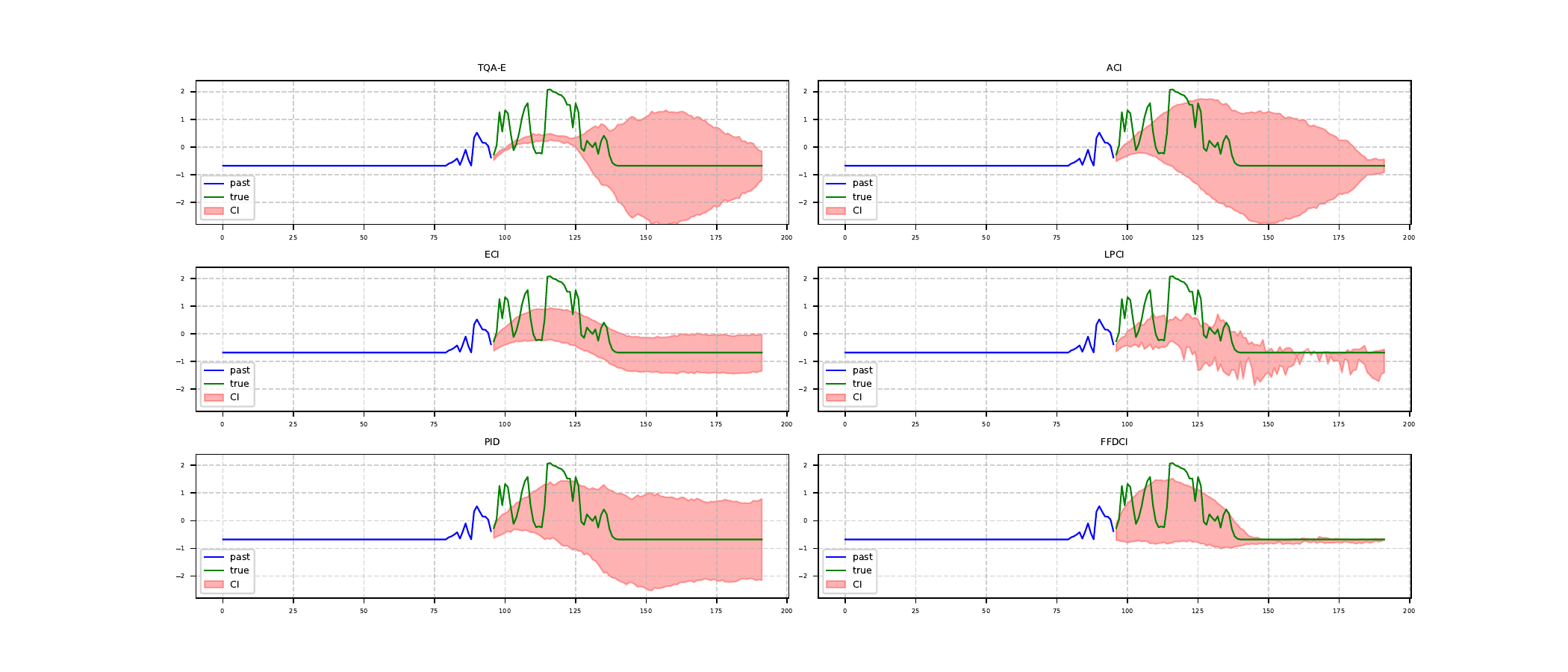}
    \caption{Predicted confidence intervals in solar dataset}
    \label{fig:enter-label}
\end{figure}
\begin{figure}
    \centering
    \includegraphics[width=\linewidth]{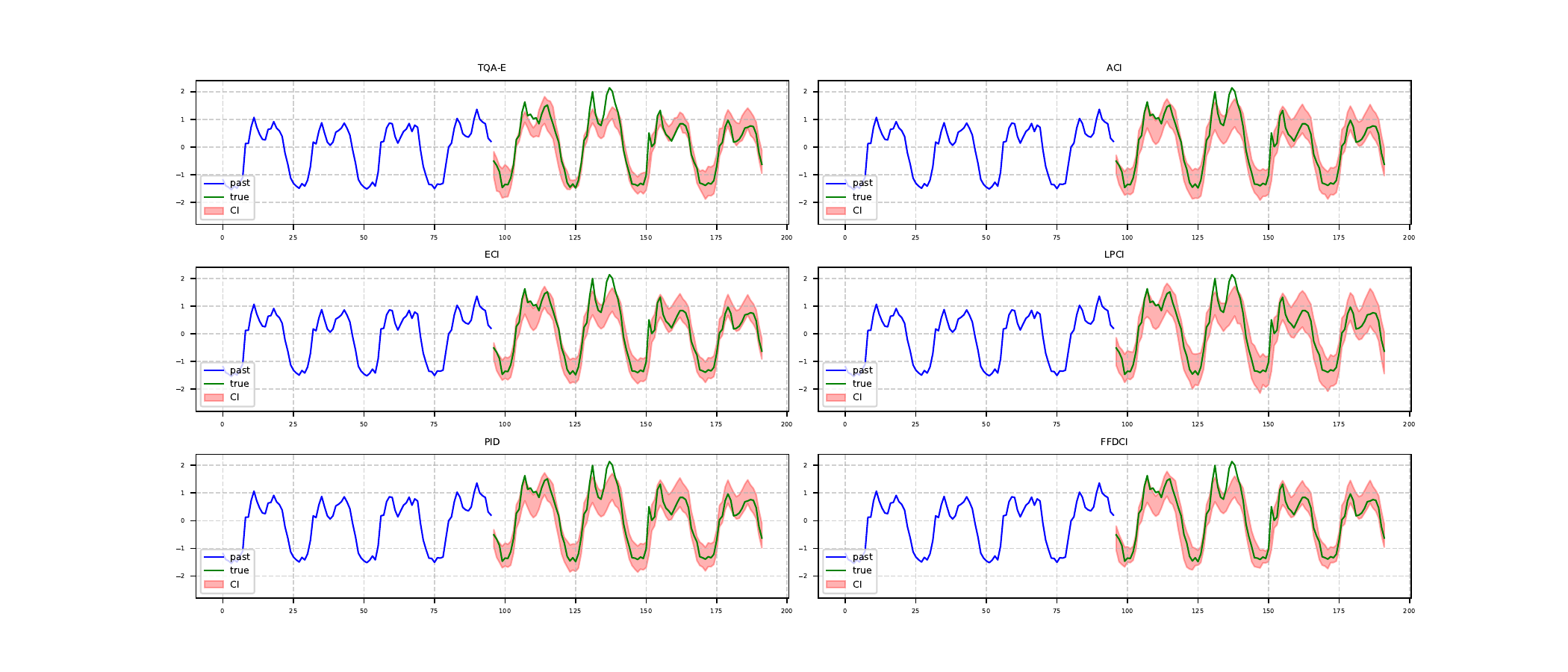}
    \caption{Predicted confidence intervals in electricity dataset}
    \label{fig:enter-label}
\end{figure}
\begin{figure}
    \centering
    \includegraphics[width=\linewidth]{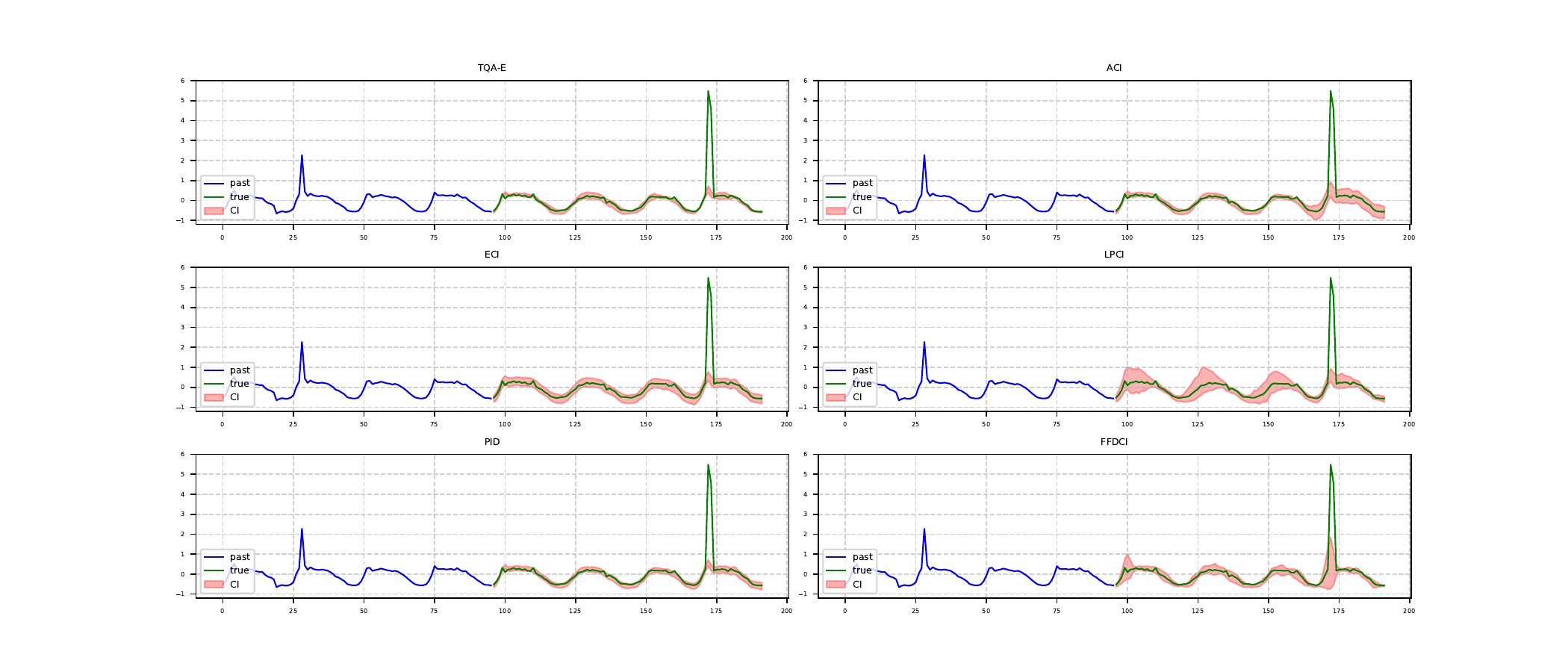}
    \caption{Predicted confidence intervals in traffic dataset}
    \label{fig:enter-label}
\end{figure}
\begin{figure}
    \centering
    \includegraphics[width=\linewidth]{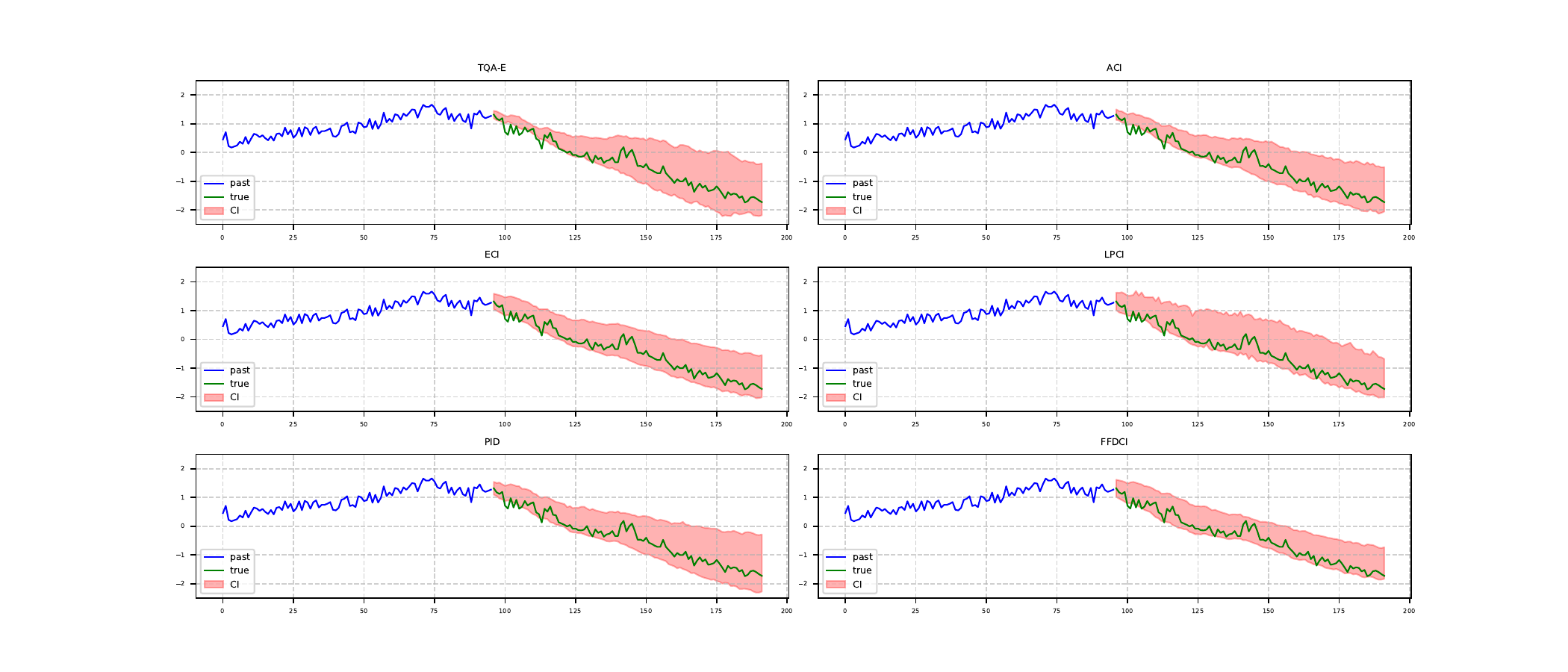}
    \caption{Predicted confidence intervals in PEMS03 dataset}
    \label{fig:enter-label}
\end{figure}
\begin{figure}
    \centering
    \includegraphics[width=\linewidth]{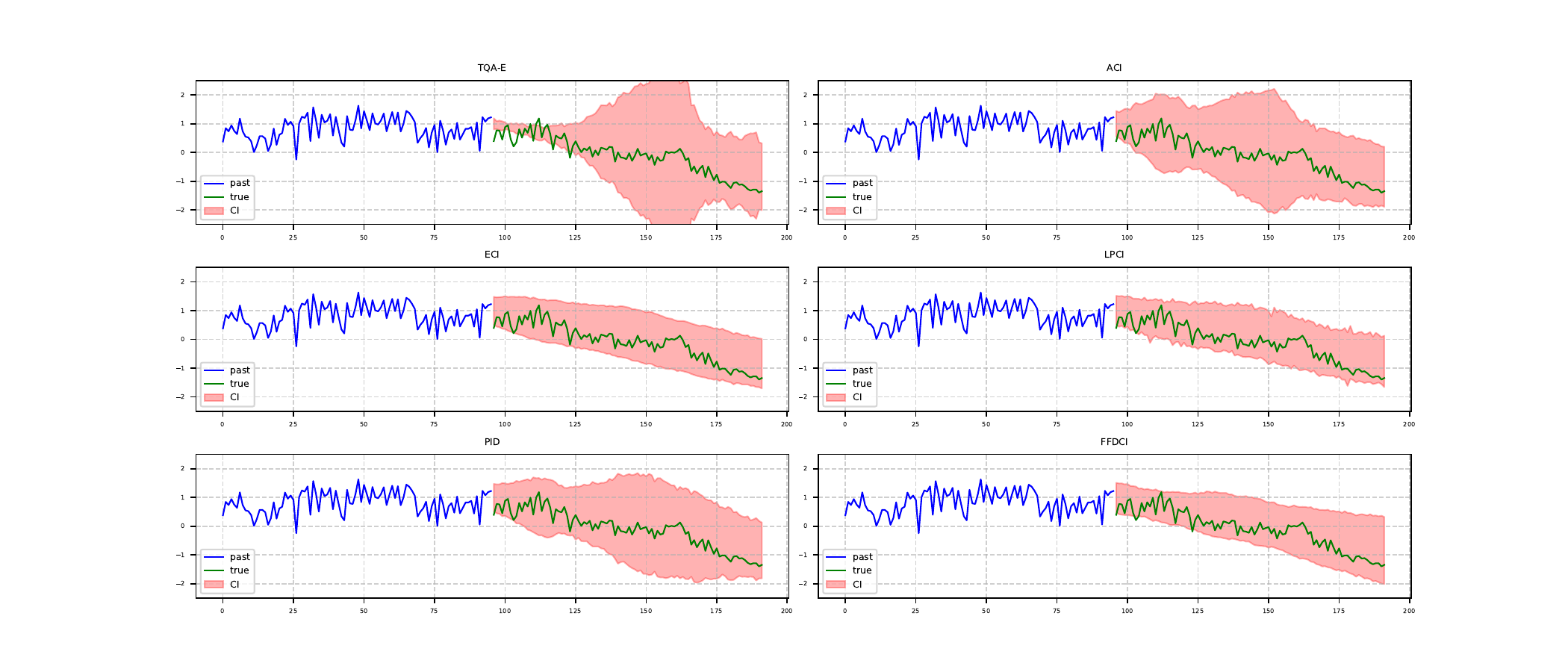}
    \caption{Predicted confidence intervals in PEMS04 dataset}
    \label{fig:enter-label}
\end{figure}
\begin{figure}
    \centering
    \includegraphics[width=\linewidth]{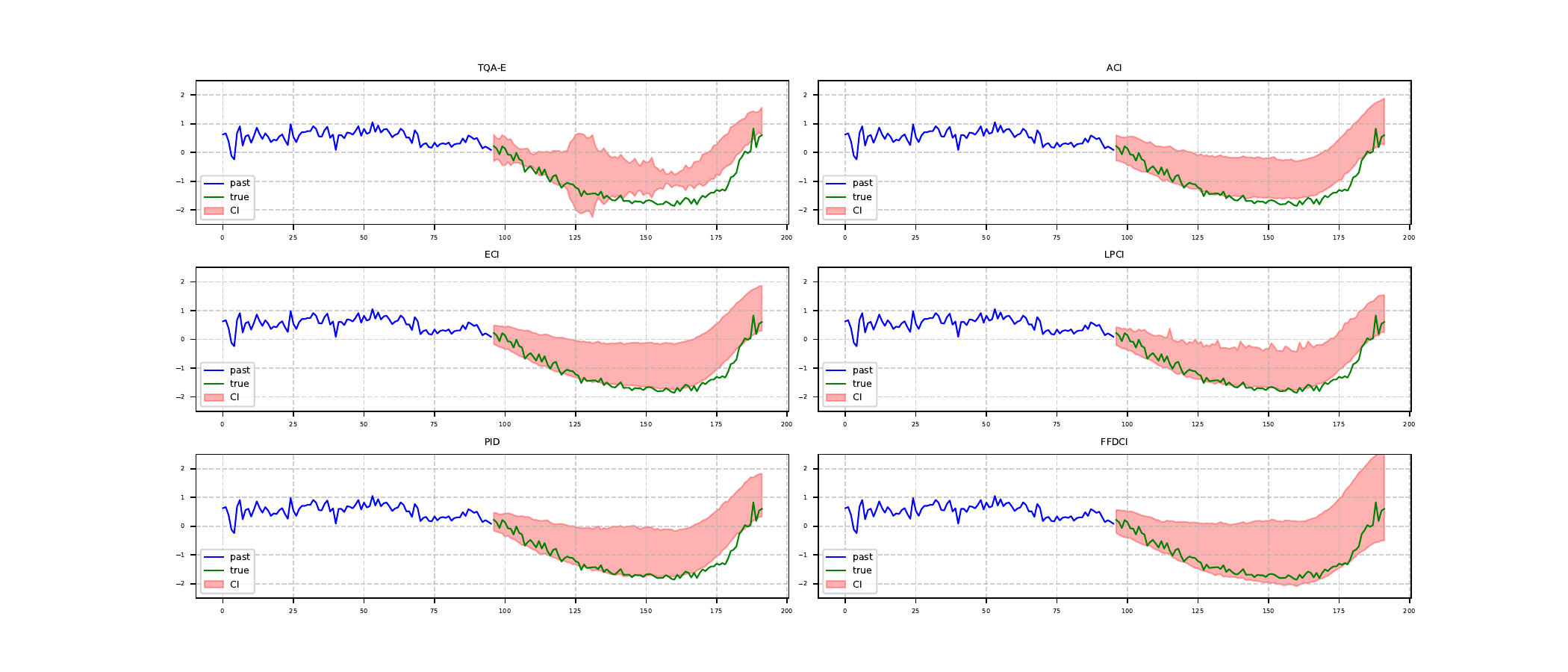}
    \caption{Predicted confidence intervals in PEMS07 dataset}
    \label{fig:enter-label}
\end{figure}
\end{document}